\newtheorem{remark}{Remark}
\newtheorem{theorem}{Theorem}
\newtheorem{lemma}{Lemma}
\title{Communication-Efficient Distributed Training for \\ Collaborative Flat Optima Recovery in Deep Learning}
\author[1]{\href{mailto:<td2249@nyu.edu>?Subject=UAI 2026 - Communication-Efficient Distributed Training for Collaborative Flat Optima Recovery in Deep Learning}{Tolga Dimlioglu}{}}
\author[1]{Anna Choromanska}
\affil[1]{%
    Electrical Engineering Dept.\\
    New York University\\
    NY, USA
}
\begin{document}
\maketitle


\begin{abstract}
We study centralized distributed data parallel training of deep neural networks (DNNs), aiming to improve the trade-off between communication efficiency and model performance of local gradient methods. Motivated by the flat-minima hypothesis, we first introduce a simple sharpness measure, Inverse Mean Valley, and show it strongly correlates with the generalization gap of DNNs. We then incorporate an efficient relaxation of this measure into the distributed objective as a lightweight regularizer that encourages workers to seek wide minima collaboratively. The regularizer exerts a pushing force that counteracts the consensus step pulling the workers together, giving rise to the Distributed Pull-Push Force (DPPF) algorithm. Empirically, DPPF generalizes better than other local gradient methods and synchronous gradient averaging while maintaining communication efficiency. In addition, our loss landscape visualizations confirm the ability of DPPF to locate flatter minima. Theoretically, we show that DPPF drives workers to span flat valleys with valley width governed by push–pull strengths, it yields self-stabilizing dynamics, it obeys generalization guarantees that depend on valley width, and it converges in the non-convex setting.
\end{abstract}

\section{Introduction}
\label{sec:intro}

We consider a distributed data-parallel setup with $M$ workers\footnote{a \textit{worker} refers to a single hardware unit (e.g., GPU)} for training a DNN, with samples assumed to be independent and identically distributed (IID). The goal is to collaboratively optimize a shared model vector $\mathbf{x}$ that minimizes the global training loss, formally defined in Equation~\ref{problem_form:objective}. Here, $f$ is a non-convex objective, and $F_m(x; \xi)$ denotes the stochastic loss observed by worker $m$. Each worker receives independent, unbiased stochastic gradients of its local objective.

\begin{equation}
\min_{x \in \mathbb{R}^d} \frac{1}{M} \sum_{m=1}^M f_m(\mathbf{x}) \; \text{where} \; f_m(x) = \mathbb{E}[F_m(x; \xi)] \label{problem_form:objective}
\end{equation}

\begin{figure*}[htp]
  \centering
  \begin{subfigure}[t]{0.242\textwidth}
    \includegraphics[width=\linewidth]{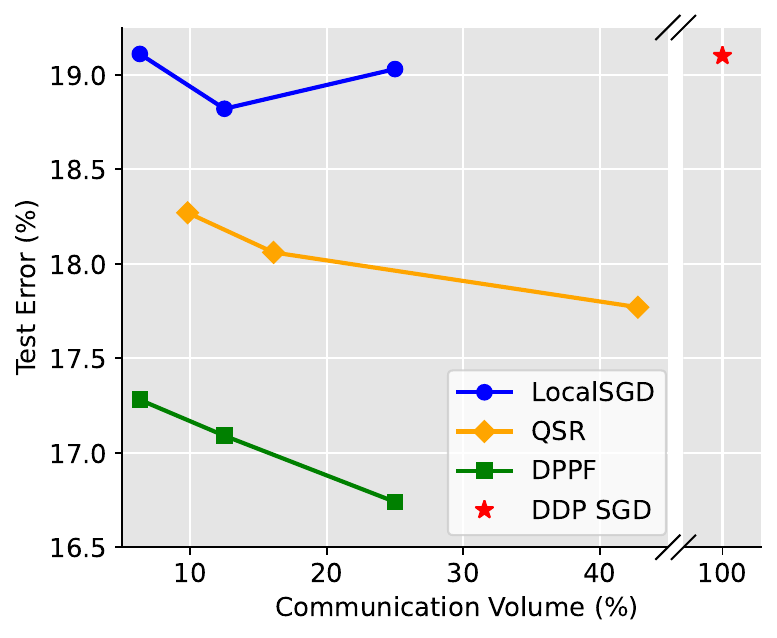}
    \caption{PyNet, CIFAR-100, 8W}
    \label{fig:comm_vol_pyramidnet}
  \end{subfigure}
  \hfill
  \begin{subfigure}[t]{0.245\textwidth}
    \includegraphics[width=\linewidth]{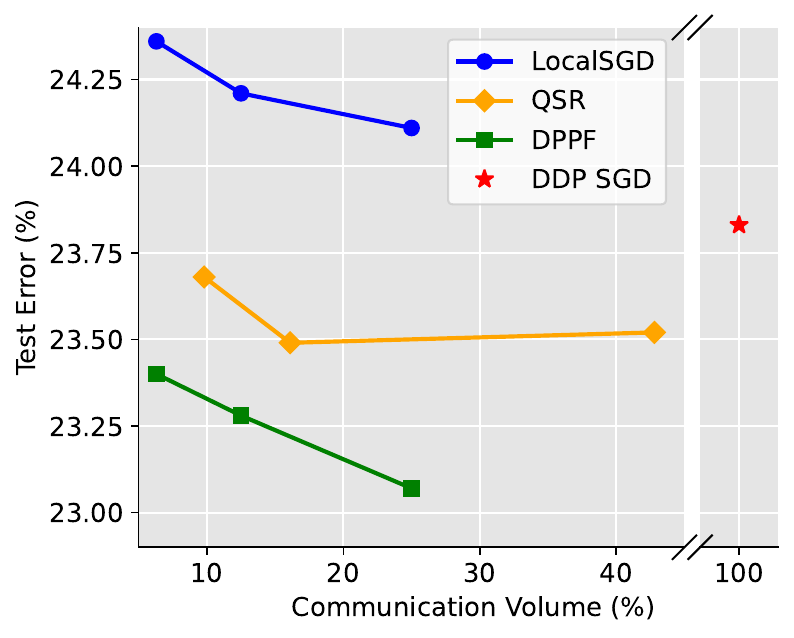}
    \caption{RN-50, ImageNet, 4W}
    \label{fig:comm_vol_resnet50}
  \end{subfigure}
    \hfill
  \begin{subfigure}[t]{0.242\textwidth}
    \includegraphics[width=\linewidth]{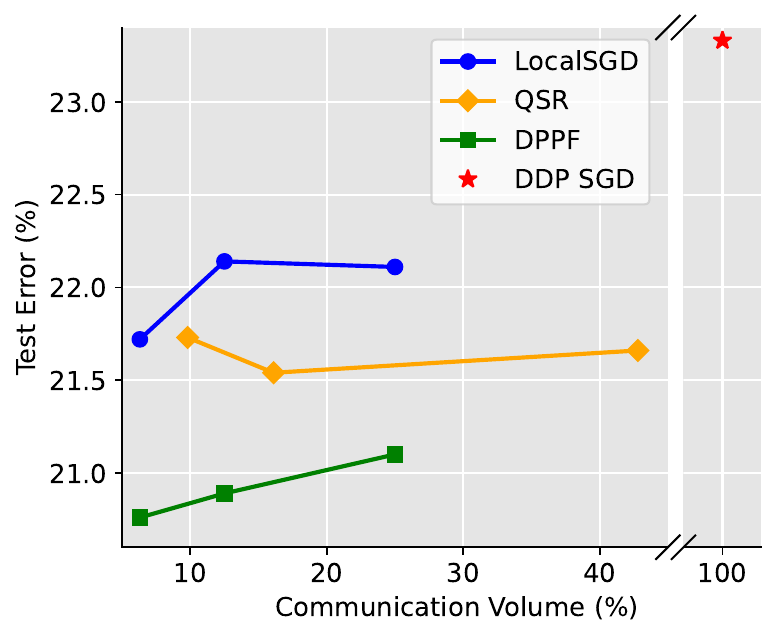}
    \caption{RN-101, ImageNet, 4W}
    \label{fig:comm_vol_resnet101}
  \end{subfigure}
  \hfill
  \begin{subfigure}[t]{0.243\textwidth}
    \includegraphics[width=\linewidth]{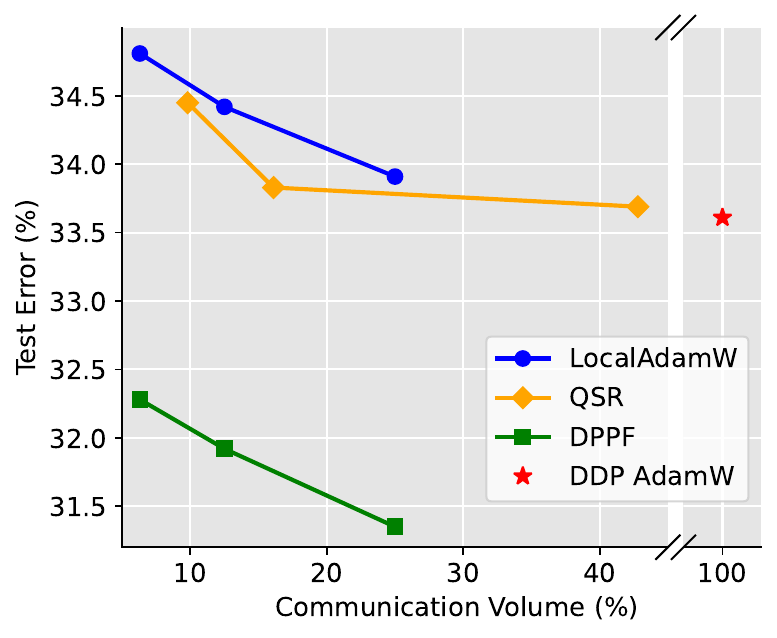}
    \caption{ViT, ImageNet, 4W}
    \label{fig:comm_vol_vit}
  \end{subfigure}
  \hfill
  \caption{Communication volume (ratio between the number of communication rounds till convergence to the total number of local iterations; lower is better) vs. test error (\%) for baseline distributed training methods (LocalSGD, QSR and DDP SGD) and our approach, DPPF. (W: Worker)}
  \label{fig:error_vs_comm_volume}
\end{figure*}

In standard data parallelism, each worker computes gradients on its data shard, which are then aggregated (typically via averaging) and applied synchronously across devices~\citep{mcdonald2009efficient, pytorch_distributed}. Despite its effectiveness, this method introduces a significant communication bottleneck~\citep{harlap2018pipedream} due to frequent synchronization. Alternative strategies mitigate this by allowing workers to perform several local gradient steps independently before periodic synchronization, either through hard resets~\citep{stich2019local} or soft pulling~\citep{zhang2015deep}. However, such approaches often yield inferior performance compared to fully synchronous gradient averaging~\citep{yu2019parallel, ortiz2021trade}.

Our goal in this work is to develop a mechanism that enables communication-efficient training methods to match/exceed the performance of synchronous gradient averaging, without incurring significant computational overhead. To this end, we revisit the flatness hypothesis in the literature, which suggests that models with better generalization tend to converge to flatter regions of the loss landscape~\citep{keskar2016large}. Our key contributions can be listed as follows:

\begin{itemize}[wide]
    \item We introduce a new sharpness measure, Inverse Mean Valley (Inv. MV), which shows strong correlation with the generalization gap in DNNs trained using communication-efficient methods and outperforms existing sharpness metrics in comparative evaluations.
    \item We propose a lightweight relaxation of the Inv. MV measure that can be efficiently integrated into the training objective, inducing a push force that counteracts the periodic consensus steps in communication-efficient methods. This gives rise to our algorithm, Distributed Pull-Push Force (DPPF), where the push component keeps workers apart, preventing collapse, avoiding convergence to narrow valleys, and positioning them near the boundaries of wide, flat regions in the loss landscape.
    \item We theoretically characterize the interplay between the pull and push forces in DPPF and show how they govern the final distance between the workers and their average. We also provide generalization guarantees tied to this distance.
    
    \item We experimentally validate our approach on standard benchmark datasets and architectures, conduct extensive ablation studies to analyze its underlying mechanisms, and confirm its ability to recover wide minima through loss landscape visualizations.
\end{itemize}

\textbf{Motivating Plot} Figure~\ref{fig:error_vs_comm_volume} highlights the problem with existing baselines that struggle to sustain good performance and remain communication efficient at the same time. This challenge motivates our work. Our proposed method, allows a better tradeoff between communication and performance.

\section{Related Work}\label{sec:RW}

\textbf{Data Parallel Training of DNNs} A widely adopted approach in data-parallel training requires that each worker computes a stochastic gradient on its local data, aggregates these gradients via averaging, and updates the model using the aggregated gradient. This scheme, variously called Parallel Mini‑Batch~\citep{dist_mini_batch}, All‑Reduce~\citep{lin2018don}, or DistributedDataParallel (DDP)~\citep{pytorch_distributed}, is communication‑intensive because gradients must be exchanged every iteration \citep{harlap2018pipedream}. To reduce communication costs, local‑update schemes let each worker take several SGD steps before a global average. LocalSGD \citep{stich2019local,zhang2016parallel, zhou2018convergence, lian2015asynchronous} exemplifies this idea and is also employed in large‑scale DNN training \citep{su2015experiments, chen2016scalable}. Although it enables communication efficiency, performance degrades when either the communication interval or worker count grows \citep{stich2019local, yu2019parallel, ortiz2021trade}; adding a global momentum term partially offsets this loss \citep{wangslowmo}. Another way to improve model performance while training with longer communication periods was explored in the past work. Theory shows that the gradient noise from local updates can boost generalization: LocalSGD outperforms DDP when the learning rate is small and training is long \citep{whyandwhenlocalsgd}. Extending this analysis, the quadratic synchronization rule (QSR) adaptively updates the communication period inversely proportional to the learning rate, yielding improvement in model generalization while being communication-efficient \citep{guquadratic}. Additionally, it has been shown that gossip communication in the decentralized setting can also improve test accuracy, contradicting the notion that decentralization hurts generalization \citep{zhu2023decentralized}. Since full consensus can hinder local exploration, several soft-consensus methods have been proposed to balance synchronization and exploration. Elastic Averaging SGD (EASGD) softly pulls workers to a moving‑average center to preserve exploration \citep{zhang2015deep}. Leader SGD (LSGD) pulls all workers toward the one with the lowest loss to accelerate convergence \citep{teng2019leader}, while GRAWA takes a flatness-aware approach by weighting workers inversely with their gradient norms \citep{dimlioglu2024grawa}.

\textbf{Flatness-Aware Optimizers} The concept of flat minima has motivated extensive research on the geometry of DNN loss landscapes~\citep{hochreiter1994simplifying, goodfellow2014qualitatively, dauphin2014identifying, baldassi2015subdominant, li2018visualizing} and its connection to generalization~\citep{keskar2016large, jastrzkebski2017three, andriushchenko2023modern}. While many studies associate sharpness with generalization, relatively few optimization methods are explicitly designed to seek flat minima. Some examples include Entropy-SGD~\citep{chaudhari2019entropy}, which smooths the loss surface via local entropy, and Low-Pass Filtering SGD~\citep{bisla2022low}, which optimizes a Gaussian-smoothed objective function. A widely popularized method is Sharpness-Aware Minimization (SAM)~\citep{SAM,kwon2021asam, zhuang2022surrogate, li2024enhancing}, which seeks parameters that lie in neighborhoods having uniformly low loss via a min-max optimization.

\textbf{Note:} Review of additional related works can be found in Section~\ref{appendix:add_related_works} of the Appendix.

\section{Preliminary}
The data-parallel training procedure to minimize the objective presented in Equation~\ref{problem_form:objective} is detailed in Algorithm~\ref{alg:generic}. The algorithm initializes the model randomly and distributes it to all workers, then partitions the dataset into non-overlapping shards. Each worker trains its local model (e.g., with SGD), and synchronizes every $\tau$ iterations by computing a consensus variable $x_C$—typically the average model $x_A$, but potentially any combination of worker parameters. Each worker then updates its parameters by interpolating between $x_m$ and $x_C$. When $x_C = x_A$, $\tau = 1$, and $\alpha = 1$, the method reduces to standard gradient averaging (DDP). Setting $\tau > 1$ yields LocalSGD, and using $\alpha < 1$ implements soft consensus, where workers are only partially pulled toward $x_C$. Finally, the algorithm returns the averaged parameters. The DPPF-specific pushing mechanism shown in the pseudo-code is explained later.

\begin{algorithm}[h]
\caption{Data Parallel Training}
\label{alg:generic}
\small
\Input{Pull $\alpha \in (0,1]$, comm. period $\tau$, learning rate $\eta$}
\textbf{Initialize} parameters $x_1, ..., x_M$, $t_1=\ldots=t_M=0$, and worker exclusive data shards $\Psi_1,...,\Psi_M$

\For{each worker $m$ in parallel}{
  \While{not converged}{
    Draw batch $\xi_m \in \Psi_m$ \\
    $x_m \leftarrow x_m - \eta \nabla f(x_m; \xi_m)$ \\
    $t_m \leftarrow t_m + 1$ \\
    \If{$t_m \bmod \tau = 0$}{
      Obtain $x_C$ via LocalSGD, EASGD etc. \\
      $x_m \leftarrow (1-\alpha)x_m + \alpha x_C$ \\
      \If{push}{
        $x_m \leftarrow x_m + \lambda \frac{x_m - x_A}{\|x_m - x_A\|}$ \\
      }
    }
  }
  $x_A = \frac{1}{M}\sum_{i=1}^M x_i$
}
\textbf{return} $x_A$
\end{algorithm}

\section{Mean Valley Measure}
\label{sec:MV_measure}
In this section, we introduce the \textit{Mean Valley} (MV) measure, a simple yet effective metric for quantifying the flatness of minima \emph{after full convergence} in distributed training methods with independent local gradient steps. At a high level, MV estimates the average diameter of the valley surrounding the converged worker parameters. The procedure for computing MV begins with obtaining the average model $x_A = \frac{1}{M}\sum_{m=1}^M x_{m}$ and evaluating the training loss at that point $f( x_A, \mathcal{D}_{train})$. For each worker $m$, the unit vector $\delta_m$ pointing from $x_A$ to $x_m$ is then calculated. A line search is performed along this direction to find the point $x_m^b$ where the average loss increases by a factor of $\kappa$ ($\kappa > 1$), identifying $x_m^b$ as the valley boundary in that direction. Mathematically, this procedure can be written as follows:
\begin{align}
    \delta_m = \frac{x_m - x_A}{||x_m - x_A||_2}, \; & \; \mathcal{L}_A = f \left ( x_A, \mathcal{D}_{train} \right )\\
    x_{m}^b = x_A + \beta_m  \delta_m  \; \; \;\text{s.t.} & \; \; \; f \left ( x_{m}^b, \mathcal{D}_{train} \right ) \approx \kappa \mathcal{L}_A 
\end{align}

Notice that $\beta_m$ is the distance we must move from $x_A$ along $\delta_m$ to reach the $\kappa$-loss contour, in particular, $ \beta_m = \| x_{m}^b - x_A \|_2$.  Finally, the MV measure outputs the average distance between each boundary point $x_m^b$ and the average $x_A$: $\text{MV} = \frac{1}{M} \sum_{m=1}^M || x_{m}^b - x_A||_2$. Thus, for a fixed $\kappa > 1$, a larger MV indicates lower average directional curvature.

\subsection{Comparison with Other Measures}
Although MV is inherently a flatness measure, for consistent comparison with existing sharpness metrics, we define the \textit{Inverse Mean Valley (Inv. MV)} by taking the additive inverse of MV so that the larger values indicate sharper minima. We compare Inv. MV with seven sharpness measures drawn from the literature. To quantify the correlation between sharpness measures and the generalization gap (validation$-$train error (\%)), we use the Kendall rank correlation coefficient. We conduct this analysis by training ResNet-style~\citep{resnet} models on the CIFAR-10~\citep{cifar10} dataset under two settings with and without augmentation: (1) single-worker training (i.e., without distributed training), and (2) distributed training using the parameter-sharing method EASGD with $4$ workers. To generate minima with different geometries, we vary several hyperparameters, model capacity, and repeat each configuration across three random seeds (see Section~\ref{appendix:subsec:gen_vs_sharpness} in the Appendix for more details and sensitivity analysis of Inv. MV on $\kappa$).

\begin{table}[h]
\centering
\caption{Kendall rank coefficients calculated between the generalization gap and the sharpness measures.}
\label{table:sharpness_indicators}
\resizebox{1.00\columnwidth}{!}{ 
\begin{tabular}{c|cc|cc}
\textbf{}         & \multicolumn{2}{c|}{\textbf{Single Worker}} & \multicolumn{2}{c}{\textbf{EASGD}} \\ \cline{2-5} 
\textbf{Measures} & \textbf{w/ Aug.}  & \textbf{w/o Aug.} & \textbf{w/ Aug.}   & \textbf{w/o  Aug.} \\ \hline
Shannon Ent.      & 0.695          & 0.575            & -0.213          & -0.161           \\
$\epsilon$-sharpness       & 0.784          & \textbf{0.799}   & 0.254           & 0.472            \\
Fisher-Rao        & \textbf{\textit{0.799}} & 0.735            & \textbf{0.665}  & 0.101            \\
LPF               & 0.730          & 0.738            & 0.074           & \textbf{0.553}   \\
$\lambda_{\max}(H)$       & 0.773          & \textbf{0.799}   & 0.444           & 0.166            \\
Trace($H$)       & \textbf{0.817} & \textbf{\textit{0.792}}           & 0.484           & 0.188            \\
$||H||_{\text{frob}}$      & 0.796          & 0.787   & 0.510           & 0.170            \\
Inv. MV ($\kappa=2$)    & NA             & NA               & \textbf{\textit{0.616}}  & \textbf{\textit{0.485}}   
\end{tabular}
}
\end{table}

We present the correlation results in Table~\ref{table:sharpness_indicators}. As can be seen, the sharpness measures that have the strongest correlation with the generalization in the single-worker setting are no longer the best performers in the distributed setting.  Similarly, a notable finding is that the strength of the correlations varies between scenarios with and without augmentations, e.g., the Fisher-Rao metric is well-correlated with the generalization gap for the scenario with augmentations, but without augmentations, it is one of the worst metrics. The table finally demonstrates a strong correlation of Inv. MV with the generalization gap (second-best metric), as well as it exhibits consistent behavior in settings with and without augmentations as opposed to the rest of the measures.

\section{Distributed Pull-Push Force} 
\label{sec:dppf}
Motivated by the strong correlation between the Inv. MV measure and the generalization gap in DNNs trained with local-gradient methods, we propose incorporating it as a regularization term into the training objective to encourage collaborative exploration of wide minima. However, like other measures, Inv. MV suffers from computational inefficiency. Calculating these measures typically requires either loss landscape sampling (as in $\epsilon$-sharpness, LPF, and Inv. MV) or computing second-order information such as the Hessian. In particular, Inv. MV involves locating boundary points via line search along worker directions, making it impractical for time-constrained training scenarios.

To address this limitation, we propose a relaxation of Inv. MV suitable for efficient, lightweight incorporation into the training objective. Instead of conducting the exhaustive boundary-point search, we directly treat current worker parameters as approximate boundary points, thus circumventing the costly line-search procedure altogether. This corresponds to adding the following regularization term to the objective in Equation~\ref{problem_form:objective}, scaled by the regularization coefficient $\lambda_{r}$: $\lambda_r \mathcal{R} = -\frac{\lambda_r}{M} \sum_{i=1}^M \| x_i - x_A \|_2$, where $x_A$ denotes the average of the worker parameters. Essentially, this is equivalent to increasing the consensus distance, contrary to the typical practice in distributed training  \citep{kong2021consensus}, which aims to reduce it. We then derive the update that arises due to the presence of $\mathcal{R}$ (the full derivation steps can be found in Section~\ref{appendix:subsec:mv_update_rule}):

\begin{align}
-\lambda_r \frac{\partial \mathcal{R}}{\partial x_m}
&\overset{(a)}{=} \frac{\lambda_r}{M^2} \left( 
    M\frac{x_m - x_A}{\|x_m - x_A\|} 
    - \sum_{j = 1}^{M} \frac{x_j - x_A}{\|x_j - x_A\|} 
\right) \notag \\ 
&\overset{(b)}{\approx} 
\lambda \frac{x_m - x_A}{\|x_m - x_A\|}. \label{eq:main:dppf_update}
\end{align}

Observe that in Equation~\ref{eq:main:dppf_update}, the expression on the right-hand side (RHS) of $(a)$ contains two terms within parentheses. The first term exerts a force pushing the worker away from the average point $x_A$. The second term is the negative of the averaged normalized worker directions, which approaches zero when the workers are symmetrically distributed around $x_A$. Specifically, under the assumption that each normalized vector $\frac{x_j - x_A}{\|x_j - x_A\|}$ is uniformly distributed on the unit sphere in a $d$-dimensional space, its expectation is exactly zero. Thus, we omit this term in practice. Furthermore, absorbing the constant factor $M$ in the denominator into $\lambda_r$ ($\lambda = \frac{\lambda_r}{M}$) simplifies the expression, yielding the approximation on the RHS of $(b)$ that exerts a unit-normed pushing force directly scaled with regularization coefficient $\lambda$.

Recall that the distributed consensus step involves a pull force, implemented as the convex combination of the consensus variable $x_C$ and worker parameters $x_m$: $x_m \leftarrow (1-\alpha) x_m + \alpha x_C$. When our regularization term is active, it introduces a push force that opposes this pull step. Hence, we refer to our method as \textit{Distributed Pull-Push Force} (DPPF). The push update is captured in Algorithm~\ref{alg:generic}. Notice that when $x_C = x_A$, the pull and push updates elegantly combine into a single, concise expression presented in Equation~\ref{eq:single_line}. 

\begin{equation}
\label{eq:single_line}
x_m \leftarrow x_m + (x_A - x_m) \left ( \alpha - \frac{\lambda}{\| x_m - x_A \|} \right )
\end{equation}

The term within parentheses on the right-hand side explicitly captures the interplay between the pull $(\alpha)$ and push $(\lambda)$ forces. This combined formulation efficiently applies both pull and push updates in a single step.

\section{Theoretical Analysis}
\label{sec:thm}

In this section, we analyze how DPPF’s pull and push forces determine the final distance between worker parameters and the global average, and we derive generalization guarantees tied to this distance. Consider $M$ workers, indexed by $m\in\{1,\dots,M\}$, running DPPF with communication period $\tau$ to minimize function $f$ collaboratively, i.e. \ref{problem_form:objective}. Let $x_{m,k}^+$ denote worker $m$’s parameters immediately after communication round $k$. We define the post-update average ($x_{A,k}^+$) and the corresponding gap vector $(\Delta_{m,k}^+)$ as
\begin{equation*}
x_{A,k}^+ := \frac{1}{M}\sum_{j=1}^M x_{j,k}^+,
\qquad
\Delta_{m,k}^+ := x_{A,k}^+ - x_{m,k}^+ .
\end{equation*}
Within each communication round $k$, workers perform $\tau$ local steps. We index these local iterations by $t\in\{1,\dots,\tau\}$ and denote as $x_{m,k}^t$ worker $m$’s parameters at local step $t$ after communication round $k$. We assume a homogeneous IID data regime: each worker draws mini-batches independently from the same distribution $\mathcal{D}$. Accordingly, the stochastic gradients $g^t_{m,k}$ are unbiased with bounded variance:

\begin{itemize}
    \item \( \mathbb{E}\bigl[ g_{m,k}^t\bigr | x_{m,k}^t] = \nabla f(x_{m,k}^t) \;\; \forall m,k,t \)
    \item \( \mathbb{E}\bigl [ \|g_{m,k}^t - \nabla f(x_{m,k}^t)\bigr\|^2 |  x_{m,k}^t] \le \sigma_0^2 \;\; \forall m,k,t \)
\end{itemize}


We now begin by showing how the tug‑of‑war between DPPF’s pull ($\alpha$) and push ($\lambda$) forces determines the asymptotic \emph{valley width}\footnote{We refer to a worker's distance to $x_A$ as valley width since DPPF treats workers as valley boundaries.}, defined as the Euclidean distance between each worker and the average, i.e. $\|\Delta_{m,k}^+\|$.  Theorem~\ref{main:thm:valley_width} proves that, under mild assumptions, this width concentrates around the ratio $\lambda/\alpha$ when the learning rate decays and the number of workers is sufficiently large.

\begin{theorem}[\textbf{DPPF: Final Valley Width}]
\label{main:thm:valley_width}
Consider $M$ workers running DPPF with pull strength $\alpha$, push strength
$\lambda$, communication period $\tau$, and local step size $\eta$. Let
$x_{m,k}^t$ denote worker $m$’s parameters at local iteration $t$ within round
$k$, and define the post-update gap $\Delta_{m,k}^+ := x_{A,k}^+ - x_{m,k}^+$.
Assume unbiased stochastic gradients with bounded variance:
\(
\mathbb{E}[g_{m,k}^t] = \nabla f(x_{m,k}^t)
\)
and
\(
\mathbb{E}\|g_{m,k}^t - \nabla f(x_{m,k}^t)\|^2 \le \sigma_0^2
\)
for all $m,k,t$. Then
\[
\lim_{k\to\infty}\mathbb{E}\bigl\|\Delta_{m,k}^{+}\bigr\|
   = \frac{\lambda}{\alpha} + \mathcal{O}\!\bigl(\eta\sigma_0 + M^{-1/2}\bigr),
\]
where the expectation is over the stochastic gradient noise. In particular,
letting $\eta\to0$ and $M\to\infty$ yields
\(
\lim_{k\to\infty}\mathbb{E}\|\Delta_{m,k}^{+}\| = {\lambda}/{\alpha}.
\)
\end{theorem}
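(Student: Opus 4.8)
The plan is to reduce the vector dynamics to a scalar affine recursion for the expected gap $\rho_k := \mathbb{E}\,r_{m,k}$, where $r_{m,k}:=\|x_{m,k}-x_{A,k}\|$, and then solve that recursion. First I would write the combined pull--push update in the form of Equation~\eqref{eq:single_line}, i.e. $x_{m,k}^{+} = x_{A,k} + (x_{m,k}-x_{A,k})\bigl[(1-\alpha)+\lambda/r_{m,k}\bigr]$, and note the exact identity $\|\Delta_{m,k}^{+}\| = \|x_{A,k}-x_{m,k}^{+}\| = (1-\alpha)\,r_{m,k} + \lambda$, valid because the bracket is nonnegative ($\alpha\le1$, $\lambda,r_{m,k}>0$). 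Since $\|\Delta_{m,k}^{+}\|$ is a deterministic affine function of $r_{m,k}$, taking expectations gives $\mathbb{E}\|\Delta_{m,k}^{+}\| = (1-\alpha)\rho_k + \lambda$ with no Jensen loss, so everything reduces to controlling $\rho_k$.

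Next I would propagate $\rho_k$ across one communication round. Two perturbations separate $r_{m,k+1}$ from $(1-\alpha)r_{m,k}+\lambda$. (i) Recomputing the average after the push shifts it by $e_k := x_{A,k}^{+}-x_{A,k} = \lambda\cdot\frac1M\sum_{j}\frac{x_{j,k}-x_{A,k}}{r_{j,k}}$, i.e. $\lambda$ times an average of unit vectors; under the quasi-uniformity assumption already invoked in Section~\ref{sec:dppf} (the normalized worker directions behave like i.i.d.\ draws on the sphere, hence centered with second moment $\mathcal{O}(1/M)$), one obtains $\mathbb{E}\|e_k\| = \mathcal{O}(\lambda M^{-1/2})$. (ii) The $\tau$ local SGD steps displace the gap vector by $-\eta\sum_{t}(g_{m,k}^{t}-\bar g_{k}^{t})$; splitting into a martingale-difference noise part and a deterministic part, using $\mathbb{E}[g_{m,k}^t]=\nabla f(x_{m,k}^t)$, $\mathbb{E}\|g_{m,k}^t-\nabla f(x_{m,k}^t)\|^2\le\sigma_0^2$, and independence across steps, the noise contributes $\mathcal{O}(\eta\sigma_0\sqrt\tau)$ in expected norm, while near convergence ($\nabla f(x_{A,k})\approx\mathbf{0}$, as assumed in Section~\ref{sec:MV_measure}) the deterministic part is $\mathcal{O}(\eta\tau L\,r_{m,k})$, which only perturbs the contraction factor by $\mathcal{O}(\eta)$. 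Two triangle inequalities ($|r_{m,k+1}-\|x_{m,k}^+-x_{A,k}^+\||\le\|\text{local perturb.}\|$ and $|\,\|x_{m,k}^+-x_{A,k}^+\|-\|\Delta_{m,k}^+\|\,|\le\|e_k\|$) then give $\bigl|\,\mathbb{E}r_{m,k+1}-\bigl((1-\alpha)\mathbb{E}r_{m,k}+\lambda\bigr)\bigr|\le\varepsilon$ with $\varepsilon=\mathcal{O}(\eta\sigma_0+M^{-1/2})$, absorbing $\lambda,L,\tau$ into the constant.

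Finally I would solve $\rho_{k+1} = (1-\alpha)\rho_k + \lambda + \theta_k$ with $|\theta_k|\le\varepsilon$. Since $\alpha\in(0,1]$ gives contraction factor $1-\alpha\in[0,1)$, unrolling yields $|\rho_k-\lambda/\alpha|\le(1-\alpha)^k|\rho_0-\lambda/\alpha|+\varepsilon/\alpha$, so $\rho_k$ is eventually trapped in an $\mathcal{O}(\varepsilon)$-ball around $\lambda/\alpha$; substituting into $\mathbb{E}\|\Delta_{m,k}^{+}\| = (1-\alpha)\rho_k+\lambda$ gives $\lim_{k\to\infty}\mathbb{E}\|\Delta_{m,k}^{+}\| = \lambda/\alpha + \mathcal{O}(\eta\sigma_0 + M^{-1/2})$, and letting $\eta\to0$, $M\to\infty$ collapses the error term to zero. (This geometric, rate-$(1-\alpha)$ contraction is exactly the \emph{self-stabilizing} property advertised in the abstract, so I would package it as a standalone observation.)

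The main obstacle I expect is item (i): making the $M^{-1/2}$ bound rigorous rather than heuristic, since the worker directions are not literally i.i.d.\ uniform on the sphere but coupled through the shared average and the shared history. One must either take quasi-uniformity as a standing assumption (as the paper does informally) or argue that the pull--push dynamics itself keeps the configuration spread out --- no collapse of all workers onto a single direction, which is precisely what the $\lambda/r_{m,k}$ push prevents since it blows up as $r_{m,k}\to0$ --- so that $\mathbb{E}\|\frac1M\sum_j u_{j,k}\|^2\le C/M$ persists round to round. A secondary nuisance is a uniform lower bound $r_{m,k}\gtrsim\lambda$ after the first round, needed so that $\lambda/r_{m,k}$ is well-defined and the affine identity of Step~1 holds; this again follows from the push term and should be stated as a small lemma.
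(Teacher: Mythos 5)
Your proposal is correct and follows essentially the same route as the paper's proof: the same affine contraction recursion $\rho_{k+1}=(1-\alpha)\rho_k+\lambda+\mathcal{O}(\eta\sigma_0\sqrt{\tau}+\lambda M^{-1/2})$, with the perturbation controlled by the same two ingredients (a bounded-variance bound on the accumulated local-gradient differences and the $M^{-1/2}$ bound on the averaged unit directions under the i.i.d.-on-the-sphere assumption, which the paper likewise takes as a standing assumption in its Lemma~\ref{appendix:lemma:avg_unit_normed}), followed by geometric unrolling to the fixed point $\lambda/\alpha$. Your exact affine identity $\|\Delta_{m,k}^+\|=(1-\alpha)r_{m,k}+\lambda$ (in place of the paper's triangle inequality on $\lambda\|u_{m,k}-\bar u_k\|$) and your explicit smoothness-based handling of the deterministic part of the gradient drift are minor refinements that, if anything, tighten steps the paper treats more loosely.
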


Since $\lambda$ and $\alpha$ are user‑chosen hyperparameters, we can preset the target valley size. In other words, DPPF embeds prior information about how wide a basin it should seek simply through the force ratio $\lambda/\alpha$. Next, we translate this controllable valley width into a PAC‑Bayes guarantee.  


\begin{theorem}[\textbf{DPPF: Wider Valley Tightens the Generalization Gap}]
\label{main:thm:pac_bayes}
Consider a geometric grid for candidate valley sizes governed by the DPPF algorithm's pull ($\alpha_j$) and push ($\lambda_j$) strengths: $\mathcal G=\{r_j=r_{\min}(1+\gamma)^j\}_{j=0}^J$, where each $r_j=\frac{\lambda_j}{\alpha_j}$. For every $r_j$ assume the spherical‑Gaussian prior $P_{r_j}=\mathcal N(0,r_j\sigma_0^2I_d)$ and let the training algorithm return the posterior  $Q_{r_j}=\mathcal N(\mu_{r_j},c_{r_j}r_j\sigma_0^2I_d)$ over model parameters, where $c_{r_j}\ge1$ is a data-dependent scalar. Assume there are constants $D_0\!>\!0$ and $0\le\beta<1$ such that $\lVert\mu_{r_j}\rVert_2^2\le D_0\,r_j^{\beta}$ for every $r_j\in\mathcal G$. Then with probability $1-\delta$ over the draw of the sample set $S$ with $|S| = n$, for all $r_j\in\mathcal G$, we can write:
\begin{align*}
\mathbb{E}_{x \sim Q_{r_j}} [&L_{\mathcal{D}}(x)]
\le \mathbb{E}_{x \sim Q_{r_j}} [L_S(x)] \nonumber \\
&\quad + \underbrace{\sqrt{ \frac{ 
    \tfrac{d}{2}(c_{r_j} - 1 - \log c_{r_j})
    + \frac{D_0}{2 \sigma_0^2 r_j^{1 - \beta}}
    + \log \frac{nJ}{\delta}
}{2(n - 1)} }}_{\textstyle \text{gap}(r_j)}.
\end{align*}

because $1-\beta>0$, $\text{gap}(r_{j+1})<\text{gap}(r_j)$ for every consecutive pair in $\mathcal G$.
\end{theorem}

In Theorem~\ref{main:thm:pac_bayes}, the bound’s leading term, $\text{gap}(r_j)$, is strictly decreasing in $r_j$ because $1-\beta>0$. Hence, selecting a larger valley within the geometric grid of candidate valley sizes $\mathcal G$, achieved by choosing a valley with a higher $\lambda/\alpha$ ratio, provably reduces the PAC-Bayes generalization gap. Importantly, this result does not imply unbounded benefits from indefinitely enlarging the valley width. The improvement in generalization holds in the grid $\mathcal G$ of \citet{langford2001caruna} when the technical condition $\lVert\mu_{r_j}\rVert_2^2 \le D_0 r_j^{\beta}$ is satisfied. We provide empirical evidence supporting these theoretical assumptions and claims in Section~\ref{appendix:subsec:hyperparam_sens} of the Appendix. Taken together, Theorems~\ref{main:thm:valley_width} and \ref{main:thm:pac_bayes} link a targeted valley knob (the push‑to‑pull ratio) to an explicit statistical benefit, a tighter generalization bound.  We provide proofs of both theorems in Section~\ref{appendix:sec:all_theory} of the Appendix, which also includes insights into the arrangement of workers at the loss boundary and the non-convex convergence analysis.

\begin{table*}[t]
\centering
\caption{Comparison of DPPF with DDP SGD, LocalSGD and LocalSGD + QSR.}
\label{dppf:table:comm_eff}
\renewcommand{\arraystretch}{1.08}
\resizebox{0.85\textwidth}{!}{
\begin{tabular}{cc|ccccc|c}
                                                                                        &                        & \begin{tabular}[c]{@{}c@{}}ResNet-18\\ CIFAR-10\end{tabular} & \begin{tabular}[c]{@{}c@{}}PyramidNet\\ CIFAR-100\end{tabular} & \begin{tabular}[c]{@{}c@{}}ResNet-50\\ ImageNet\end{tabular} & \begin{tabular}[c]{@{}c@{}}ResNet-101\\ ImageNet\end{tabular} & \begin{tabular}[c]{@{}c@{}}ViT\\ ImageNet\end{tabular} & \begin{tabular}[c]{@{}c@{}}Comm. \\ (\%)\end{tabular} \\ \hline
\multicolumn{2}{c|}{DDP SGD / DDP AdamW}                                                                               & $4.33_{\pm0.08}$                                             & $19.10_{\pm0.06}$                                              & $23.83_{\pm0.17}$                                            & $23.33_{\pm0.09}$                                             & $33.61_{\pm0.07}$                                             & 100.0                                                 \\ \hline
\multirow{3}{*}{\begin{tabular}[c]{@{}c@{}}LocalSGD /\\ LocalAdamW\end{tabular}}         & $\tau=4$               & $4.36_{\pm0.06}$                                             & $19.03_{\pm0.28}$                                              & $24.11_{\pm0.04}$                                            & $22.11_{\pm0.17}$                                             & $33.91_{\pm0.30}$                                             & 25.0                                                  \\
                                                                                        & $\tau=8$               & $4.40_{\pm0.02}$                                             & $18.82_{\pm0.21}$                                              & $24.21_{\pm0.13}$                                            & $22.14_{\pm0.21}$                                             & $34.42_{\pm0.18}$                                             & 12.5                                                  \\
                                                                                        & $\tau=16$              & $4.49_{\pm0.21}$                                             & $19.11_{\pm0.21}$                                              & $24.36_{\pm0.19}$                                            & $21.72_{\pm0.13}$                                             & $34.81_{\pm0.22}$                                             & \textbf{6.3}                                          \\ \hline
\multirow{3}{*}{\begin{tabular}[c]{@{}c@{}}LocalSGD /\\ LocalAdamW \\ +QSR\end{tabular}} & $\tau_{\text{base}}=2$ & $4.21_{\pm0.08}$                                             & $17.77_{\pm0.06}$                                              & $23.68_{\pm0.02}$                                            & $21.66_{\pm0.22}$                                             & $33.69_{\pm0.18}$                                             & 42.8                                                  \\
                                                                                        & $\tau_{\text{base}}=4$ & $4.32_{\pm0.03}$                                             & $18.06_{\pm0.20}$                                              & $23.49_{\pm0.19}$                                            & $21.54_{\pm0.27}$                                             & $33.83_{\pm0.27}$                                             & 16.1                                                  \\
                                                                                        & $\tau_{\text{base}}=8$ & $4.29_{\pm0.04}$                                             & $18.27_{\pm0.26}$                                              & $23.52_{\pm0.21}$                                            & $21.73_{\pm0.21}$                                             & $34.45_{\pm0.31}$                                             & \textit{9.8}                                          \\ \hline
\multirow{3}{*}{DPPF}                                                                   & $\tau=4$               & $\mathbf{3.93_{\pm0.09}}$                                    & $\mathbf{16.74_{\pm0.03}}$                                     & $\mathbf{23.07_{\pm0.25}}$                                   & $21.10_{\pm0.06}$                                             & $\mathbf{31.35_{\pm0.30}}$                                    & 25.0                                                  \\
                                                                                        & $\tau=8$               & $\mathit{4.01_{\pm0.05}}$                                    & $\mathit{17.09_{\pm0.12}}$                                     & $\mathit{23.28_{\pm0.22}}$                                   & $\mathit{20.89_{\pm0.24}}$                                    & $\mathit{31.92_{\pm0.09}}$                                    & 12.5                                                  \\
                                                                                        & $\tau=16$              & $4.13_{\pm0.08}$                                             & $17.28_{\pm0.04}$                                              & $23.40_{\pm0.15}$                                            & $\mathbf{20.76_{\pm0.13}}$                                    & $32.28_{\pm0.26}$                                             & \textbf{6.3}                                         
\end{tabular}
}
\end{table*}

\section{Empirical Results}
\label{sec:Exp}
The empirical results presented here employ the increasing schedule for $\lambda$. An ablation study on different schedules is deferred to Section~\ref{appendix:subsec:scheduling_lambda} in the Appendix.

\subsection{Push Mechanism with Soft-Consensus}
\label{subsec:experiments_param_share}

\begin{table}[H]
\renewcommand{\arraystretch}{1.15}
\centering
\caption{Test performance of soft-consensus optimizers with and without DPPF. (NC: see Remark~\ref{rem:LSGD} in the Appendix).}
\label{table:param_sharing_improvement}
\resizebox{\columnwidth}{!}{
\begin{tabular}{c|cc|cc}
                        & \multicolumn{2}{c|}{CIFAR-10}                     & \multicolumn{2}{c}{CIFAR-100}                      \\ \cline{2-5} 
                        & \multicolumn{1}{c|}{4 Workers} & 8 Workers & \multicolumn{1}{c|}{4 Workers} & 8 Workers \\ \hline
SimpleAvg      & \multicolumn{1}{c|}{$4.24_{\pm 0.15}$}  & $4.31_{\pm 0.04}$ & \multicolumn{1}{c|}{$21.19_{\pm 0.25}$ }   & $21.44_{\pm 0.12}$    \\
\textbf{DPPF\textsubscript{SimpleAvg}} & \multicolumn{1}{c|}{$3.93_{\pm 0.09}$}  & $3.98_{\pm 0.10}$  & \multicolumn{1}{c|}{$20.59_{\pm 0.06}$}    & $20.77_{\pm 0.20}$  \\
EASGD          & \multicolumn{1}{c|}{$4.19_{\pm 0.21}$}                  & $4.21_{\pm 0.17}$ & \multicolumn{1}{c|}{$21.04_{\pm 0.22}$}  &  $21.42_{\pm 0.06}$  \\
\textbf{DPPF\textsubscript{EASGD}}     & \multicolumn{1}{c|}{$4.04_{\pm 0.08}$}                  & $3.97_{\pm 0.11}$  & \multicolumn{1}{c|}{$20.59_{\pm 0.29}$}                  &  $20.76_{\pm 0.17}$                  \\
LSGD           & \multicolumn{1}{c|}{$4.31_{\pm 0.03}$}                  & $4.33_{\pm 0.13}$ & \multicolumn{1}{c|}{$21.08_{\pm 0.34}$}     & $21.75_{\pm 0.43}$    \\
\textbf{DPPF\textsubscript{LSGD}}      & \multicolumn{1}{c|}{NC}  &   NC    & \multicolumn{1}{c|}{NC}  &   NC  \\
MGRAWA         & \multicolumn{1}{c|}{$4.35_{\pm 0.09}$}                  & $4.22_{\pm 0.11}$ & \multicolumn{1}{c|}{$21.31_{\pm 0.14}$}                  & $21.04_{\pm 0.14}$  \\
\textbf{DPPF\textsubscript{MGRAWA}}    & \multicolumn{1}{c|}{$4.03_{\pm 0.08}$}                  & $3.99_{\pm 0.10}$  & \multicolumn{1}{c|}{$20.46_{\pm 0.18}$}                  &    $20.87_{\pm 0.24}$               
\end{tabular}}
\end{table}

We integrate the proposed pull-push framework into existing soft-consensus distributed optimizers: EASGD, LSGD, and MGRAWA. These methods inherently apply a pulling force, guiding workers toward a consensus variable $x_C$. We also introduce \textit{SimpleAvg}, which sets $x_C = x_A$, as a soft-consensus variant of LocalSGD~\citep{stich2019local}. We refer to SimpleAvg with the pull-push mechanism as DPPF\textsubscript{SimpleAvg}, and adopt the same naming convention when incorporating the push force into EASGD, LSGD, and MGRAWA. To see how much improvement in generalization we attain by introducing the wide minima seeking pushing force, we train ResNet-18 \citep{resnet} models on CIFAR-10 \citep{cifar10} and CIFAR-100 \citep{cifar100}  datasets using the vanilla distributed trainers and their DPPF variants. The experiments are conducted on $4$ and $8$ GPUs to assess the DPPF's scalability. For the vanilla methods, we vary the pulling strength as follows: $\alpha \in \{0.05, 0.1, 0.3, 0.5\}$, and for DPPF variants, $\alpha=0.1$ is fixed and the pushing force is varied as follows: $\lambda \in \{0.05, 0.1, 0.25, 0.5, 0.75\}$. More details can be found in Section~\ref{appendix:subsec:dppf_vs_soft_consensus} in the Appendix. The experiments are repeated for three different seeds and we report the test error as the average across the seeds, specifying the standard deviation in the subscript. (NC: not converged)

As shown in Table~\ref{table:param_sharing_improvement}, the push mechanism brings considerable improvements over the vanilla distributed optimizers that only have the pulling force. DPPF lowers test error by up to 0.3\% on CIFAR‑10 and 0.7\% on CIFAR‑100 experiments respectively.  Among the DPPF variants that converged, we do not observe a significant difference in performance. Therefore, we use DPPF\textsubscript{SimpleAvg} moving forward to be able to execute pull and push updates in a single step, as mentioned in Equation~\ref{eq:single_line} and refer to it simply as DPPF.

 \subsection{Comparison with Other Communication-Efficient Methods}
\label{sec:dppf:comp_other_comm_eff}
In this section, we compare \textit{DPPF} with other communication-efficient methods, namely LocalSGD~\citep{stich2019local} and LocalSGD combined with QSR~\citep{guquadratic}, which, to the best of our knowledge, represents the current state of the art approach. QSR proposes increasing the communication period \(\tau\) throughout training to be inversely proportional to the squared learning rate~\citep{guquadratic}. Specifically, the authors schedule the communication period as \(\tau_t = \max \left\{ \tau_{\text{base}}, \left\lfloor \left( \beta/\eta_t \right)^2 \right\rfloor \right\},\) where \(\tau_{\text{base}}\) is the minimum number of local steps taken before QSR is applied, and \(\beta\) is a growth coefficient that controls how aggressively the communication period \(\tau_t\) is updated based on changes in the learning rate \(\eta_t\). In~\citep{guquadratic}, the authors experiment with \(\tau_{\text{base}} \in \{2, 4, 8\}\) and \(\beta \in \{0.2, 0.25, 0.3\}\), and keep \(\eta_{\text{max}} = 0.8\). To reproduce comparable effects in our own settings, we scale the \(\beta\) values accordingly.  Further training details are provided in Section~\ref{appendix:subsec:comm_eff} of the Appendix. 

Table~\ref{dppf:table:comm_eff} compares the test error and communication volumes of DPPF against LocalSGD and LocalSGD+QSR, across a diverse set of models (ResNet-\{18, 50, 101\} \citep{resnet}, PyramidNet(110,270) \citep{pyramidnet}, Vision Transformer (ViT) \citep{dosovitskiy2020image} with 12 layers) and datasets (CIFAR-10 \citep{cifar10}, CIFAR-100 \citep{cifar100}, ImageNet \citep{imagenet}). We also report the communication volume as the percentage of communication rounds relative to DDP SGD. We observe that DPPF consistently achieves the lowest test errors at dramatically reduced communication cost. The improvements are especially pronounced on ImageNet. For example, with ResNet-50, DPPF achieves a test error of 23.07\% at $\tau=4$, outperforming all baselines, including DDP SGD (23.83\%), while reducing communication by 4$\times$. The gains are even more substantial for deeper models: on ImageNet with ResNet-101, DPPF achieves 21.10\% error, substantially better than DDP SGD (23.33\%) and all local baselines, again using only 25\% of the communication. Even at more aggressive communication intervals ($\tau=8,16$), DPPF maintains a clear advantage, achieving 20.89\% and 20.76\% error, respectively, as communication drops to just 12.5\% and 6.3\%. For the ViT, DPPF also delivers consistent improvements. At $\tau=4$ it achieves 31.35\% error versus 33.61\% for DDP AdamW, reducing communication by 4$\times$, and it continues to outperform all local baselines at $\tau=8,16$. These results further demonstrate that DPPF is effective not only with CNNs but also with transformer architectures, and is compatible with AdamW training. On CIFAR-10 and CIFAR-100 benchmarks, DPPF also outperforms the baselines while operating at lower communication cost. Overall, DPPF offers substantial gains in both accuracy and communication efficiency across model and data scales, as further shown in Figure~\ref{fig:error_vs_comm_volume}.


\subsection{SAM-like Performance While Being Communication-Efficient}

\begin{table}[h]
\renewcommand{\arraystretch}{1.01}
\centering
\caption{Test errors(\%) reported by employing four different flatness encouraging schemes in distributed optimization.}
\label{tab:dppf_vs_sam}
\resizebox{1.01\columnwidth}{!}{
\begin{tabular}{ccc|cc|cc}
\multicolumn{1}{l}{}                                                           &                                                                               &                & \begin{tabular}[c]{@{}c@{}}DDP\\ SGD\end{tabular} & \begin{tabular}[c]{@{}c@{}}DPPF\\ SGD\end{tabular} & \begin{tabular}[c]{@{}c@{}}DDP\\ SAM\end{tabular} & \begin{tabular}[c]{@{}c@{}}DPPF\\ SAM\end{tabular} \\ \cline{4-7} 
\multicolumn{1}{c|}{\multirow{6}{*}{\rotatebox[origin=c]{90}{4 Workers}}} & \multicolumn{1}{c|}{\multirow{3}{*}{\rotatebox[origin=c]{90}{C10}}}  & ResNet-18      & $4.33_{\pm 0.08}$                                          & $\mathbf{3.93_{\pm 0.09}}$                                  & $3.97_{\pm 0.08}$                                          & $\mathbf{3.74_{\pm 0.05}}$                                  \\
\multicolumn{1}{c|}{}                                                          & \multicolumn{1}{c|}{}                                                         & WRN-16x8       & $4.09_{\pm 0.10}$                                          & $\mathbf{3.76_{\pm 0.06}}$                                  & $3.72_{\pm 0.08}$                                          & $\mathbf{3.70_{\pm 0.05}}$                                  \\
\multicolumn{1}{c|}{}                                                          & \multicolumn{1}{c|}{}                                                         & PyNet(110,270) & $3.94_{\pm 0.03}$                                          & $\mathbf{3.11_{\pm 0.04}}$                                  & $\mathbf{2.84_{\pm 0.10}}$                                 & $2.89_{\pm 0.07}$                                           \\ \cline{2-7} 
\multicolumn{1}{c|}{}                                                          & \multicolumn{1}{c|}{\multirow{3}{*}{\rotatebox[origin=c]{90}{C100}}} & ResNet-18      & $21.29_{\pm 0.23}$                                         & $\mathbf{20.59_{\pm 0.06}}$                                 & $20.82_{\pm 0.14}$                                         & $\mathbf{20.53_{\pm 0.13}}$                                 \\
\multicolumn{1}{c|}{}                                                          & \multicolumn{1}{c|}{}                                                         & WRN-16x8       & $20.10_{\pm 0.21}$                                         & $\mathbf{18.99_{\pm 0.09}}$                                 & $19.36_{\pm 0.06}$                                         & $\mathbf{18.96_{\pm 0.18}}$                                 \\
\multicolumn{1}{c|}{}                                                          & \multicolumn{1}{c|}{}                                                         & PyNet(110,270) & $19.18_{\pm 0.10}$                                         & $\mathbf{16.87_{\pm 0.18}}$                                 & $16.50_{\pm 0.19}$                                         & $\mathbf{15.68_{\pm 0.09}}$                                 \\ \hline
\multicolumn{1}{c|}{\multirow{6}{*}{\rotatebox[origin=c]{90}{8 Workers}}} & \multicolumn{1}{c|}{\multirow{3}{*}{\rotatebox[origin=c]{90}{C10}}}  & ResNet-18      & $4.67_{\pm 0.17}$                                          & $\mathbf{3.98_{\pm 0.10}}$                                  & $4.23_{\pm 0.07}$                                          & $\mathbf{3.92_{\pm 0.11}}$                                  \\
\multicolumn{1}{c|}{}                                                          & \multicolumn{1}{c|}{}                                                         & WRN-16x8       & $4.22_{\pm 0.08}$                                          & $\mathbf{3.78_{\pm 0.12}}$                                  & $3.79_{\pm 0.14}$                                          & $\mathbf{3.79_{\pm 0.10}}$                                  \\
\multicolumn{1}{c|}{}                                                          & \multicolumn{1}{c|}{}                                                         & PyNet(110,270) & $4.08_{\pm 0.18}$                                          & $\mathbf{3.18_{\pm 0.07}}$                                  & $3.06_{\pm 0.05}$                                          & $\mathbf{2.93_{\pm 0.07}}$                                  \\ \cline{2-7} 
\multicolumn{1}{c|}{}                                                          & \multicolumn{1}{c|}{\multirow{3}{*}{\rotatebox[origin=c]{90}{C100}}} & ResNet-18      & $21.26_{\pm 0.24}$                                         & $\mathbf{20.77_{\pm 0.20}}$                                 & $21.14_{\pm 0.14}$                                         & $\mathbf{20.60_{\pm 0.12}}$                                 \\
\multicolumn{1}{c|}{}                                                          & \multicolumn{1}{c|}{}                                                         & WRN-16x8       & $20.58_{\pm 0.29}$                                         & $\mathbf{18.90_{\pm 0.16}}$                                 & $19.87_{\pm 0.15}$                                         & $\mathbf{19.22_{\pm 0.13}}$                                 \\
\multicolumn{1}{c|}{}                                                          & \multicolumn{1}{c|}{}                                                         & PyNet(110,270) & $19.10_{\pm 0.06}$                                         & $\mathbf{16.74_{\pm 0.03}}$                                 & $16.68_{\pm 0.03}$                                         & $\mathbf{15.94_{\pm 0.24}}$                                
\end{tabular}
}
\end{table}

\begin{figure*}[h]
  \centering
  \begin{minipage}[t]{0.65\textwidth}
    \centering
    \begin{subfigure}[b]{0.47\textwidth}
      \includegraphics[width=\linewidth]{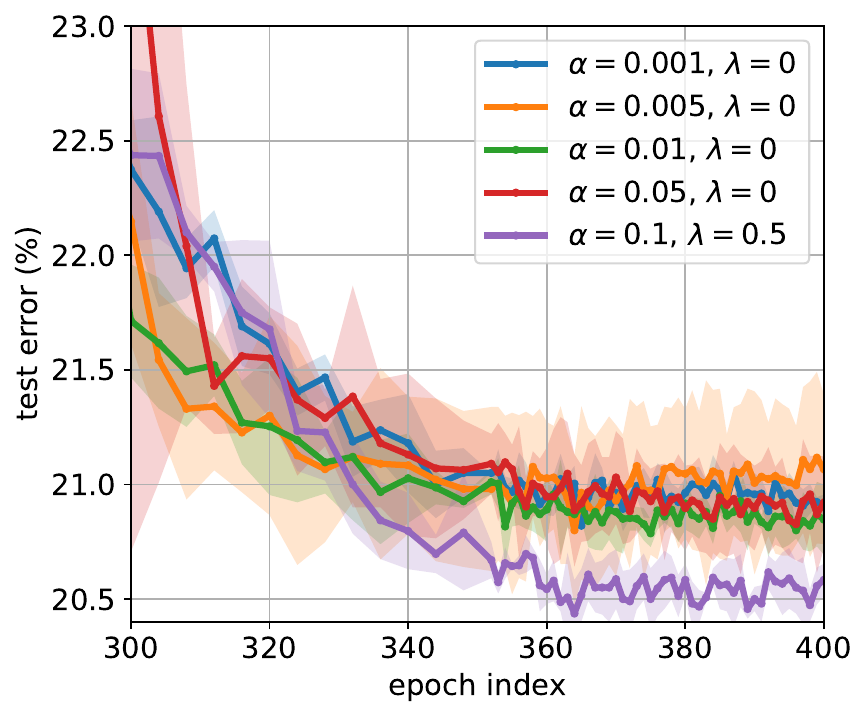}
        \caption{Test error curves}
        \label{fig:ablation_weak_lambda_1}
    \end{subfigure}
    \hspace{3mm}
    \begin{subfigure}[b]{0.47\textwidth}
      \includegraphics[width=\linewidth]{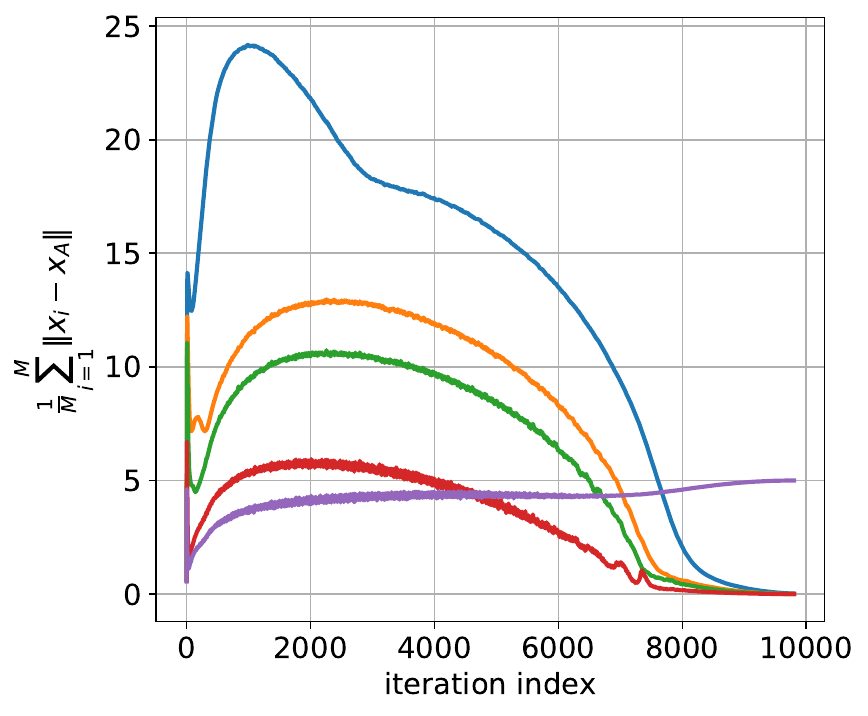}
      \caption{Consensus distance}
      \label{fig:ablation_weak_lambda_2}
    \end{subfigure}
    \caption{Comparison of training with weaker pulling force and using the pull-push mechanism.}
    \label{fig:ablation_weak_lambda_total_figure}
  \end{minipage}
  \hspace{3mm}
  \begin{minipage}[t]{0.32\textwidth}
    \centering
    \begin{subfigure}[b]{0.9\textwidth}
      \includegraphics[width=\linewidth]{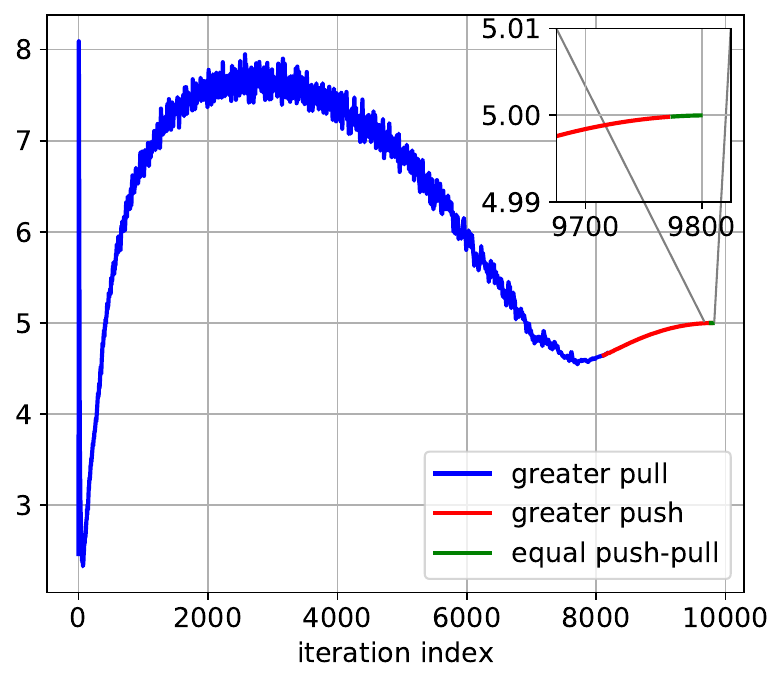}
        \caption{Entire training + zoomed}
        \label{fig:ablation_push_pull_overall}
    \end{subfigure}
        \caption{Analysis of the tug-of-war between pull and push forces.}
        \label{fig:ablation_push_pull_figure}
  \end{minipage}
\end{figure*}

In this section, we evaluate DPPF’s ability to seek flat minima at the distributed level by comparing it with SAM. As a baseline, we include DDP SGD, which uses standard gradient averaging with SGD as the local optimizer. Unlike SAM, which promotes flatness locally, DPPF enforces wide, high-quality minima through inter-worker repulsion. To compare local versus distributed flatness strategies, we include DDP SAM, which applies SAM locally with standard distributed gradient averaging. Finally, to evaluate the benefit of combining both local and distributed flatness-promoting mechanisms, we introduce DPPF SAM, which uses DPPF for distributed training and SAM as the local optimizer.

To compare these methods, we train three different models, namely ResNet-18 \citep{resnet}, WideResNet-16x8 (WRN-16x8) \citep{wideresnet} and PyramidNet-(110,270) (PyNet(110,270)) \citep{pyramidnet} on CIFAR-10 (C10) and CIFAR-100 (C100) datasets. We run the experiments using $4$ and $8$ workers. Configurations with SGD optimizer are run for $400$ epochs while those with SAM are run for $200$ epochs for computational parity, as suggested in \citep{SAM}. The SAM's maximization hyperparameter is varied as $\rho \in \{0.05, 0.1, 0.2\}$. For DPPF, the communication period is fixed at $\tau=4$. More details can be found in Section~\ref{appendix:subsec:dppf_sgd_sam}. Conclusions from Table~\ref{tab:dppf_vs_sam} are consistent with that of Table~\ref{dppf:table:comm_eff}: Our distributed flatness-promoting method, \textit{DPPF}, outperforms standard DDP in generalization while retaining communication efficiency. Notably, DPPF SGD alone matches or surpasses DDP SAM in many cases, predominantly in CIFAR-100 experiments. Moreover, combining local and distributed flatness mechanisms (DPPF SAM) yields further improvements in generalization.

\section{Ablation Studies}
\label{sec:ablations}
\subsection{Analysis of the Pull-Push Mechanism}
\label{subsec:ablation_1}
Instead of introducing a pushing force to recover wide valleys, as done in DPPF, one might ask: why not simply reduce the pulling strength to keep workers apart? However, intuitive physical reasoning suggests the two approaches are not equivalent, as having only a one-directional merging force inevitably leads to worker collapse. To test this, we conduct experiments using SimpleAvg without a pushing force and vary the pull strength across $\alpha = \{0.0001, 0.005, 0.01, 0.05\}$. We compare these results with DPPF using $\alpha = 0.1$ and $\lambda=0.5$, as reported in previous tables. All experiments are run with three random seeds. As shown in Figure~\ref{fig:ablation_weak_lambda_1}, the wide-minima-seeking behavior enabled by DPPF’s pushing force cannot be replicated by merely weakening the pull. To further analyze this, we track the mean distance between workers and the average variable (consensus distance); equivalently, the relaxed MV measure defined in Section~\ref{sec:dppf}; over the course of training (Figure~\ref{fig:ablation_weak_lambda_2}). The results show that, regardless of pulling strength, workers without a push force steadily collapse toward one another. This effect is consistent across SimpleAvg, EASGD, LSGD, and MGRAWA, and it restricts exploration of the loss landscape toward the end of training. We refer to this phenomenon as \textit{valley collapse}, and our findings show that the presence of a push force is essential to preventing it. The valley collapse phenomenon is also visually evident in Figure~\ref{2d_train_error_vis_simpleavg}.

Finally, we analyze the interplay between the pull and push forces, identifying the phases of training where the pulling force dominates and those where the pushing force prevails. We take one of the workers and highlight these phases as we plot the change of the worker's distance to the average variable throughout the training. The results are shared in Figure~\ref{fig:ablation_push_pull_figure}. We see that the pulling force is stronger than the pushing force in the earlier phase of the training, yet the workers drift away from the average variable. In this phase, the exploration of the workers is propelled by the underlying local optimizer (SGD or variant), and the learning rate hasn't decayed significantly yet to prevent the exploration. Close to convergence, the pushing force starts to overwhelm the pulling force, preventing the valley collapse and encouraging wider minima. At the end of the training, the interplay between the pull-push force is stabilized around $5$, when $\lambda=0.5$ and $\alpha=0.1$, aligned with our Theorem~\ref{main:thm:valley_width}.


\begin{figure}[H]
  \centering
  \begin{subfigure}[t]{0.47\columnwidth}
      \centering
      \includegraphics[width=\linewidth]{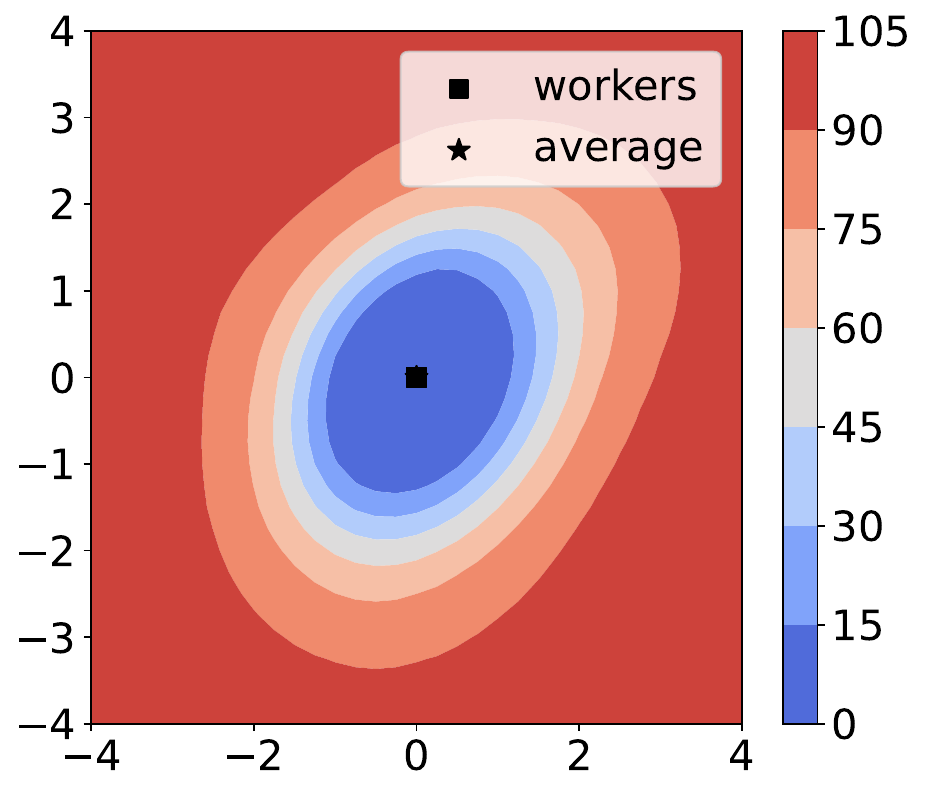}
      \caption{SimpleAvg}
      \label{2d_train_error_vis_simpleavg}
  \end{subfigure}
  \hfill
  \begin{subfigure}[t]{0.49\columnwidth}
      \centering
      \includegraphics[width=\linewidth]{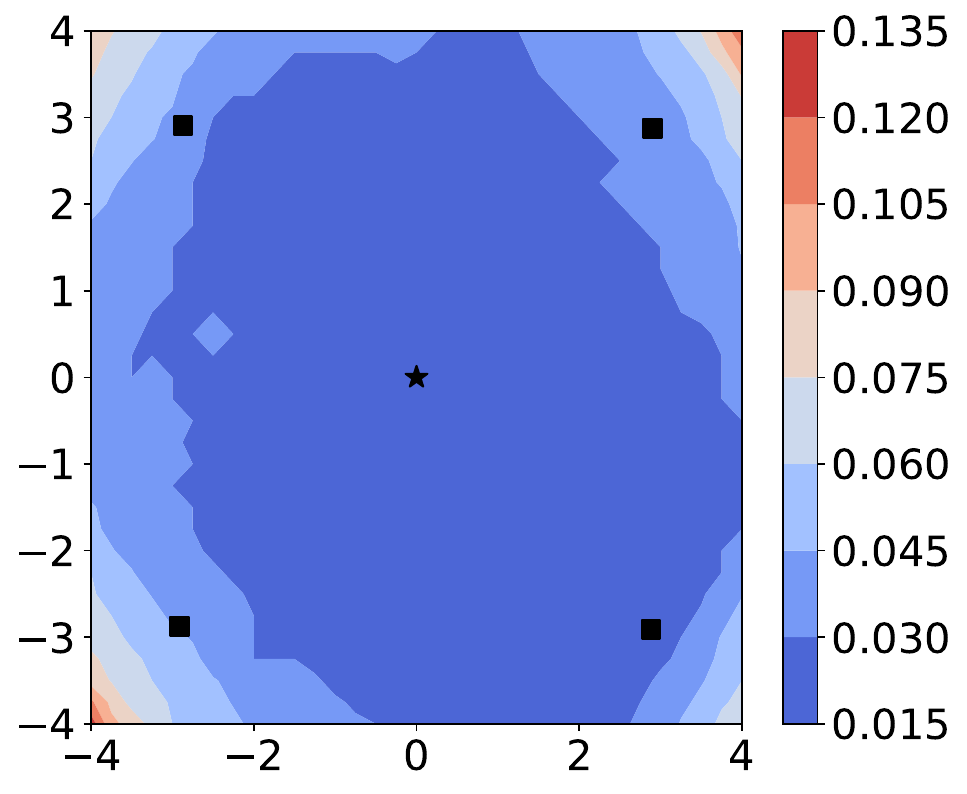}
      \caption{DPPF\textsubscript{SimpleAvg}}
      \label{2d_train_error_vis_dppf}
  \end{subfigure}
  \caption{2D contour plots of training error (\%) around the average variable ($x_A$). For SimpleAVG, the error quickly rises to 100\%. In contrast, the error remains stable at approximately 0.03\% around the minima found by DPPF.}
  \label{2d_train_error_vis}
\end{figure}
\begin{figure}[H]
  \centering
  \begin{subfigure}[t]{0.48\columnwidth}
      \centering
      \includegraphics[width=\linewidth]{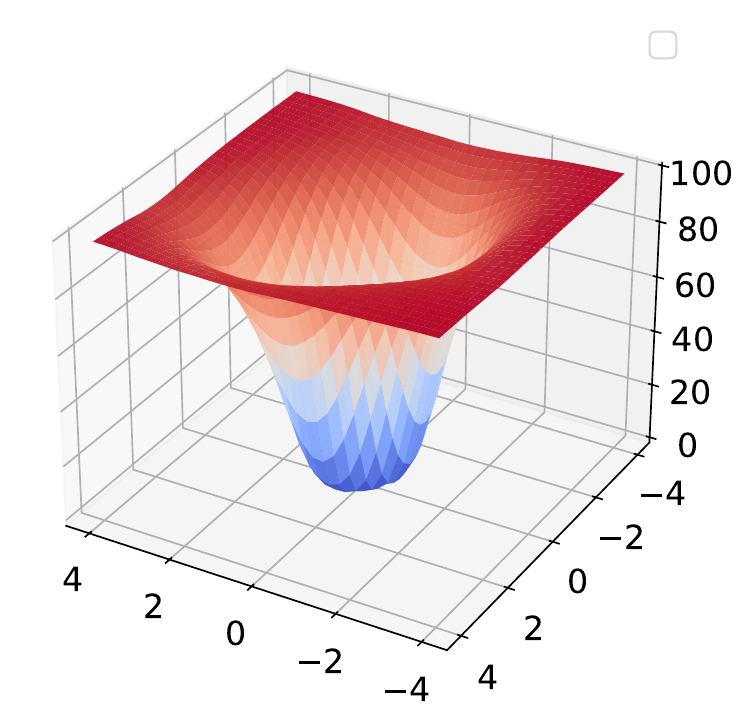}
      \caption{SimpleAvg}
      \label{3d_train_error_vis_simpleavg}
  \end{subfigure}
  \hfill
  \begin{subfigure}[t]{0.48\columnwidth}
      \centering
      \includegraphics[width=\linewidth]{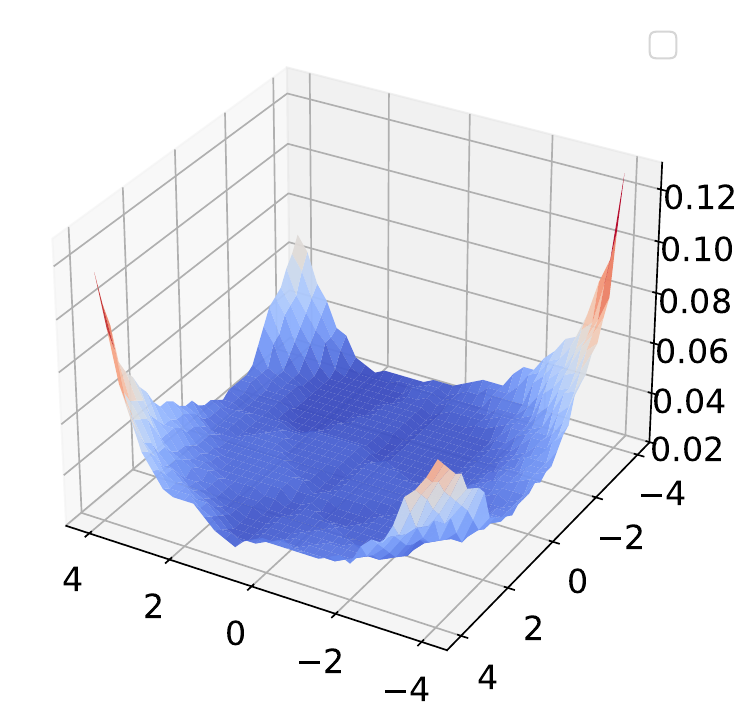}
      \caption{DPPF\textsubscript{SimpleAvg}}
      \label{3d_train_error_vis_dppf}
  \end{subfigure}
  \caption{3D train error (\%) plots.}
  \label{3d_train_error_vis}
\end{figure}

\subsection{Error Landscape Visualizations}

To evaluate the effectiveness of DPPF’s pushing mechanism in locating wide minima, we visualize how train and test errors evolve around the minima found by SimpleAvg and DPPF\textsubscript{SimpleAvg} after training ResNet-18 on the CIFAR-100 dataset. Specifically, Figure~\ref{2d_train_error_vis} shows 2D contour maps of the training error landscape, with worker locations marked. As shown in Figure~\ref{2d_train_error_vis_simpleavg}, the training error around the final point found by SimpleAvg rapidly rises to $100\%$, and the workers collapse onto the average variable, illustrating the valley collapse phenomenon discussed in Section~\ref{subsec:ablation_1}. In contrast, DPPF’s pushing mechanism keeps the workers apart and ultimately finds a wide basin stably spanned by the workers, where the training error remains around $0.03\%$ (Figure~\ref{2d_train_error_vis_dppf}). The corresponding 3D visualizations are provided in Figure~\ref{3d_train_error_vis}, where the broader basin found by DPPF is even more evident. In these two settings, the final test errors of SimpleAvg and DPPF\textsubscript{SimpleAvg} are $21.50\%$ and $20.56\%$, respectively. Additional landscape visualizations can be found in Section~\ref{appendix:subsec:landscape_vis} of the appendix.

\subsection{DPPF in Non-IID Setting}
Though we mainly focus on the IID setting in this work, here we ask whether the effectiveness of DPPF carries also to the non-IID settings. For this, we couple the push mechanism with a well-established Federated Learning (FL) method SCAFFOLD \citep{karimireddy2020scaffold} and a more recent, flat-minima seeking method FedLESAM \citep{fedlesam}. Following the standard heterogeneous data construction protocol in the FL literature~\citep{karimireddy2020scaffold,fedlesam,fedsam}, we partition the training data across workers using a Dirichlet distribution (Dir.) at initialization and keep the partition fixed throughout training (i.e., no reshuffling across epochs). We consider $M=4$ workers with communication period $\tau=16$. Non-IID splits are generated with Dirichlet concentration parameters $\alpha \in \{0.1, 0.6\}$, where smaller $\alpha$ induces stronger heterogeneity. We train ResNet-18 on CIFAR-10 and PyramidNet on CIFAR-100 under these heterogeneous settings. More details are provided in Section~\ref{appendix:subsec:noniid_exps} of the Appendix.

\begin{table}[h!]
\centering
\caption{Test errors ($\%$) in the non-IID settings}
\label{tab:noniid}
\resizebox{0.96\columnwidth}{!}{
\begin{tabular}{c|cc|cc}
         & \multicolumn{2}{c|}{ResNet-18 CIFAR-10} & \multicolumn{2}{c}{PyramidNet CIFAR-100} \\ \hline
Method   & Dir. 0.1    & Dir. 0.6   & Dir. 0.1    & Dir. 0.6    \\ \hline
SCAFFOLD &   $10.40_{\pm{0.05}}$   &  $6.52_{\pm{0.12}}$    &  $24.08_{\pm{0.09}}$   &   $19.95_{\pm{0.07}}$ \\
DPPF\textsubscript{SCAFFOLD}   &  $\mathbf{8.76_{\pm{0.26}}}$ &  $\mathbf{5.95_{\pm{0.08}}}$   &  $\mathbf{23.11_{\pm{0.17}}}$ &  $\mathbf{18.69_{\pm{0.22}}}$   \\ \hline
FedLESAM &  $10.63_{\pm{0.34}}$   &  $6.71_{\pm{0.20}}$  &  $23.89_{\pm{0.41}}$   &  $20.00_{\pm{0.15}}$   \\
DPPF\textsubscript{FedLESAM}    &  $\mathbf{9.08_{\pm{0.10}}}$  &  $\mathbf{6.40_{\pm{0.11}}}$    &  $\mathbf{23.36_{\pm{0.05}}}$  &   $\mathbf{19.59_{\pm{0.10}}}$
\end{tabular}
}
\end{table}

We report the test errors in Table~\ref{tab:noniid}. Across all model–dataset combinations and heterogeneity levels, incorporating DPPF consistently reduces test error relative to the base FL methods. Notably, the improvement is more pronounced under stronger heterogeneity (Dir. 0.1), suggesting that the push mechanism becomes increasingly beneficial in encouraging wider minima that can accommodate diverse worker-specific solutions. Furthermore, the gains are larger when DPPF is combined with SCAFFOLD compared to FedLESAM. While SCAFFOLD corrects client drift through control variates, it does not explicitly regularize curvature. In contrast, DPPF introduces a push force along disagreement directions, preventing synchronized convergence into sharp solutions. This complementary interaction could explain the larger performance gains.

\section{Final Remarks}
\label{sec:conclusion}

\textbf{Limitations.} DPPF has two limitations. First, it introduces two hyperparameters: the pull strength ($\alpha$) and push strength ($\lambda$). Nonetheless, our ablations show consistent gains across a broad range of $(\alpha,\lambda)$ (Appendix~\ref{appendix:subsec:hyperparam_sens}). Second, like soft-consensus methods, DPPF requires an extra parameter vector per worker to compute the average variable. Since this vector is only needed at communication rounds, it can be offloaded to CPU memory between communications, reducing GPU pressure at the cost of minor I/O overhead.

\textbf{Conclusion.} In this work, we address the performance limitations of communication-efficient distributed training methods by proposing DPPF, a training strategy that introduces a push mechanism to prevent worker entrapment in sharp valleys of the loss landscape and to collaboratively recover flat minima. DPPF outperforms both the synchronous gradient-averaging method and other baselines by a considerable margin, while attaining comparable performance to SAM. Future work could apply the push mechanism to ensemble learning and extending DPPF to federated and self-supervised settings presents promising research directions for general-purpose, scalable deep learning.

\section{Acknowledgement}
The authors acknowledge that the NSF Award \#2041872 sponsored the research in this paper.

\bibliography{references}
\newpage

\onecolumn

\title{Communication-Efficient Distributed Training for Collaborative Flat Optima Recovery in Deep Learning (Supplementary Material)}
\maketitle
\appendix

\section{Additional Related Works}
\label{appendix:add_related_works}

\textbf{Data Parallel Training of DNNs} Local SGD has been shown to match the variance reduction properties and convergence rate of DDP SGD, achieving linear speedups with the number of workers \citep{stich2019local}. In the conventional LocalSGD method, the communication period \(\tau\) is set at the beginning of training and kept constant throughout. A series of theoretical works has shown that the convergence rate of LocalSGD is inversely proportional to the communication period \(\tau\)~\citep{whyandwhenlocalsgd, stich2019local, yu2019parallel}, highlighting a trade-off between communication efficiency and model performance. This theoretical result has also been empirically validated in a study~\citep{ortiz2021trade}. Another notable work \citep{Gupta2020Stochastic} proposes a hybrid approach and suggests performing All-Reduce SGD until a certain level of performance is reached and then switching to LocalSGD for final model averaging for improved performance. However, \citep{ortiz2021trade} reports that the final performance highly depends on the transition point between the two training practices and usually, the optimal switching is late in the total training. Prior work has also focused on optimizing the communication period by incorporating adaptivity. \citep{kamp2014communication} proposed a policy to skip or perform a consensus update in local communication based on the worker parameter variance. \citep{haddadpour2019local} suggests linearly increasing the communication period as the training progresses. \citep{shen2021stl} proposes a scheme in which the communication period is doubled whenever a two-fold learning rate drop is applied based on the predefined milestones. On the contrary, \citep{wang2019adaptive} recommends starting with a high communication period and gradually decreasing toward the end of training.  Although increasing the communication period usually hurts optimization, some recent works theoretically showed the benefits of increased communication periods in terms of generalization.

\textbf{Flat Minima and Generalization} The concept of seeking flat minima dates back to \citep{hochreiter1994simplifying}. Since then, numerous studies have investigated the geometry of the loss landscapes of DNNs \citep{goodfellow2014qualitatively, dauphin2014identifying, baldassi2015subdominant, li2018visualizing} and the relationship between the flat minima and the generalization ability of the model \citep{keskar2016large, jastrzkebski2017three, andriushchenko2023modern}. One early approach \citep{keskar2016large} develops a sharpness metric that quantifies the maximum fluctuation in the loss value under controlled weight perturbations. The authors show that the minima from large-batch training are sharper and generalize worse than those from small-batch training. Another approach \citep{neyshabur2017exploring} opposes the definition of the sharpness metric proposed in \citep{keskar2016large}, criticizing it for falling short on accurately reflecting loss sharpness. They experiment with several norm-based sharpness measures that correlate well with generalization. Furthermore, \citep{jastrzkebski2017three} characterizes how learning rate and batch size influence the final minima. A comprehensive study examining over $40$ complexity measures from theoretical bounds and empirical studies, along with their causal relationship to generalization, is provided in \citep{fantastic_measures}. \citep{dinh2017sharp} points out that one needs to be careful in designing measures to quantify sharpness for DNNs with ReLU activations, as simple reparameterizations can alter the geometry of the landscape. Finally, \citep{andriushchenko2023modern} raises concerns about the validity of flatness measures in the modern DL. 

\textbf{Parameter Averaging} Parameter averaging is also a widely adopted practice in Federated Learning (FL) \citep{mcmahan2017communication}, where clients collaboratively train a DL model through a central server without sharing their local data due to privacy constraints.  However, FL faces challenges arising from data heterogeneity and the limited participation of clients in global updates, which can degrade performance. Early approaches~\citep{karimireddy2020scaffold, acarfederated, fan2022fedskip} introduced statistical mechanisms to mitigate discrepancies between local updates and the global objective. More recent works~ \citep{fedsam, FedGAMMA} have incorporated flat-minima-seeking optimization strategies into client objectives and locally estimated the sharpness of the global objective to improve generalization~\citep{fedsmoo, fedlesam}. Beyond FL, model averaging is also leveraged in ensemble learning, where multiple models are trained independently on the full dataset while periodically sharing their weights to enhance robustness and performance~\citep{jolicoeurpopulation, fournier2024wash}.

\section{Details of the Main Experiments}
\label{appendix:sec:details_of_experiments}
In this section, we offer further details on the primary experiments conducted, along with the best-performing hyperparameter configurations.

\textbf{Licenses:} Below we list the existing assets we use in our work and their corresponding license:
\begin{itemize}
    \item PyTorch Framework (version 2.6) - BSD‑3‑Clause
    \item CIFAR-10/100 Datasets - Apache-2.0
    \item ImageNet Dataset -  Custom license
\end{itemize}

\textbf{Computational Resources:} We run experiments using both our local systems and cloud GPU services. Locally, we have 3 machines dedicated to the experiments conducted in this work and each machine is equipped with 4 x GTX-1080 GPUs and they are connected over Ethernet. We also reserve machines with 4 x H-100 GPUs from cloud services, mainly for the ImageNet experiments.

\textbf{Codebase:} The code used to run the experiments is available at \url{https://github.com/tolgadimli/DPPF}.

\subsection{Generalization Gap vs. Sharpness Measures}
\label{appendix:subsec:gen_vs_sharpness}

In order to establish the correlation between different sharpness measures or indicators from the literature as well as MV and generalization gap (test error - train error \%) of DNNs, we vary several hyperparameters to obtain minima with different quality. \cref{appendix:table:sharpness_vs_gen} lists the hyperparameters we vary and their corresponding search grids. Note that \textit{model width} enables us to manipulate the model capacity as the input channels of the convolution layers are scaled linearly with the value \textit{model width} takes. Furthermore, we also repeat each configuration for 3 seeds (176, 448, 782) which makes the total number of experiment runs $729$. We train models for 300 epochs and apply $10$-fold learning rate drops at epochs $100$ and $200$. We discard any model that attains more than $1\%$ training error from further analysis.

\begin{table}[h]
\renewcommand{\arraystretch}{1.15}
\caption{List of hyperparameters to be varied.}
\label{appendix:table:sharpness_vs_gen}
\centering
\begin{tabular}{c|c}
\textbf{Hyperparameter} & \textbf{Search Grid} \\ \hline
Learning Rate           & $[0.05, 0.075, 0.1]$ \\
Weight Decay            & $[0, 0.0001, 0.01]$  \\
Momentum                & $[0.1, 0.5, 0.9]$    \\
Batch Size              & $[32, 128, 512]$     \\
Model Width             & $[4, 6, 8]$         
\end{tabular}

\end{table}

 The model we use is based on ResNet-18 architecture (aside from the varying convolutional channels) with skip connections and batchnorm layers. Additionally, since the model activation functions are ReLU, the model is scale-invariant \citep{dinh2017sharp}, i.e. the model output can be preserved with re-parameterization of the model. Hence, we normalize all model components (convolution layers, linear layers, learnable batchnorm parameters) such that their Frobenius norms are made $1$ before we conduct the sharpness analysis following \citep{bisla2022low}. This normalization ensures that the conclusions drawn from the analysis cannot be altered by simple rescaling of intermediate model layers, thereby reflecting the true representativeness of the sharpness measures. To assess how well the sharpness measures are aligned with the generalization gap, we use the Kendall correlation coefficient that quantifies the similarity between the orderings of two lists. Below, we provide details on the measures that we inherit from the literature and our measure MV.

\textbf{Shannon Entropy \citep{pereyra2017regularizing}} measures the confidence of a DNN based on the output distributions. Because having a confidence prediction signals that the model is overfitted, this sharpness measure can be expressed as negative of the Shannon Entropy which can be formulated as $\frac{1}{N}\sum_{i=1}^n \sum_{j=1}^c \phi_j(x; u_i) \log \phi_j(x; u_i)$ where $n$ is the number of samples, $u_i$ is the data sample $i$, $x$ is the model parameter vector, $ \phi_j(x; u_i)$ class $j$'s probability and $c$ is the number of classes. Also, note that this measure is not tied to the geometry of the landscape.

\textbf{$\epsilon$-Sharpness \citep{keskar2016large}} can be expressed as the difference between the maximum loss obtained with perturbed model parameters and the original parameters where the perturbation is limited to a box with size $\epsilon$ for each parameter. The parameters that attain the maximum loss in the perturbed region can be determined by first calculating a full-batch gradient and mapping it to the perturbation space.

\textbf{Fisher-Rao Norm \citep{liang2019fisher}} is a capacity measure that can be approximated as $\langle x, H x \rangle$ where $x$ is the model parameters and $H$ is the Hessian matrix, i.e.$H= \mathbb{E}_{\xi \sim D }[\nabla^2 f(x; \xi)]$. In PyTorch, the product $H x$ can be easily calculated by using the HVP (Hessian vector product) function.

\textbf{Low Pass Filter (LPF) \citep{bisla2022low}} accumulates the loss $M$ times due to Markov Chain Monte Carlo (MCMC) iterations for the convolution approximation. In each MCMC iteration, a random vector is drawn from Normal distribution, i.e.  $\varepsilon \sim \mathcal{N}(0, \sigma I)$, and the loss is calculated at $\theta + \varepsilon$ contributes to the total sum which is then averaged across all MCMC iterations. In the experiments, we set $\sigma=0.01$ and $M=100$.

\textbf{Hessian-based Measures \citep{fantastic_measures}} The measures $\lambda_{max}(H)$, Trace($H$), $||H||_{frob}$ are the maximum eigenvalue, trace, and the Frobenius norm of $H$, respectively. We use the Lanczos algorithm to approximate the Hessian matrix (with 3 draws), and then apply the operation (trace, Frobenius norm, etc.) specific to each measure to calculate its value.

\textbf{Inv. Mean Valley} is the additive inverse of the Mean Valley metric which measures the valley width by pushing workers away from the average point. The measure is explained in detail in \cref{sec:MV_measure}. We provide the pseudo-code that calculates this measure in \cref{appendix:alg:imv}. Additionally, we conduct a study to assess the sensitivity of our measure on its hyperparameter $\kappa$. In particular, we vary $\kappa\in \{1.5, 1.75, 2.0, 2.25, 2.5, 3.0, 3.5, 4.0 \}$ and re-calculate the correlation strength when the analysis is carried out on the model parameters trained with and without data augmentation.

\begin{table}[h]
\centering
\caption{Hyperparameter sensitivity analysis of the Inv. MV measure}
\label{appendix:table:inv_mv_sensitivity}
\begin{tabular}{c|cccccccc}
$\mathbf{\kappa}$   & \textbf{1.5} & \textbf{1.75} & \textbf{2.0} & \textbf{2.25} & \textbf{2.5} & \textbf{3.0} & \textbf{3.5} & \textbf{4.0} \\ \hline
\textbf{w/ Aug.}  &   0.590      &    0.606     &    0.616    &   0.614      &  0.612      &   0.613    &    0.610      &    0.604    \\ \hline
\textbf{w/o Aug.} &   0.516           &   0.503            &  0.485           &   0.489            &    0.489          &   0.488     &    0.486           &  0.482          
\end{tabular}
\end{table}

As can be seen from the results in Table~\ref{appendix:table:inv_mv_sensitivity}, the Inv. MV Measure consistently maintains its ability to exhibit strong correlation across a wide range of $\kappa$ values.

\begin{algorithm}[t]
    \caption{Inverse Mean Valley}
    \label{appendix:alg:imv}

    \Input{trained model parameters from $M$ workers $x_1, x_2, ..., x_M$; threshold value $\kappa$ for valley boundary detection ($\kappa=2$ by default); step size $s$ for the resolution of the line search ($s=0.1$ by default).}
    Normalize all model parameters to make them scale-invariant.

    $x_A \leftarrow 0$
    
    \For{$m = 1$ to $M$;}{
    $x_A \leftarrow x_A + \frac{x_m}{M}$
    }
    $l_A \leftarrow f(x_A; D_{train})$
    
    \For{$m = 1$ to $M$;}{
    $d_m \leftarrow \frac{x_m - x_A}{\| x_m - x_A \|_2}$
    
    $x_{m,b} \leftarrow x_A $
    
    \While{True}{
    $x_{m,b} \leftarrow x_{m,b} + s d_m$ 
    
    $l_{m,b} \leftarrow f(x_{m,b}; D_{train})$ 
    
    \If{$l_{m,b} \geq \kappa l_A$;}{
    break
    }
    }
    }

    $\Psi \leftarrow 0$
    
    \For{$m = 1$ to $M$;}{
    $\Psi \leftarrow \Psi + \frac{\| x_{m,b} - x_{A}\|_2}{M}$
    }
    \textbf{Output}: $-\Psi$

\end{algorithm}

\subsection{DPPF with Soft-Consensus Methods}
\label{appendix:subsec:dppf_vs_soft_consensus}

In the experiment set, we pair the wide-minima seeking pushing force, that results from incorporating Inv. MV regularization to the objective, with other Soft-Consensus Methods namely SimpleAvg, EASGD, LSGD, and MGRAWA. We refer to the paired versions with pushing force DPPF\textsubscript{SimpleAvg}, DPPF\textsubscript{EASGD}, DPPF\textsubscript{LSGD} and DPPF\textsubscript{MGRAWA}. As the underlying optimizer, we use SGD with a momentum value of $0.9$ and, weight decay of $1e-3$. We train ResNet-18 models on CIFAR-10 and CIFAR-100 methods for $400$ epochs. Per-GPU batch size is set to $128$. For 4-GPU experiments, the learning rate is set to $0.1$ and for 8-GPU experiments, the learning rate is scaled linearly with the total effective batch size hence it is $0.2$. We only use the basic image augmentations on the train dataset, i.e. random horizontal flip and random cropping from padded images with a padding value of $4$.

In all distributed methods, we fix the communication period to $\tau=4$. For standalone soft-consensus methods, we vary the pulling force $\alpha \in \{0.05, 0.1, 0.3, 0.5\}$ and we report that for each method $\alpha=0.05$ and $\alpha=0.1$ produces very similar test errors and $0.5$ performs the worst. For DPPF variants, we fix $\alpha=0.1$ and vary the strength of the pushing force $\lambda \in \{0.05, 0.1, 0.25, 0.5, 0.75\}$. We also repeat each experiment for $3$ different seed $182, 437, 965$ and each configuration's performance is determined by taking the average of the training statistics across seeds. In \ref{table:param_sharing_improvement}, we report the lowest average test error ($\%$) achieved across all configurations. In \ref{appendix:table:best_dppf_sc2}, we report the best $\lambda$ values for DPPF variants and as can be seen, $\lambda=0.5$ proves to be a solid value.

\begin{table}[h]
\renewcommand{\arraystretch}{1.15}
\centering
\caption{Best $\lambda$ values of DPPF variants in different settings.}
\label{appendix:table:best_dppf_sc2}
\resizebox{0.5\columnwidth}{!}{
\begin{tabular}{c|cc|cc}
     & \multicolumn{2}{c|}{\textbf{CIFAR-10}} & \multicolumn{2}{c}{\textbf{CIFAR-100}} \\ \hline
     & \textbf{4 GPUs}        & \textbf{8 GPUs}        & \textbf{4 GPUs}        & \textbf{8 GPUs}        \\ \hline
\textbf{DPPF\textsubscript{SimpleAvg}} &      0.5         &     0.5       &     0.5          &    0.75           \\
\textbf{DPPF\textsubscript{EASGD}}     &      0.75         &     0.75       &     0.5          &    0.5       \\
\textbf{DPPF\textsubscript{MGRAWA}}    &      0.5         &     0.5       &     0.5          &    0.1       
\end{tabular}}
\end{table}

\begin{remark}
\label{rem:LSGD}
In LSGD, the pull force is directed toward the leader, while the push force acts away from the average variable. This misalignment leads to instability. However, if the pushing force is redefined to oppose the leader's direction, the method converges. For example, for $4$ workers on CIFAR-10 achieves the test error of $4.07\pm0.15\%$ and on CIFAR-100 it achieves $20.81\pm0.39\%$, which is better than vanilla LSGD that uses no pushing force.
\end{remark}

\subsection{Comparison with Communication-Efficient Methods}
\label{appendix:subsec:comm_eff}
In the experiments with LocalSGD, we use fixed communication periods \(\tau\) as specified in Table~\ref{dppf:table:comm_eff}, and set \(\alpha = 1.0\) to achieve full consensus at each communication step.

For employing QSR on top of LocalSGD, we adopt the \(\beta\) values reported in~\citep{guquadratic}. (Note: the original paper denotes this hyperparameter as \(\alpha\); we use \(\beta\) here to avoid confusion with the distributed pull strength.) The authors tuned \(\beta \in \{0.2, 0.25, 0.3\}\) under a maximum learning rate of \(\eta_{\text{max}} = 0.8\). To preserve the same scheduling dynamics in our setting, we match the ratio \(\frac{\beta}{\eta_t}\) in the QSR formulation:

\[
\tau_t = \max \left\{ \tau_{\text{base}}, \left\lfloor \left( \frac{\beta}{\eta_t} \right)^2 \right\rfloor \right\}.
\]

Based on this, we scale the \(\beta\) values proportionally to our chosen \(\eta_{\text{max}}\) in each training setup. Specifically, we search:

\begin{itemize}
    \item \(\beta \in \{0.025, 0.03125, 0.0375\}\) when \(\eta_{\text{max}} = 0.1\) in ResNet-18 training,
    \item \(\beta \in \{0.050, 0.0625, 0.075\}\) when \(\eta_{\text{max}} = 0.2\) in PyramidNet training,
    \item \(\beta \in \{0.1875, 0.234375, 0.28125\}\) when \(\eta_{\text{max}} = 0.75\) in ResNet-50 training.
\end{itemize}

\paragraph{Experiments with CNNs.} We train ResNet-{18,50,101}, and ViT models on the CIFAR-10, CIFAR-100, and ImageNet datasets using 4-GPU (GTX-1080), 8-GPU (GTX-1080), and 4-GPU (H100) setups, respectively. The batch sizes are set to 512 (CIFAR-10), 1024 (CIFAR-100), and 3072 (ImageNet). ResNet-18 and PyramidNet models are trained for 400 epochs, and ResNet-50 is trained for 200 epochs, following the training recipe in~\citep{SAM}. In the ImageNet experiments, we use a weight decay of 0.0001, while for CIFAR-10 and CIFAR-100, the weight decay is set to 0.001. To reduce computation, we tune the optimal $\beta$ for QSR on PyramidNet with CIFAR-100, and scale it proportionally with the learning rate in other settings.

For DPPF, we fix \(\alpha = 0.1\) and search over \(\lambda \in \{0.5, 0.75, 1.0\}\) for CIFAR-10 and CIFAR-100 experiments, after seeing the trend in the first experiment set presented in Section~\ref{appendix:subsec:dppf_vs_soft_consensus}. For ImageNet experiments with ResNet-{50,101}, we fix the ratio between push ($\lambda$) and pull ($\alpha$) forces to 10, i.e, $\lambda /\alpha = 10$ and search over the following grid: $(\lambda, \alpha) \in \{(0.1, 1.0), (0.5, 5.0), (0.9, 9.0)\}$. For LocalSGD and DPPF, we evaluate fixed communication periods \(\tau \in \{4, 8, 16\}\). For LocalSGD+QSR, we search over \(\tau \in \{2, 4, 8\}\), consistent with the experimental protocol in~\citep{guquadratic}. We repeat all the experiments for $3$ seeds. We report the best hyperparameters in Table~\ref{appendix:table:qsr_dppf_optimal_hyperparams}.

\begin{table}[h]
\renewcommand{\arraystretch}{1.15}
\centering
\caption{Optimal hyperparameters reported for LocalSGD + QSR and DPPF}
\label{appendix:table:qsr_dppf_optimal_hyperparams}
\resizebox{0.8\columnwidth}{!}{
\begin{tabular}{c|ccc|ccc}
                     & \multicolumn{3}{c|}{LocalSGD + QSR ($\beta$)} & \multicolumn{3}{c}{DPPF ($\alpha, \lambda$)}       \\ \cline{2-7} 
                     & $\tau_{\text{base}}=2$   & $\tau_{\text{base}}=4$   & $\tau_{\text{base}}=8$  & $\tau=4$ & $\tau=8$ & $\tau=16$ \\ \hline
ResNet-18 - C10      & 0.025  &  0.03125  & 0.03125 &   (0.1,0.5)    &   (0.1,0.5)    &   (0.1,0.5)   \\
PyramidNet - C100    & 0.050  &  0.0625  &  0.0625  &   (0.1,1.0)   &   (0.1,0.75)   &    (0.1,0.75)   \\
ResNet-50 - ImageNet & 0.1875  &  0.234375  &  0.234375   &  (0.5, 5.0)    &  (0.5, 5.0)   &  (0.9, 9.0) \\ 
ResNet-101 - ImageNet & 0.1875  &  0.234375  &  0.234375   &  (0.9, 9.0)    &  (0.9, 9.0)   &  (0.9, 9.0) \\ 
\end{tabular}
}
\end{table}

\paragraph{Experiments with ViT.} 
Unlike the experiments with CNNs, we use the AdamW optimizer to train the ViT model, as it is the widely adopted choice for transformer-based architectures. Specifically, we use a \texttt{timm}-provided ViT variant, \texttt{vit\_relpos\_medium\_patch16\_224.sw\_in1k}, which consists of 12 layers and 39M parameters. In our experiments, we do not use pretrained weights; instead, we initialize the model parameters randomly according to the specified seed.

We begin with a hyperparameter search to determine the optimal learning rate and weight decay values. The model is trained on four H100 GPUs with a per-GPU batch size of 512 using the DDP training, the standard synchronous gradient averaging method. The optimal learning rate and weight decay are identified as $0.0005$ and $0.01$, respectively when cosine annealing scheduling is used. For DPPF, we initially used the coefficients found in the ResNet-101 setting from Table~\ref{appendix:table:qsr_dppf_optimal_hyperparams}. Although we observed improvements over DDP AdamW training with $\alpha = 0.9$ and $\lambda = 9.0$ up to $0.8\%$, we decided to further increase the final valley width to explore a wider valley for the ViT model, since it has a higher parameter vector norm than ResNet-101. Specifically, we increased $\lambda$ by a factor of ten, from $9.0$ to $90.0$, while keeping $\alpha = 0.9$ fixed, thereby expanding the valley width from $\lambda / \alpha = 10.0$ to $100.0$. The DPPF results reported in Section~\ref{sec:dppf:comp_other_comm_eff} are obtained using $\alpha = 0.9$ and $\lambda = 90.0$ for all communication periods $\tau \in \{4, 8, 16\}$. For a fair comparison, we also experimented with increasing the $\beta$ coefficient of QSR by a factor of ten; however, this adjustment did not lead to any notable improvement in performance. 

\subsection{Comparison with DDP and SAM}
\label{appendix:subsec:dppf_sgd_sam}

Here we provide more details on the experiments that we compare DPPF with DDP, the most commonly adopted distributed training method. Note that we refer to DPPF\textsubscript{SimpleAvg} by DPPF as mentioned in \ref{sec:dppf:comp_other_comm_eff}. In this experiment set, we also switch the underlying optimizer from SGD to SAM which minimizes the maximum loss value attained in a region and it is known to encourage flatter, good-quality minima. These four combinations - DDP SGD, DPPF SGD, DDP SAM, and DPPF SAM — allow us to compare the effectiveness of flat-minima-seeking updates incorporated into the optimization at different levels. DDP SGD does not have any flat minima seeking mechanisms, DPPF SGD encourages wide minima discovery in the distributed level, DDP SAM promotes flatness with its local optimizer's objective and DPPF SAM is equipped with both distributed and local level flat, wide minima encouraging mechanisms. 

\begin{table}[h]
\renewcommand{\arraystretch}{1.15}
\centering
\caption{\centering Optimal $\lambda$ values of DPPF variants and optimal $\rho$ values of DDP SAM in various settings.}
\label{appendix:table:best_dppf_sc}
\resizebox{0.45\columnwidth}{!}{
\begin{tabular}{ccc|cc|cc}
                                                                                                   &                                                 &                   & \multicolumn{2}{c|}{\textbf{CIFAR-10}} & \multicolumn{2}{c}{\textbf{CIFAR-100}} \\ \cline{4-7} 
                                                                                                   &                                                 &                   & \textbf{4 GPUs}    & \textbf{8 GPUs}   & \textbf{4 GPUs}    & \textbf{8 GPUs}   \\ \hline
\multicolumn{1}{c|}{\multirow{3}{*}{\textbf{RN18}}}                                                & \multicolumn{1}{c|}{\multirow{2}{*}{$\lambda$}} & \textbf{DPPF SGD} &   0.5      &      0.5        &       0.5        &        0.75        \\
\multicolumn{1}{c|}{}                                                                              & \multicolumn{1}{c|}{}                           & \textbf{DPPF SAM} &       0.05       &     0.05        &       0.1      &   0.15       \\ \cline{2-7} 
\multicolumn{1}{c|}{}                                                                              & \multicolumn{1}{c|}{$\rho$}                     & \textbf{DDP SAM}  &        0.1        &      0.2         &      0.2          &      0.2         \\ \hline
\multicolumn{1}{c|}{\multirow{3}{*}{\textbf{\begin{tabular}[c]{@{}c@{}}WRN\\ -16x8\end{tabular}}}} & \multicolumn{1}{c|}{\multirow{2}{*}{$\lambda$}} & \textbf{DPPF SGD} &       0.25         &     0.25         &      0.5           &     0.5          \\
\multicolumn{1}{c|}{}                                                                              & \multicolumn{1}{c|}{}                           & \textbf{DPPF SAM}  &      0.05         &     0.05         &      0.15           &     0.1           \\ \cline{2-7} 
\multicolumn{1}{c|}{}                                                                              & \multicolumn{1}{c|}{$\rho$}                     & \textbf{DDP SAM}  &        0.1        &        0.2        &         0.2      &        0.2        \\ \hline
\multicolumn{1}{c|}{\multirow{3}{*}{\textbf{PyNet}}}                                               & \multicolumn{1}{c|}{\multirow{2}{*}{$\lambda$}} & \textbf{DPPF SGD} &          0.5     &      0.5        &        0.75         &     0.75           \\
\multicolumn{1}{c|}{}                                                                              & \multicolumn{1}{c|}{}                           & \textbf{DPPF SAM} &          0.1       &     0.1          &      0.15              &    0.15               \\ \cline{2-7} 
\multicolumn{1}{c|}{}                                                                              & \multicolumn{1}{c|}{$\rho$}                     & \textbf{DDP SAM}  &       0.1         &      0.2        &       0.2         &          0.2    
\end{tabular}}
\end{table}

We train ResNet-18, WideResNet-16x8 (WRN-16x8) and Pyramidnet-110 additive with $\alpha$=270 (PyNet(110,270)) models on CIFAR-10 and CIFAR-100 datasets in 4-GPU and 8-GPU setups. For the methods with SGD, the models are trained for $400$ epochs with a momentum value of $0.9$ and a weight decay coefficient of $0.001$. We want to note that most papers developing SAM variants, including the original SAM paper, report $0.0005$ as the optimal weight decay factor for the SGD optimizer. In our experiments, we observe that setting SGD's weight decay to $0.0005$ undermines its performance and report $0.001$ as its optimal weight decay value. The per-GPU batch size is fixed at $128$ in all experiments and the learning rate is scaled linearly with the total effective batch size. Particularly, the learning rates are $0.1$ and $0.2$ for the 4-GPU and 8-GPU experiments respectively. For the settings with SAM, the models are trained for $200$ epoch since a single iteration of SAM is equivalent to double SGD iterations due to gradient ascent and descent steps. SAM's underlying optimizer is also SGD with the same hyperparameters specified previously. Again, we only use the basic image augmentations on the train dataset, i.e. random horizontal flip and random cropping from padded images with a padding value of $4$. 

For DDP SAM, we search SAM's ascent step coefficient $\rho \in \{0.05, 0.1, 0.2\}$. Although \citep{SAM} searches $\rho$ in a wider grid, only these $\rho$ values are reported as optimal in various settings. We initially also experimented with $\rho \in \{0.05, 0.1, 0.2, 0.3, 0.4\}$ with ResNet-18 and observed that higher values than $\rho=0.2$ starts hurting the performance on both CIFAR-10 and CIFAR-100. Hence we ultimately settle on $\rho \in \{0.05, 0.1, 0.2\}$. 

For DPPF SGD, we use the same search space as before $\lambda \in \{0.05, 0.1, 0.25, 0.5, 0.75\}$. For DPPF SAM however, we fix $\rho=0.1$ and then opt for a more conservative search space as two wide-minima seeking mechanisms might overwhelm and hinder the minimization of classification loss. Particularly, we search $\lambda \in \{0.03, 0.05, 0.1, 15, 0.2\}$ in the DPPF SAM configuration. We also fix the pulling force $\alpha = 0.1$. Ideally, all hyperparameters should be searched jointly, but this would exponentially increase the number of experiments. Therefore, we focus on varying $\lambda$ alone which is the pushing force. We believe the generalization performance of this setting could be further improved with a more thorough, comprehensive hyperparameter search that also includes varying $\rho$ and $\alpha$. Similar to the previous experiments set, we run each configuration for 3 different seeds $182, 437, 965$ and each configuration's performance is determined by averaging the statistics across seeds. In Table~\ref{appendix:table:best_dppf_sc}, we share the hyperparameters of the best configurations for reproducibility.

\section{Details of the Ablation Studies}
Here, we give details on the ablation studies presented in the main body of the paper. In all
 ablation studies, we run training or conduct analysis in a 4-GPU training setup unless otherwise stated.
 
\subsection{Pull-Push Mechanism}
\label{appendix:subsec:pull_push_mechanism}
Here we give details on the experiments to analyze the importance of pushing mechanisms and the interplay between the pull-push forces throughout the training. First, to demonstrate the necessity of our pushing force, we compare DPPF\textsubscript{SimpleAvg} that has pulling force $\alpha=0.1$ and pushing force $\lambda=0.5$ with vanilla SimpleAvg methods that have weaker pulling forces, particularly $\alpha \in \{ 0.001, 0.005, 0.01, 0.05 \}$. We train ResNet-18 models on the CIFAR-100 dataset in these configurations for $3$ seeds $182, 437, 965$ and average test errors curves across seeds to obtain the plot in \ref{fig:ablation_weak_lambda_1} where the shaded regions indicate the standard deviation. We also log the Euclidian distance workers to the average variable during the training at every iteration, i.e. $\| x_m - x_A \|_2$ for each worker $m$. For \ref{fig:ablation_weak_lambda_2}, we calculate the simplified MV, which is the average of these distances and plot its change with respect to training iterations.

In addition to logging the distance between each worker, we also record the strength of the applied pulling force and the pushing force. At each iteration, we compare the strength of these two forces to characterize the interplay between them. The results are presented in \ref{fig:ablation_push_pull_figure}.

\subsection{How to Schedule The Pushing Force?}
\label{appendix:subsec:scheduling_lambda}
We train ResNet-18 models on the CIFAR-100 dataset to compare the effect of different schedulings of the pushing force $\lambda$ to the end performance. Particularly, we compare three schedulings: fixed, decreasing and increasing throughout the training. These schedulings are plotted in \ref{appendix:fig:schedule_schedules} when $\lambda=0.5$ and below, we provide more details to each scheduling:

\begin{itemize}
    \item \textbf{Fixed:} The strength of the pushing force $\lambda$ is kept constant throughout the training.
    \item \textbf{Decreasing:} The $\lambda$ value is decayed in parallel with the learning rate. Since we are using cosine annealing scheduler for the learning rate, at any iteration $t$, $\lambda_t = \frac{\lambda}{2} \left ( 1 + \cos \left ( \frac{t}{T}\pi \right )  \right )$ where $T$ is the total number of iterations.
    \item \textbf{Increasing:} In this setting, the strength of $\lambda$ is amplified towards the end of the training. We again base the amplification on the learning rate for simplicity and use flipped cosine annealing for scheduling. More specifically, $\lambda_t = \frac{\lambda}{2}\left ( 1 - \cos \left ( \frac{t}{T}\pi \right )  \right )$ where $T$ is the total number of iterations.
\end{itemize}

\begin{figure}[htp]
  \centering
  \begin{subfigure}[t]{0.32\textwidth}
    \includegraphics[width=\linewidth]{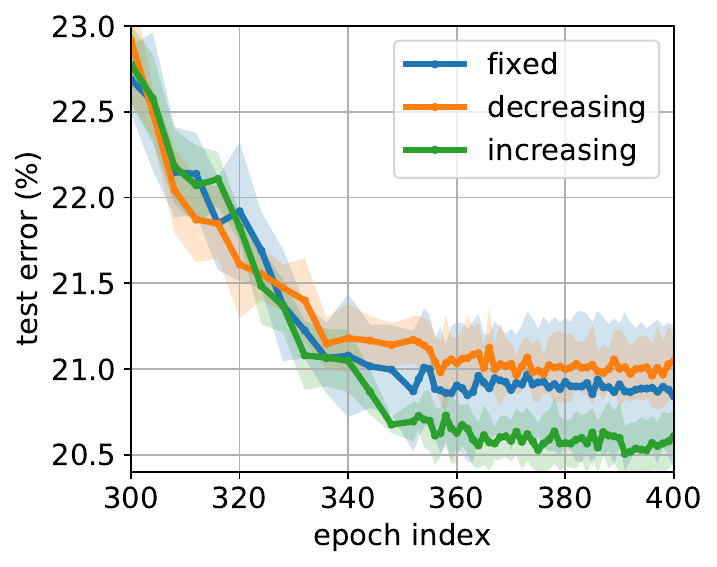}
    \caption{Test errors (\%)}
    \label{appendix:fig:schedule_errors}
  \end{subfigure}
  \hfill
   \begin{subfigure}[t]{0.32\textwidth}
    \includegraphics[width=\linewidth]{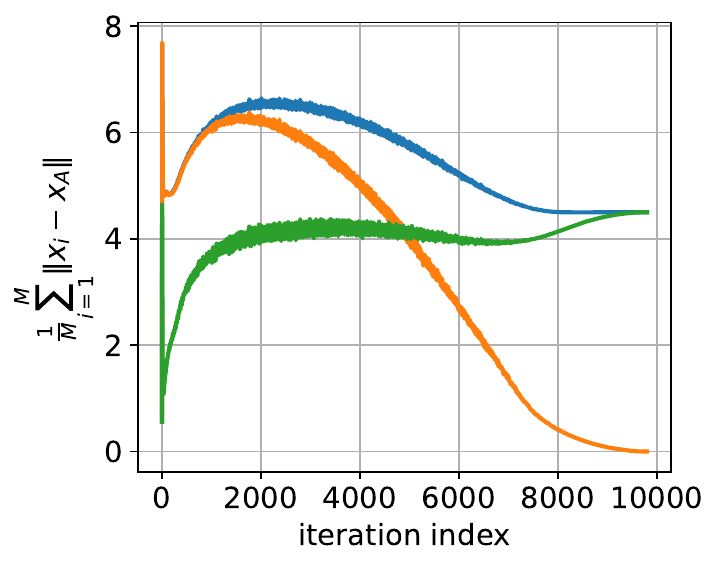}
    \caption{Consensus distance}
    \label{appendix:fig:schedule_valley_change}
  \end{subfigure}
\hfill
  \begin{subfigure}[t]{0.32\textwidth}
    \includegraphics[width=\linewidth]{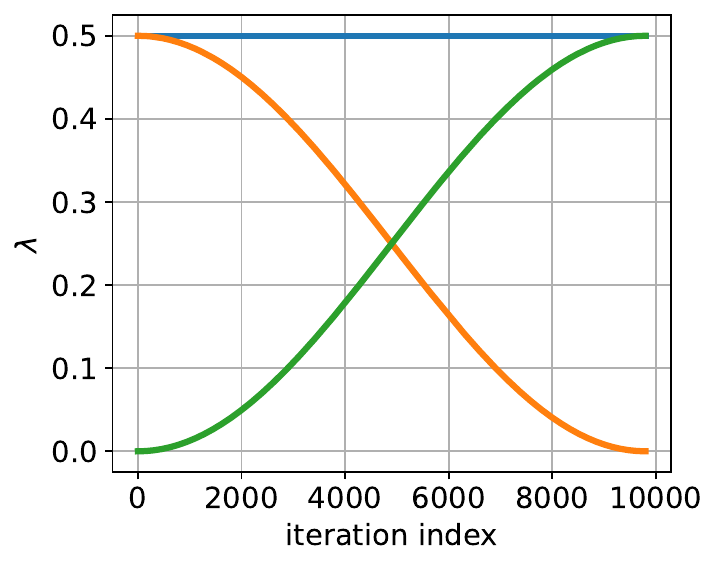}
    \caption{Change of $\lambda$}
    \label{appendix:fig:schedule_schedules}
  \end{subfigure}
  \caption{Comparison of different schedulings.}
\end{figure}

For each scheduling, we repeat the experiment for $5$ different seeds $42, 182, 437, 965, 1283$ and compare the test error of different scheduling averaged over seeds. The results in \ref{appendix:fig:schedule_errors} show that increasing the strength of $\lambda$ towards the end of training is more effective, as recovering wide basins becomes more important at the end. More specifically, the test errors and standard deviations, calculated across $5$ seeds, are $20.84_{\pm 0.41}$, $21.05_{\pm 0.17}$,  $20.61_{\pm 0.15}$ for fixed, increasing, and decreasing schedules, respectively.

\subsection{Details on the Non-IID Experiments}
\label{appendix:subsec:noniid_exps}

To simulate heterogeneous client distributions, we adopt the standard Dirichlet-based partitioning protocol used in the federated learning literature~\cite{karimireddy2020scaffold,fedlesam,fedsam}. 
Given $M=4$ workers, we allocate class proportions for each worker by sampling from a Dirichlet distribution with concentration parameter $\alpha \in \{0.1, 0.6\}$. 
Smaller $\alpha$ induces stronger heterogeneity which yields workers that concentrate on a subset of classes. 
The sampled class proportions are used to split the training data once at initialization, and the resulting partition is kept fixed throughout training (i.e., no reshuffling across epochs unlike the IID experiments).

We evaluate on the following setting:
\begin{itemize}
    \item CIFAR-10 with ResNet-18,
    \item CIFAR-100 with PyramidNet(110, 270).
\end{itemize}
All models use standard CIFAR data augmentation (random crop with padding and random horizontal flip) and normalization. 
For ResNet-18, the per-worker batch size is 256, and for PyramidNet it is 128. In all settings, we use SGD with momentum as the local optimizer, with initial learning rate 0.1, momentum 0.9, and weight decay $10^{-3}$. A cosine learning rate schedule is applied over 200 training epochs.  We use $M=4$ workers and communication period $\tau=16$, meaning each worker performs $\tau$ local SGD steps before synchronization. We report mean and standard deviation over three independent runs with random seeds $\{182, 437, 965\}$.

When coupling DPPF with SCAFFOLD or FedLESAM (denoted DPPF$_{\text{SCAFFOLD}}$ and DPPF$_{\text{FedLESAM}}$), we replace the standard FedAvg-style aggregation step with the DPPF update defined in Equation~\ref{eq:single_line}. 

Concretely, after $\tau$ local updates are performed according to the underlying FL solver, the worker parameters $\{x_m\}_{m=1}^M$ are used to compute the global average $x_A$.  The DPPF pull–push transformation is then applied to the worker deviations before aggregation.  All other components of SCAFFOLD and FedLESAM (e.g., control variates or sharpness-aware perturbations) remain unchanged. Thus, DPPF acts purely at the aggregation level and does not modify the internal optimization dynamics of the base FL solvers.

For FedLESAM, we tune the perturbation coefficient $\rho \in \{0.1, 0.05, 0.001, 0.0005\}$ and select $\rho = 0.001$ based on validation performance. For all DPPF variants, we fix the pull coefficient to $\alpha = 0.9$ and sweep only the push strength $\lambda$. For DPPF$_{\text{SCAFFOLD}}$, we search $\lambda \in \{0.9, 1.8, 2.7, 3.6\}$, corresponding to target valley-width ratios $\lambda/\alpha \in \{1, 2, 3, 4\}$. We find $\lambda = 1.8$ to perform best for ResNet-18 and $\lambda = 3.6$ for PyramidNet. For DPPF$_{\text{FedLESAM}}$, larger push strengths yield diminishing returns. 
Since this variant combines two flatness-promoting mechanisms, we perform a more conservative search over $\lambda \in \{0.1, 0.3, 0.45, 0.6\}$, corresponding to $\lambda/\alpha \in \{1/9, 1/3, 1/2, 2/3\}$. 
We find $\lambda = 0.6$ to perform best across all settings.

\section{Additional Results and Ablation Studies}

\subsection{Ablation Study on the Second Term}
\label{appendix:subsec:mv_second_term}
In Section~\ref{sec:dppf}, we present the update rule that arises from the Simplified MV ($R$) term in the objective. However, in practice we execute the simplified update rule by only keeping the first term. Let us consider the full, original update with both terms. The full update expression is as follows (as proven in Section~\ref{appendix:subsec:mv_update_rule}):
\begin{equation*}
    \frac{\partial R}{\partial x_m} = - \frac{\lambda}{M^2} \left ( M\frac{d_m}{\|d_m\|}  - \sum_{j =1}^{M} \frac{ d_j }{\|d_j\|} \right ).
\end{equation*}

where $d_m = x_m - x_A$. In practice, we drop the second term in the parentheses because when the workers are symmetrically spread around the average variable, its value is close to $0$. In such a case, the overall update can be approximated as:
\begin{equation*}
    - \frac{\lambda}{M^2} \left ( M\frac{d_m}{\|d_m\|}  - \sum_{j=1}^{M} \frac{ d_j }{\|d_j\|} \right ) \approx - \frac{ \lambda }{M} \frac{d_m}{\|d_m\|}
\end{equation*}
We also empirically check if this is the case. Let $T_1 = - \frac{\lambda}{M} \frac{d_m}{\|d_m\|}$ and $T_2 =  \frac{\lambda}{M^2} \sum_{j=1}^{M} \frac{ d_j }{\|d_j\|}$ hence the overall update can be expressed as $T_1 + T_2$. We run the experiment with $M=4$ workers and plot how the Euclidean norms of $T_1$, $T_2$, and $T_1 + T_2$ change during the training. To check the validity of our claim empirically, we also scale the norm of $T_1$. The results in Figure~\ref{appendix:fig:second_term} reveal that indeed the simplified expression $-\lambda \frac{1}{M} \frac{d_m}{\|d_m\|}$ is a good proxy to the actual update. Although some fluctuations are not captured, as perfect symmetry of workers around the average variable is not always ensured, we did not observe any change in the final performance. Besides, the simplified update is more communication-efficient as the calculation of the second term requires either an additional communication round among the workers, or a costlier communication to retrieve the model copies from all the workers.
\begin{figure}[H]
    \centering
    \includegraphics[width=0.45\columnwidth]{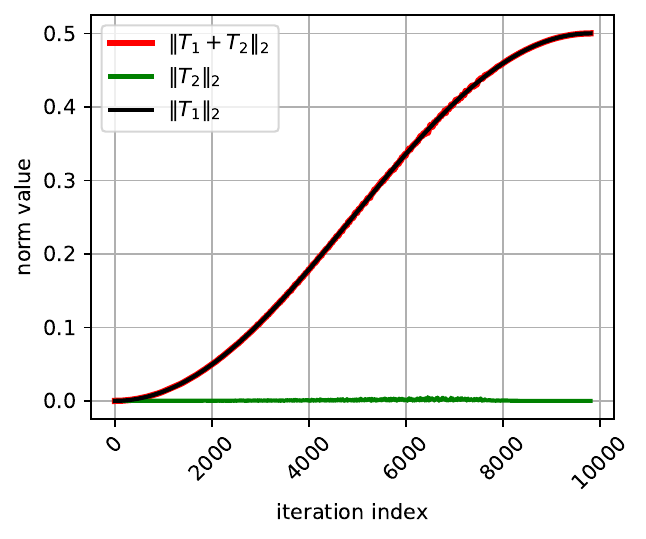} %
    \caption{Ablation on the presence of the second term}
    \label{appendix:fig:second_term}
\end{figure}

\subsection{DPPF's Hyperparameter Sensitivity and Results Supporting Theorem 2}
\label{appendix:subsec:hyperparam_sens}

In this section, we investigate how sensitive the DPPF is to the hyperparameter selections. Particularly, DPPF has two hyperparameters: the pull $\alpha$ and the push $\lambda$ force strengths. The theoretical analysis reveals that the final valley width found by DPPF is governed by the ratio of the push and pull forces, i.e., $\lambda/\alpha$. We consider training the PyramidNet(270,110) on CIFAR-100 for 400 epochs, a model with more than enough capacity that can potentially suffer from overfitting. In our first sensitivity analysis, we fix $\alpha=0.5$ and try different values of $\lambda$ for DPPF training. In particular, we use $\lambda \in \{ 0.1,0.25, 0.5, 1.0, 2.5, 5, 7.5, 10\}$ so that the DPPF finds valleys with different width. We share the final test errors (averaged across 3 seeds) and the valley width side-by-side in Figure~\ref{appendix:hypersens:fixed_alpha}.

\begin{figure}[h]
  \centering
  \begin{subfigure}[t]{0.5\textwidth}
    \includegraphics[width=\linewidth]{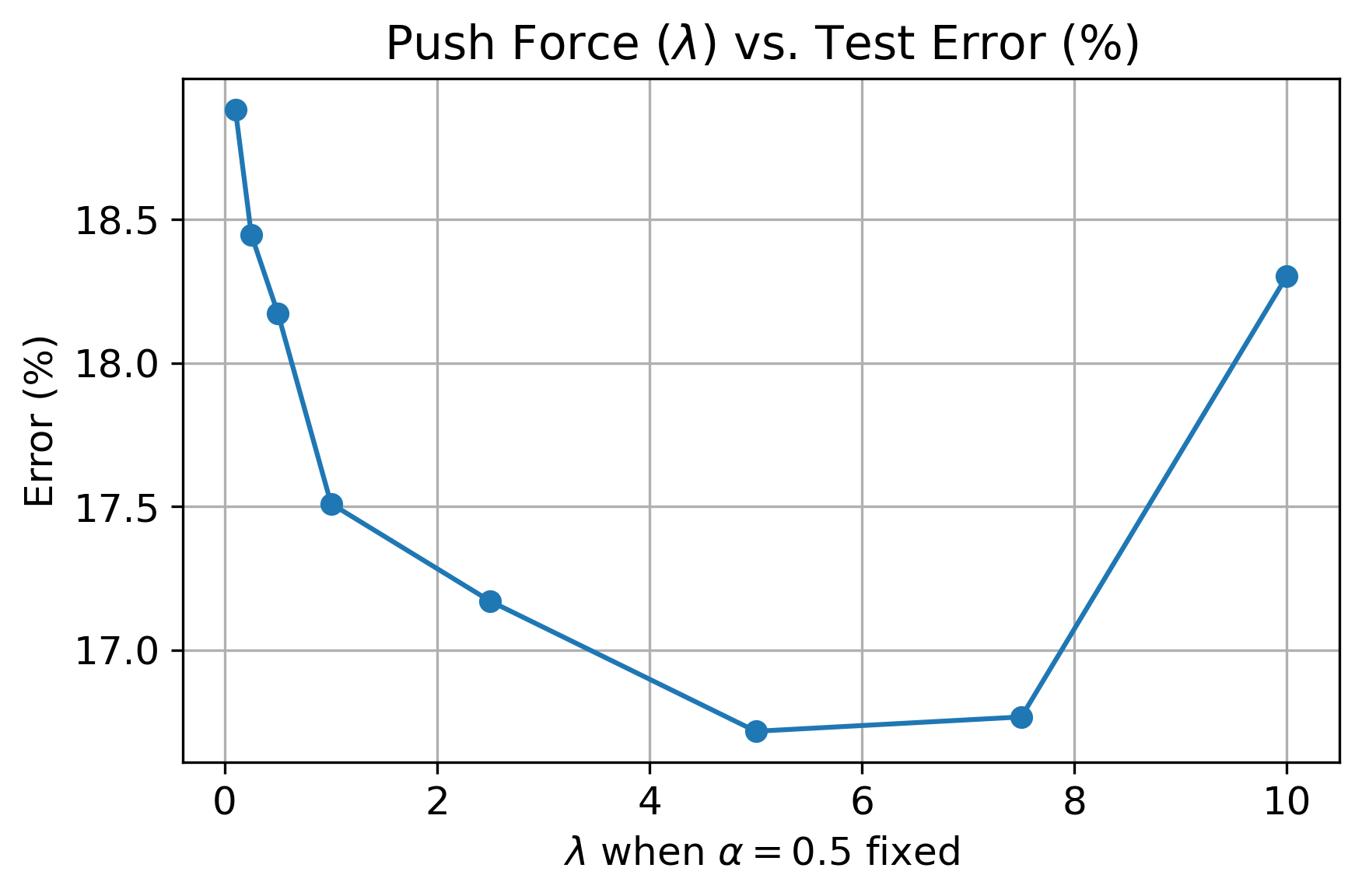}
    \caption{Test errors (\%)}
    \label{appendix:hypersens:fixed_alpha_error}
  \end{subfigure}
  \hfill
  \begin{subfigure}[t]{0.4\textwidth}
    \includegraphics[width=\linewidth]{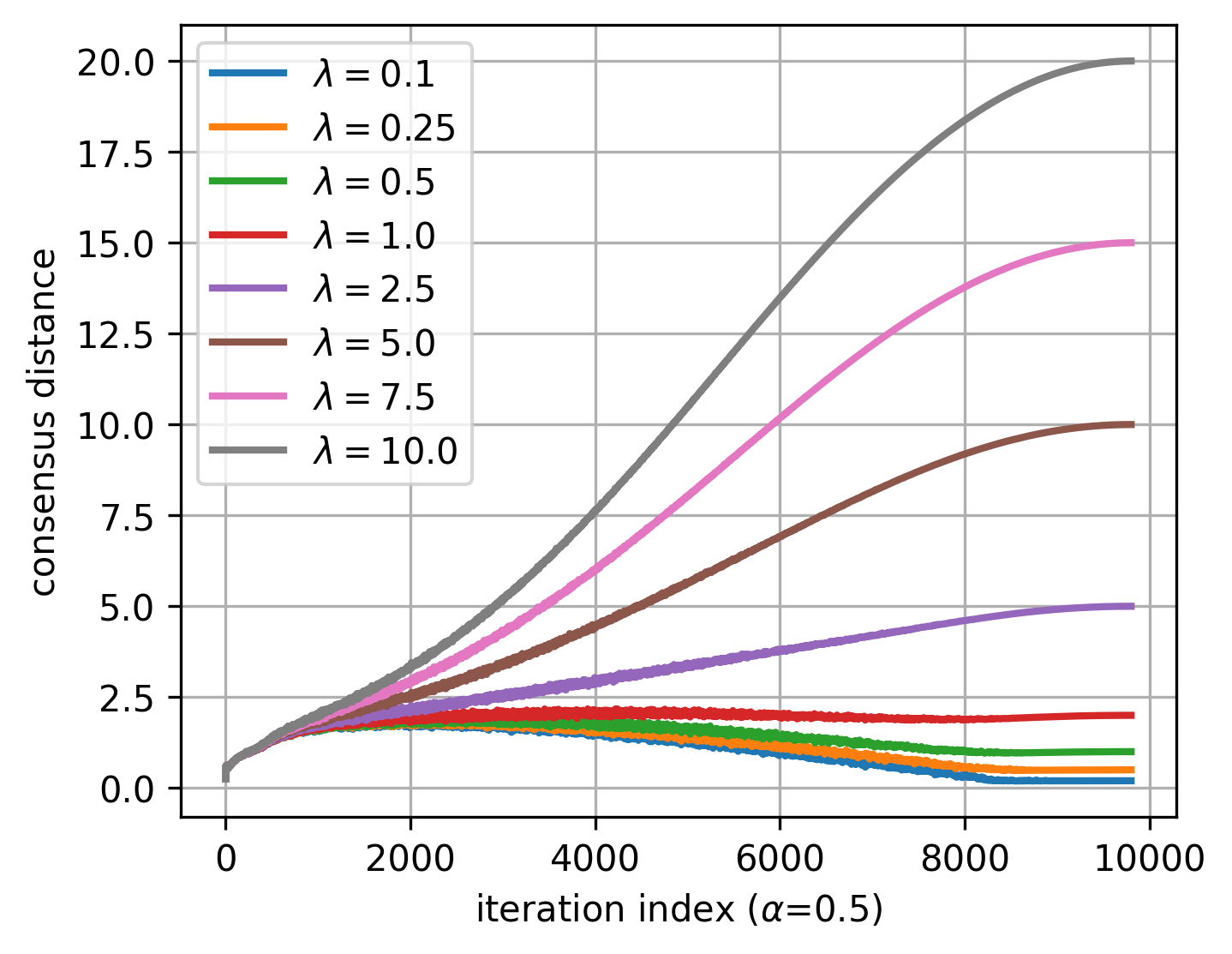}
    \caption{Valley width}
    \label{appendix:hypersens:fixed_alpha_valley}
  \end{subfigure}
  \caption{PyramidNet training on CIFAR-100 dataset with 4 workers.}
  \label{appendix:hypersens:fixed_alpha}
\end{figure}

As shown in Figure~\ref{appendix:hypersens:fixed_alpha}, while both extremely narrow valleys ($\lambda < 1$) and overly wide basins ($\lambda > 8$) lead to suboptimal generalization, we observe that across a broad range of intermediate values ($1 \leq \lambda \leq 8$), the push force introduced by DPPF—when the pull strength is fixed at $\alpha = 0.5$—consistently improves test performance. For reference, standard DDP-SGD (i.e., synchronous gradient averaging) achieves a test error of $19.18\%$ after training PyramidNet on CIFAR-100 for 400 epochs. Moreover, the trend in Figure~\ref{appendix:hypersens:fixed_alpha} aligns with the claim made in Theorem~\ref{main:thm:pac_bayes}, which states that, under mild technical conditions, increasing the valley width leads to better generalization—an effect that is clearly visible up to approximately $\lambda = 5$. In Figure~\ref{appendix:hypersens:fixed_alpha2}, we present statistics on how the norm of the average variable ($x_A$) evolves during training, along with the ratio between the final valley width and the norm of the average variable ($\|x_A\|_2$), for different values of $\lambda$ while keeping $\alpha = 0.5$ fixed again.

\begin{figure}[h]
  \centering
  \begin{subfigure}[t]{0.46\textwidth}
    \includegraphics[width=\linewidth]{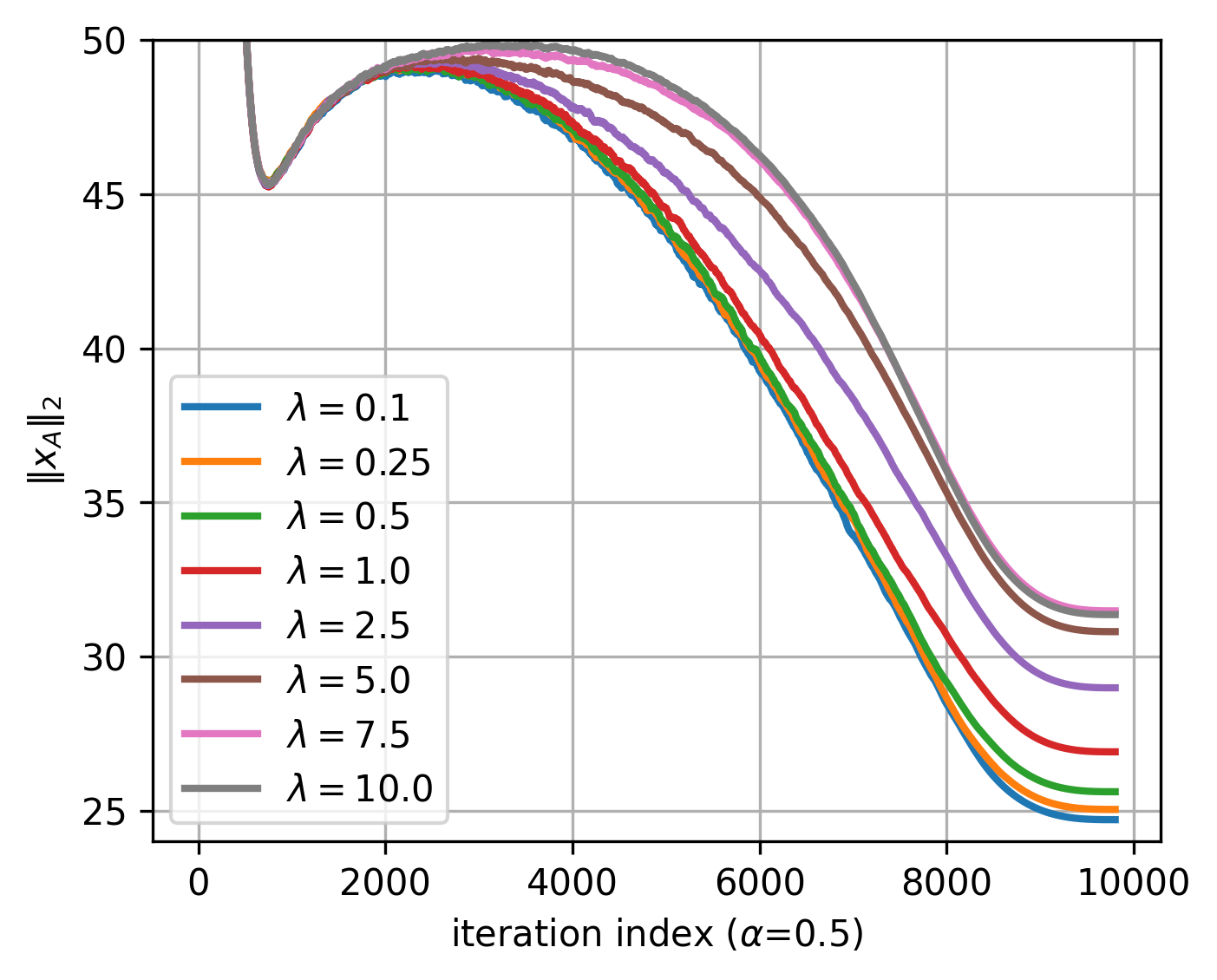}
    \caption{Average model norm}
    \label{appendix:hypersens:fixed_alpha_norm}
  \end{subfigure}
  \hfill
  \begin{subfigure}[t]{0.48\textwidth}
    \includegraphics[width=\linewidth]{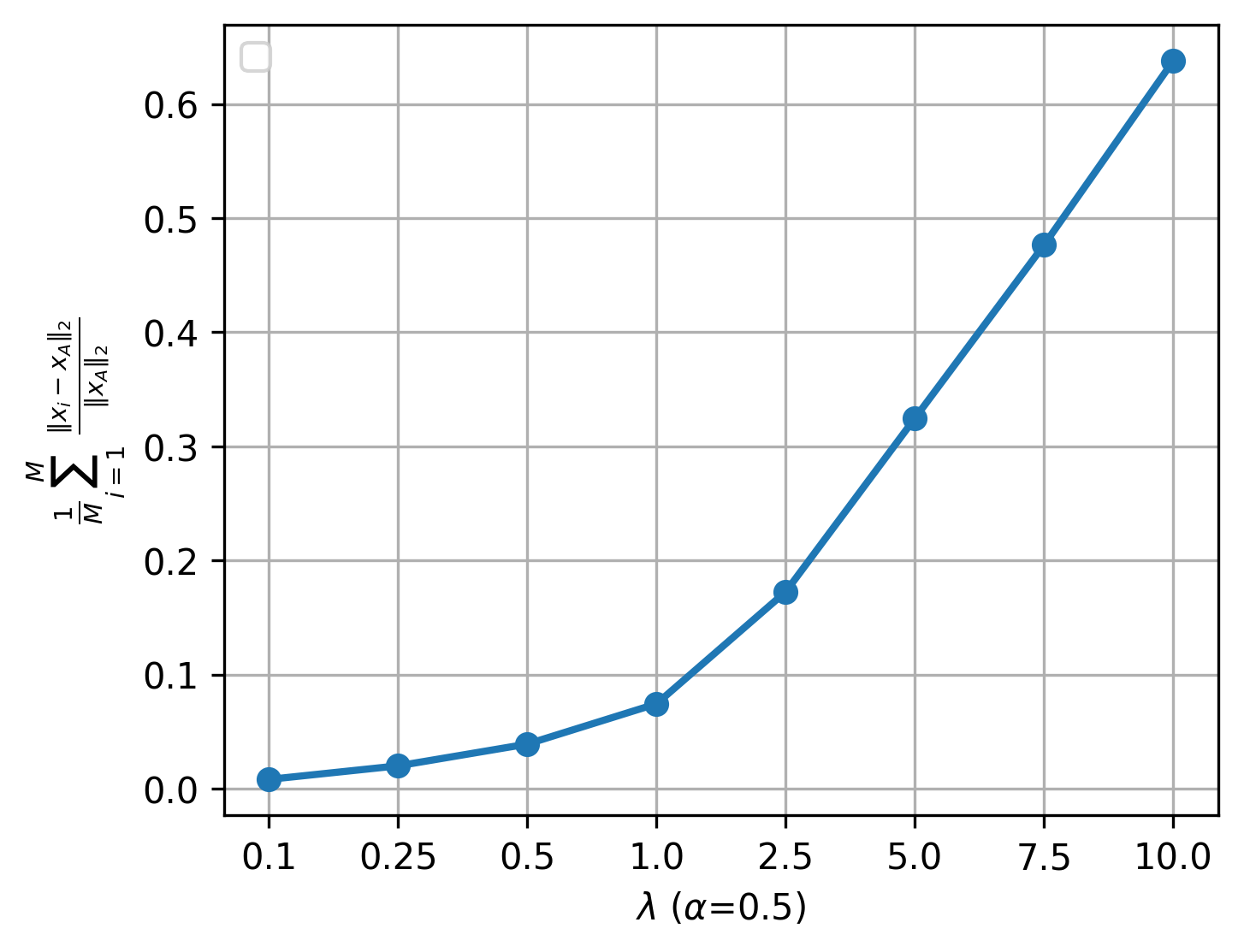}
    \caption{Valley width over norm}
    \label{appendix:hypersens:fixed_alpha_valley_over_norm}
  \end{subfigure}
  \caption{PyramidNet training on CIFAR-100 dataset with 4 workers.}
  \label{appendix:hypersens:fixed_alpha2}
\end{figure}

Figure~\ref{appendix:hypersens:fixed_alpha_norm} illustrates how the norm of the average variable ($|x_A|2$) evolves throughout training when different push force strengths are applied. We observe that stronger push forces lead to higher final norms of the average variable. This is consistent with the assumption made in Theorem~\ref{main:thm:pac_bayes}. In Figure~\ref{appendix:hypersens:fixed_alpha_valley_over_norm}, we report the ratio between the valley width, defined as $\frac{1}{M}\sum{i=1}^M |x_i - x_A|_2$, and the norm of the average variable $|x_A|_2$, measured at the end of training. This ratio grows approximately exponentially with increasing $\lambda$, and appears to diverge for large $\lambda$ values.

Recall from Figure~\ref{appendix:hypersens:fixed_alpha_error} that the generalization improvements brought by the push force begin to saturate or degrade beyond $\lambda = 5$. This suggests the existence of a critical $\lambda$—and potentially a critical valley-width-to-norm ratio—up to which consistent generalization benefits are observed. While DPPF yields substantial generalization improvements over a wide range of $\lambda$ values (for fixed $\alpha$), future work could aim to identify this critical threshold, potentially enabling a hyperparameter-free variant of DPPF. One promising direction could involve estimating a lower bound on the Lipschitz constant of the DNN for a given dataset, which may provide guidance on appropriate values for $\lambda$.

Previously, we examined how the performance improvements of DPPF vary with different values of $\lambda$ while keeping the pull force fixed at $\alpha = 0.5$. Now, we fix the final valley width by preserving the ratio $\lambda / \alpha$, and explore different $(\lambda, \alpha)$ pairs to investigate how test error is affected. Before presenting our results, we report the performance achieved by the standard gradient-averaging scheme (DDP-SGD) across various experimental settings as a reference, as shown in Table~\ref{appendix:table:hyperparam_ddp}.

\begin{table}[h]
\renewcommand{\arraystretch}{1.15}
\caption{Test errors attained by DDP SGD Training}
\label{appendix:table:hyperparam_ddp}
\centering
\begin{tabular}{c|c|c}
\begin{tabular}[c]{@{}c@{}}ResNet-18\\ CIFAR-10\\ 4 Workers\\ 400 Epochs\end{tabular} & \begin{tabular}[c]{@{}c@{}}PyramidNet\\ CIFAR-100\\ 4 Workers\\ 400 Epochs\end{tabular} & \begin{tabular}[c]{@{}c@{}}ResNet-50\\ ImageNet\\ 4 Workers\\ 200 Epochs\end{tabular} \\ \hline
   $4.33_{\pm 0.08}$     &   $19.18_{\pm 0.10}$   &     $23.83_{\pm 0.17}$    
\end{tabular}
\end{table}

Unless otherwise stated, we maintain consistent experimental settings throughout our analysis. For CIFAR-10 and CIFAR-100 experiments, we vary the communication period $\tau \in \{4, 8, 16\}$ and use $(\lambda, \alpha) \in \{(0.1, 0.5), (0.5, 2.5), (0.9, 4.5)\}$ to ensure that the final valley width—governed by the ratio $\lambda / \alpha$—remains fixed at $5$. Additionally, we include experiments with a shorter training schedule of 200 epochs, in contrast to the default 400 epochs. For ImageNet experiments, we use $(\lambda, \alpha) \in \{(0.1, 1.0), (0.5, 5.0), (0.9, 9.0)\}$ to keep the valley size fixed at $10$ and only provide 200 epoch training, the default recipe. Figures~\ref{appendix:hypersens:resnet18_c10}, \ref{appendix:hypersens:pyramidnet_c100}, and~\ref{appendix:hypersens:resnet50_imagenet} present the hyperparameter sensitivity analysis in the form of heatmaps.

\begin{figure}[h]
  \centering
  \begin{subfigure}[t]{0.4\textwidth}
    \includegraphics[width=\linewidth]{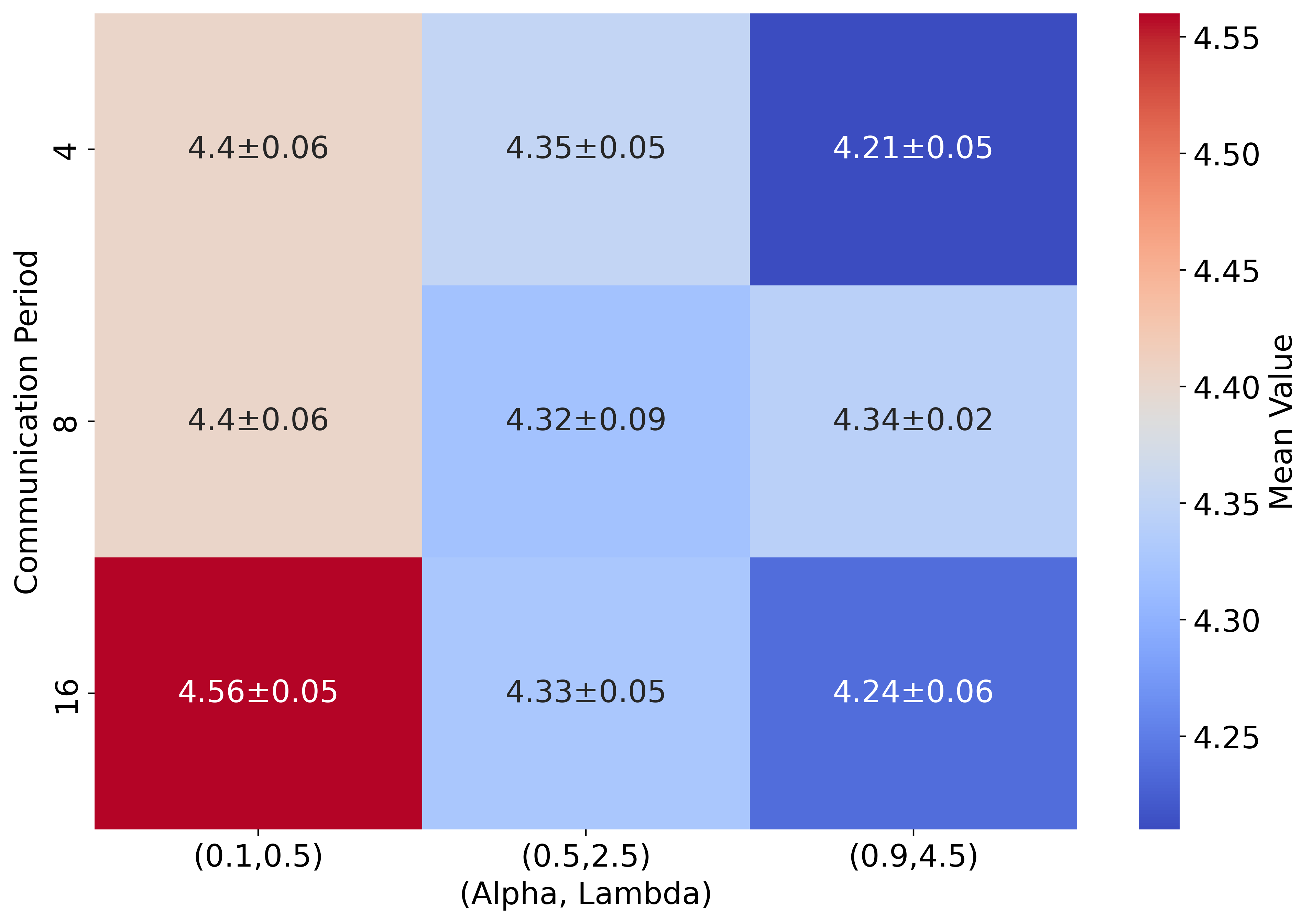}
    \caption{Training for 200 epochs}
    \label{appendix:hypersens:200epochs_rn18_c10}
  \end{subfigure}
  \hspace{5mm}
  \begin{subfigure}[t]{0.4\textwidth}
    \includegraphics[width=\linewidth]{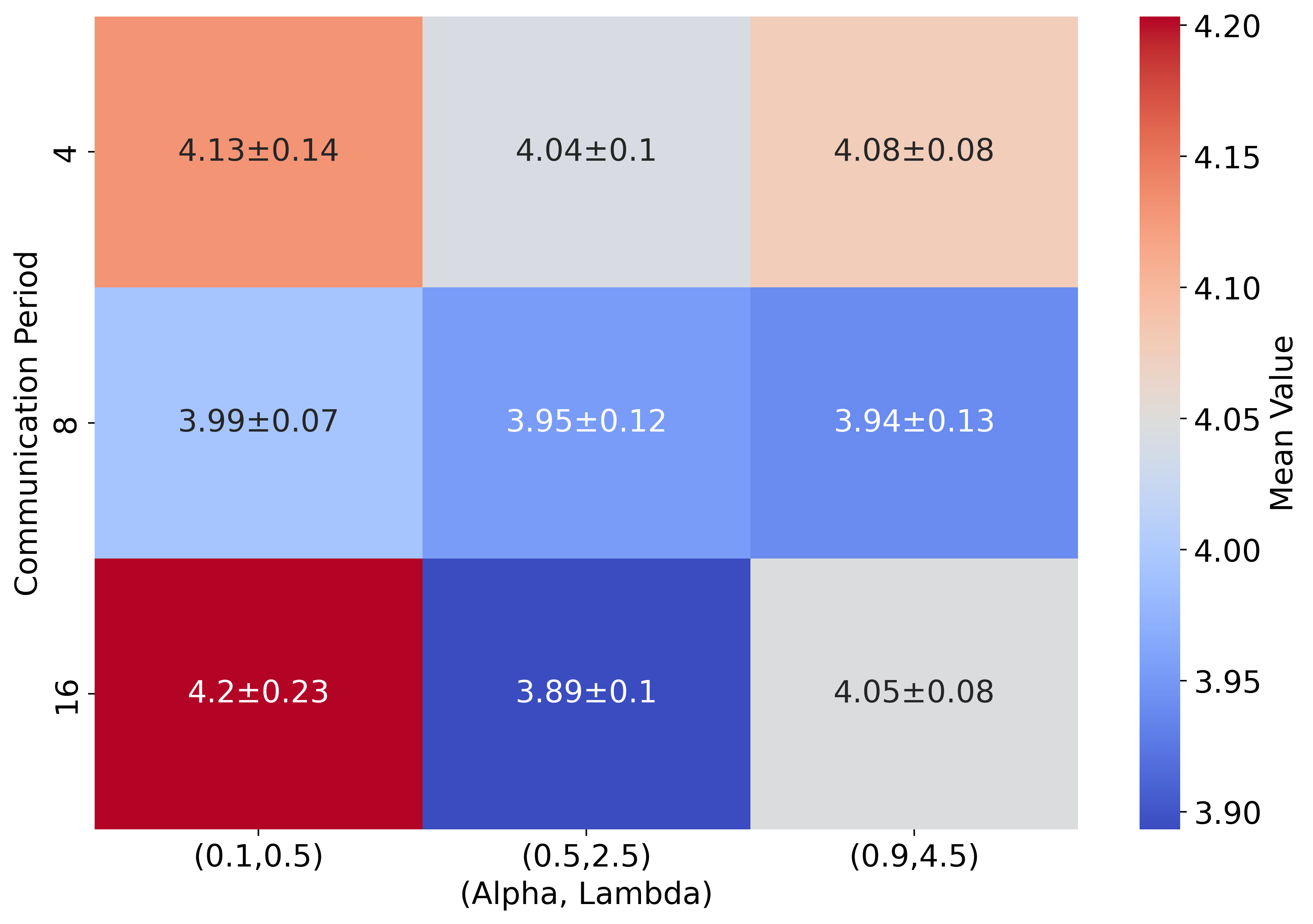}
    \caption{Training for 400 epochs}
    \label{appendix:hypersens:400epochs_rn18_c10}
  \end{subfigure}
  \caption{ResNet-18 training on CIFAR-10 dataset with 4 workers.}
  \label{appendix:hypersens:resnet18_c10}
\end{figure}

\begin{figure}[H]
  \centering
  \begin{subfigure}[t]{0.4\textwidth}
    \includegraphics[width=\linewidth]{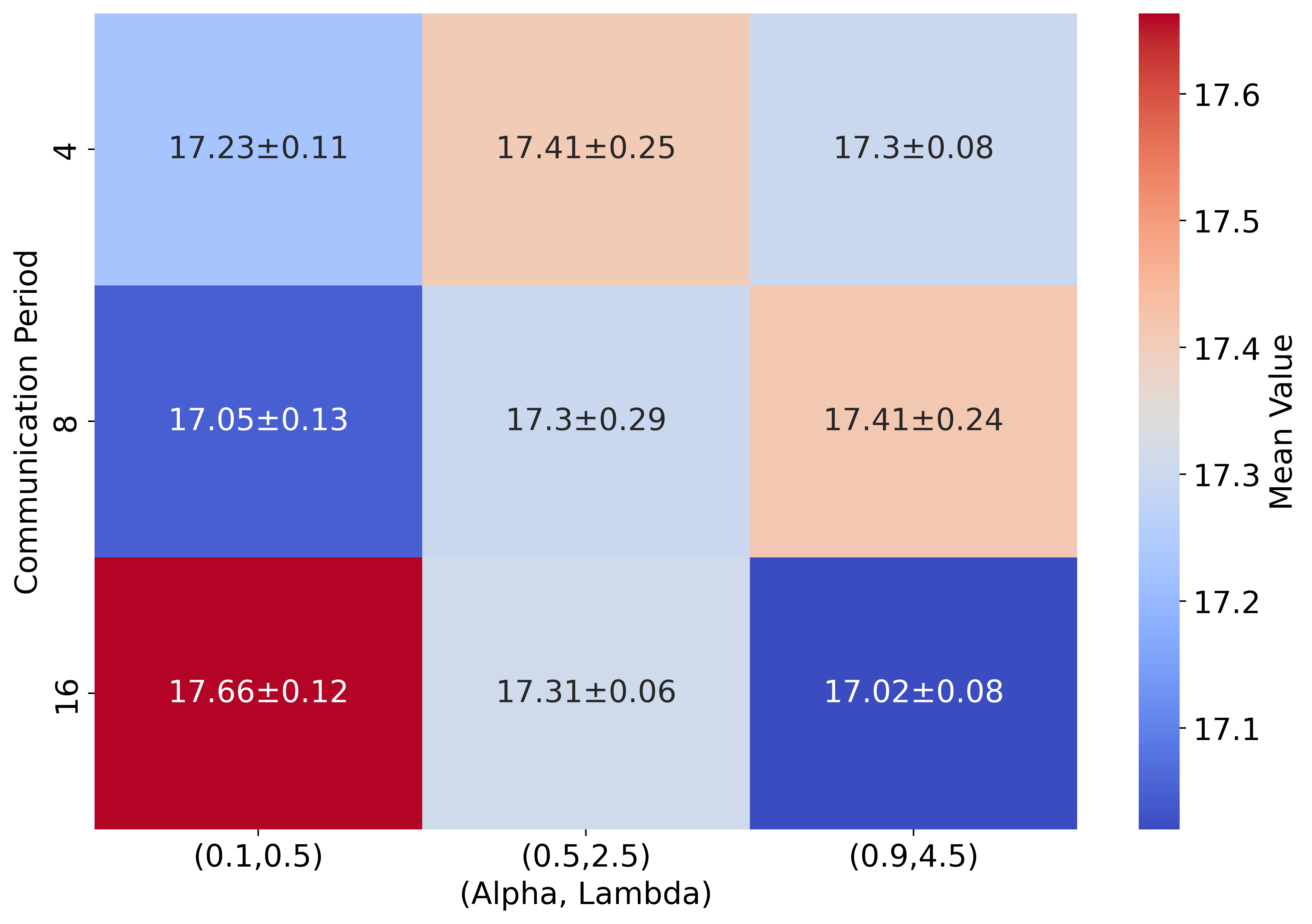}
    \caption{Training for 200 epochs}
    \label{appendix:hypersens:200epochs_pyramidnet_c100}
  \end{subfigure}
  \hspace{5mm}
  \begin{subfigure}[t]{0.4\textwidth}
    \includegraphics[width=\linewidth]{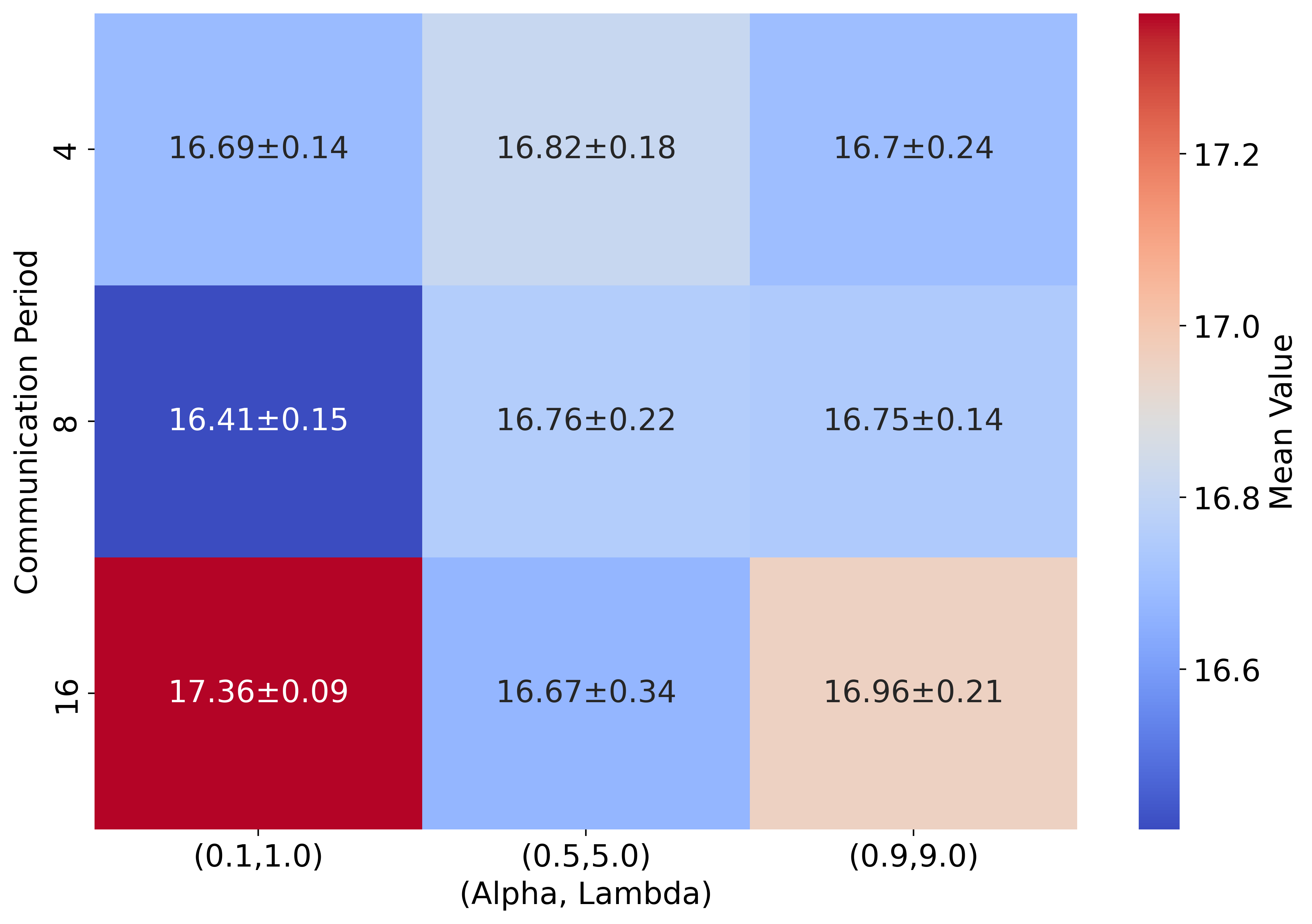}
    \caption{Training for 400 epochs}
    \label{appendix:hypersens:400epochs_pyramidnet_c100}
  \end{subfigure}
  \caption{PyramidNet training on CIFAR-100 dataset with 4 workers.}
  \label{appendix:hypersens:pyramidnet_c100}
\end{figure}

\textbf{ResNet-18, CIFAR-10:} In Figure~\ref{appendix:hypersens:resnet18_c10}, we observe that DPPF maintains a consistent level of performance across different communication periods and $(\lambda, \alpha)$ pairs. However, some trends emerge when comparing training durations: with shorter training (200 epochs), configurations with larger $\alpha$ values tend to yield better test error. In contrast, for longer training (400 epochs), intermediate values of $\alpha$ (e.g., $(0.5, 2.5)$) perform slightly better, suggesting that the optimal balance between push and pull may shift depending on the training duration.

\textbf{PyramidNet, CIFAR-100:} In Figure~\ref{appendix:hypersens:pyramidnet_c100}, we observe that shorter training with 200 epochs does not have a clear trend, whereas the final performance remains stable across different configurations. More importantly, even with only 200 epochs, DPPF consistently outperforms 400-epoch DDP training across all combinations of communication period and pull-push strengths by a significant margin. In the 400-epoch setting, the final performance remains relatively stable across different hyperparameter choices, with the exception of a few outliers under $\tau = 16$. This favorable reduction in sensitivity is likely due to the longer training duration and the over-parameterized nature of the PyramidNet model for CIFAR-100 data, which together enable robust convergence across a broader range of configurations.

\begin{figure}[H]
    \centering
    \includegraphics[width=0.4\linewidth]{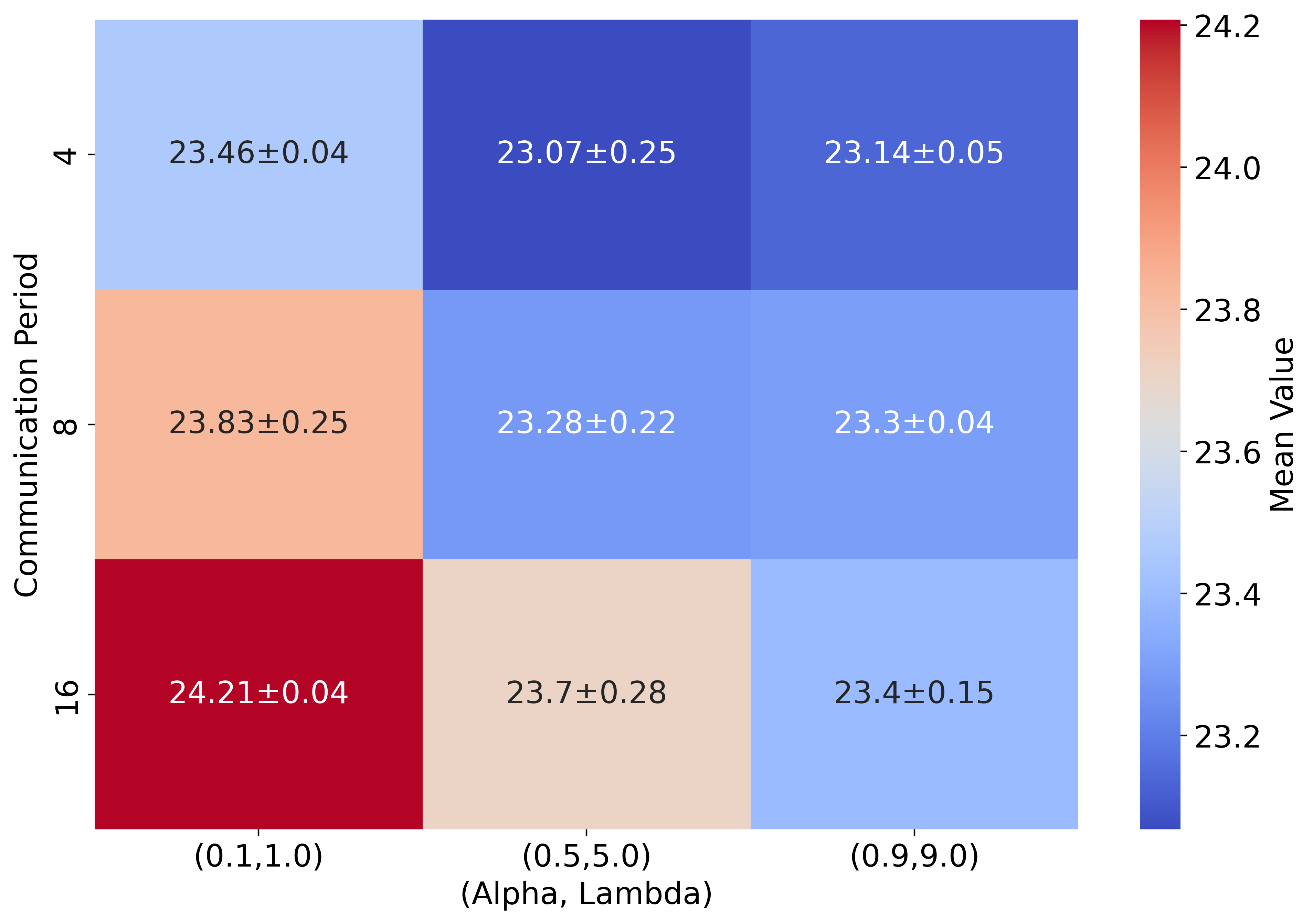}
    \caption{ResNet-50 training on ImageNet dataset with 4 workers}
    \label{appendix:hypersens:resnet50_imagenet}
\end{figure}

\textbf{ResNet-50, ImageNet:} In Figure~\ref{appendix:hypersens:resnet50_imagenet}, a clear trend emerges: this setting benefits from more frequent communication and stronger pull forces (i.e., larger $\alpha$). This suggests that tighter synchronization helps stabilize training for large-scale datasets like ImageNet, possibly due to the increased complexity and depth of ResNet-50. Despite this, 8 out of the 9 DPPF configurations either match or significantly outperform the baseline DDP-SGD in terms of test error, all while offering improved communication efficiency. These results highlight the robustness and practical value of DPPF, even under varied hyperparameter settings. It would be valuable to extend this analysis to larger models on ImageNet, such as ResNet-101, ResNet-152, or Vision Transformers \citep{dosovitskiy2020image}, to assess whether the observed sensitivity trends and performance gains scale with model size and architecture.

\subsection{Is DPPF Essentially an On-the-Fly SWA?}
\label{appendix:subsec:swa_comparison}
Due to the arrangement of workers in the loss landscape driven by DPPF's pushing force, one might wonder if DPPF is emulating Stochastic Weight Averaging (SWA) during training. In SWA \citep{izmailovaveraging}, it has been observed that the minima found by SGD often lie at the edges of the valley, rather than at its center. To address this, the authors propose continuing training after convergence, using a cyclical learning rate to explore different edges of the valley. The SGD solutions obtained are then averaged to locate the center of the valley (SWA solution) where more robust, better solutions lie \citep{izmailovaveraging}. Since all edge solutions and the SWA solution lie in the same basin, the trajectory between any of the two solutions does not encounter any loss barriers which is qualitatively verified in \citep{izmailovaveraging}. 

To address the question of whether DPPF\textsubscript{SimpleAVG} acts as an on-the-fly SWA for SimpleAVG, we check whether the last observation holds in our case. For this purpose, we plot how the loss and error (\%) along the trajectory between SimpleAVG and  DPPF\textsubscript{SimpleAVG} changes. Let $x_{sa}$ denote the solution from SimpleAVG and $x_{dppf}$ the solution obtained by DPPF\textsubscript{SimpleAVG}. We take a convex combination among these two solutions which can be expressed as $x_{C} = \alpha x_{dppf} + (1-\alpha) x_{sa}$ for $\alpha \in [0,1]$. For all $x_C$, we record the training loss, training error, test loss, and test error. The plots in \cref{appendix:fig:swa_check} show that there is a large loss barrier between the solutions of SimpleAVG and DPPF\textsubscript{SimpleAVG}. This implies that DPPF does not simply emulate SWA, and it encourages the recovery of separate, wider valleys with good-quality solutions.

\begin{figure}[h]
    \centering
    \begin{subfigure}{0.45\columnwidth} 
        \centering
        \includegraphics[width=\columnwidth]{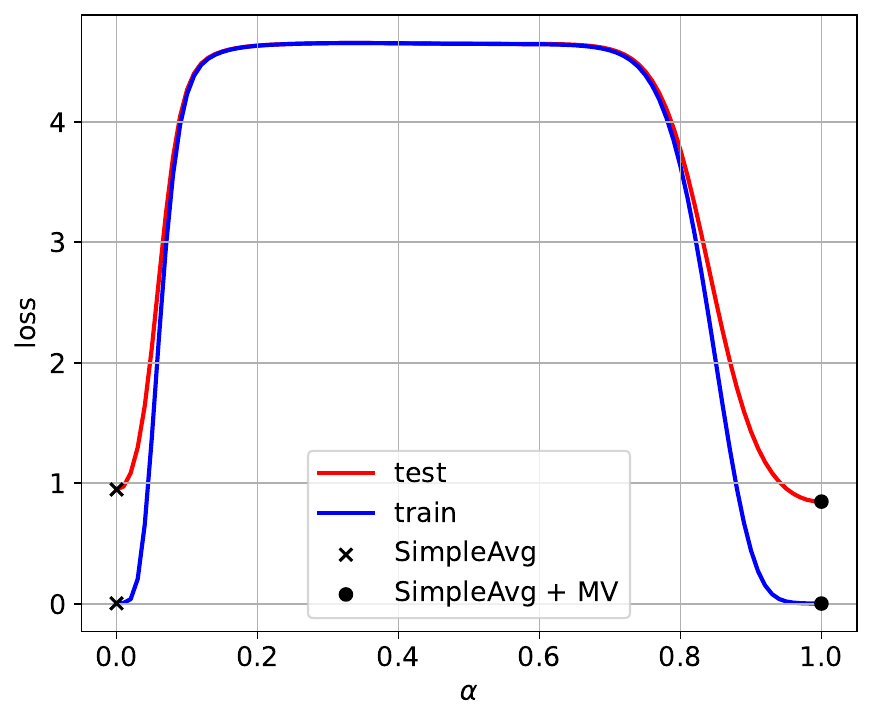}
        \caption{$\alpha$ vs. loss}
    \end{subfigure}
    \hfill
    \begin{subfigure}{0.45\columnwidth} 
        \centering
        \includegraphics[width=\columnwidth]{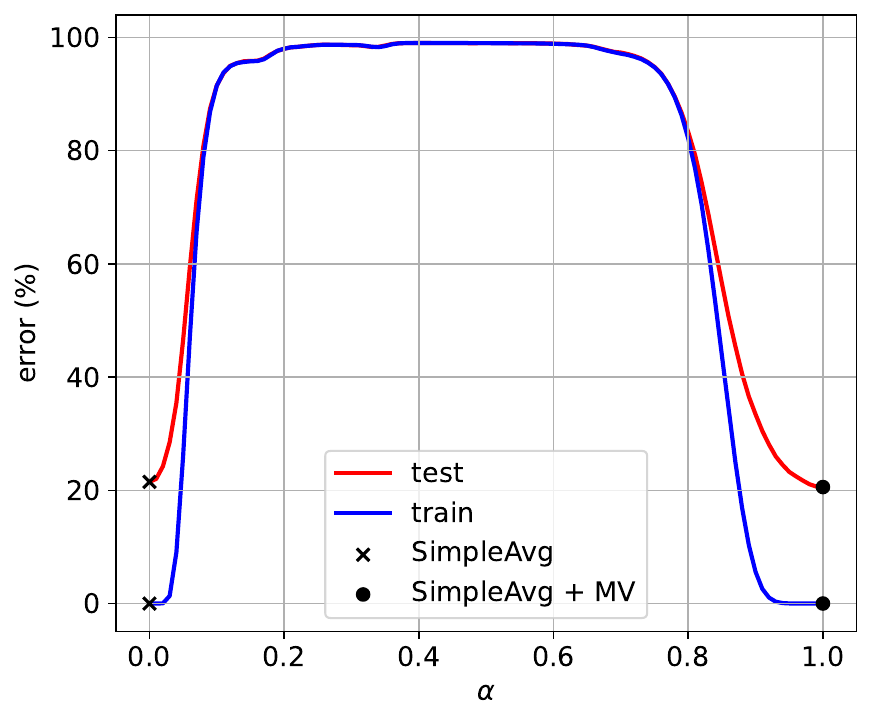}
        \caption{$\alpha$ vs. error (\%)}
    \end{subfigure}
    \caption{Change in training loss, training error, test loss, and test error along the trajectory between SimpleAVG and DPPF\textsubscript{SimpleAVG} solutions.}
    \label{appendix:fig:swa_check}
\end{figure}

\section{Full Proofs of Theoretical Analysis}
\label{appendix:sec:all_theory}

\subsection{Derivation of the Update Rule by Minimizing the Relaxed Inv. MV Term}
\label{appendix:subsec:mv_update_rule}

Recall that the relaxed Inv. MV measure is mathematically expressed as follows:

\begin{equation*}
R = - \frac{1}{M} \sum_{i=1}^M \| x_{i} - x_A\|_2 \hspace{2mm} \text{where} \hspace{2mm}  x_A = \frac{1}{M} \sum_{i=1}^M x_{i} 
\end{equation*}

Now, we want to derive the gradient of the term with respect to worker $m$, i.e. $\frac{\partial R}{\partial x_m}$. Let us also write $d_i = x_i - x_A$ so we can also write:

\begin{equation*}
R = - \frac{1}{M} \sum_{i=1}^M \sqrt{ d_i^Td_i } \hspace{2mm} \text{where} \hspace{2mm} d_i = x_i - \frac{1}{M} \sum_{j=1}^M x_{j}
\end{equation*}

We have two different cases to consider to find $\frac{\partial R}{\partial x_m}$. This is because the calculation of $x_A$ includes all the model parameters hence, $x_m$ is present in all terms of $R$'s summation due to $x_A$. Besides, $i=m$ includes the copy of $x_m$ itself inside the norm. These two cases require separate treatment to derive the update rule accurately.

\textbf{ 1) $i = m$:} We start by using the chain rule:

\begin{equation*}
\frac{\partial R}{\partial x_i} = \frac{\partial R}{\partial \sqrt{d_i^Td_i}}  \frac{\partial \sqrt{d_i^Td_i}}{\partial d_i} \frac{\partial d_i}{\partial x_i}
\end{equation*}

The individual terms in the chain rule are equal to the following:
\begin{equation*}
\frac{\partial R}{\partial \sqrt{d_i^Td_i}} = \frac{-1}{M}, \hspace{2mm}  \frac{\partial \sqrt{d_i^Td_i}}{\partial d_i} = \frac{d_i} {\sqrt{d_i^Td_i}}, \hspace{2mm} \frac{\partial d_i}{\partial x_i} = \frac{M-1}{M}
\end{equation*}

Plugging in the expression of the individual partial derivatives in the chain rule and re-writing $d_i = x_i - x_A$ we obtain the following:

\begin{equation}
\label{appendix:eq:derivation_1}
\frac{\partial R}{\partial \sqrt{d_i^Td_i}}  \frac{\partial \sqrt{d_i^Td_i}}{\partial d_i} \frac{\partial d_i}{\partial x_m} = - \frac{M-1}{M^2} \frac{x_m - x_A}{\| x_m - x_A \|_2}
\end{equation}

This concludes the part of the update that arises from the case $m=i$.

\textbf{ 2) $i \neq m$:} We again start by using the chain rule:

\begin{equation*}
\frac{\partial R}{\partial x_i} = \frac{\partial R}{\partial \sqrt{d_i^Td_i}}  \frac{\partial \sqrt{d_i^Td_i}}{\partial d_i} \frac{\partial d_i}{\partial x_i}
\end{equation*}

Again we separately express the partial derivatives in the chain rule:
\begin{equation*}
\frac{\partial R}{\partial \sqrt{d_i^Td_i}} = \frac{-1}{M}, \hspace{2mm}  \frac{\partial \sqrt{d_i^Td_i}}{\partial d_i} = \frac{d_i} {\sqrt{d_i^Td_i}}, \hspace{2mm} \frac{\partial d_i}{\partial x_i} = \frac{-1}{M}
\end{equation*}

Notice that this time $ \frac{\partial d_i}{\partial x_i}$ is different since $x_m$ only participates in $d_i$ due to the presence of $x_A$. Putting everything together we get the following for all $i \neq m$:

\begin{equation}
\label{appendix:eq:derivation_2}
\frac{\partial R}{\partial \sqrt{d_i^Td_i}}  \frac{\partial \sqrt{d_i^Td_i}}{\partial d_i} \frac{\partial d_i}{\partial x_m} = \frac{1}{M^2} \frac{x_i - x_A}{\| x_i - x_A \|_2}
\end{equation}

Finally, by combining \ref{appendix:eq:derivation_1} and \ref{appendix:eq:derivation_2}, we obtain the following overall update:
\begin{equation*}
    \frac{\partial R}{\partial x_m} = - \frac{1}{M^2} \left ( (M-1)\frac{d_m}{\|d_m\|}  - \sum_{j \neq m}^{M} \frac{ d_j }{\|d_j\|} \right ) = - \frac{1}{M^2} \left ( M\frac{d_m}{\|d_m\|}  - \sum_{j=1}^{M} \frac{ d_j }{\|d_j\|} \right ) .
\end{equation*}

This concludes the derivation.

\subsection{Valley Width and Generalization Guarantees}
\label{appendix:subsec:generalization_proof}

Consider a distributed training of a DNN performed with $M$ workers by optimizing the loss function $f$. We define the following notations for communication round $k$ and worker $m$:
\begin{align*}
  x_{m,k}, x_{m,k}^+      &= \text{parameters before and after distributed update},\\
  x_{A,k}, x_{A,k}^+      &= \frac1M\sum_{j=1}^{M}x_{j,k}, \frac1M\sum_{j=1}^{M}x_{j,k}^+\quad\text{average variables before and after distributed update},\\
  \Delta_{m,k} &= x_{A,k}-x_{m,k}\quad\text{gap vector before distributed update},\\
  \Delta_{m,k}^+ &= x_{A,k}^+-x_{m,k}^+\quad\text{gap vector after distributed update},\\
  r_{m,k}      &= \lVert \Delta_{m,k}\rVert,\quad
  u_{m,k}      = \Delta_{m,k}/r_{m,k}\quad(\lVert u_{m,k}\rVert=1),\\
  C            &= 1-\alpha\quad(0<C<1).
\end{align*}

We also assume that the workers observe independent and unbiased gradients ($g$'s) with bounded variance during the training:
\begin{equation*}
    \mathbb{E} \| g_{m,k}^t - \nabla f(x_{m,k}^t) \|^2 \leq \sigma_0^2  \;\;\; \text{for all} \;\;\; m,k,t.
\end{equation*}

\begin{lemma} \label{appendix:lemma:variance} Let $G_{j,k} = \sum_{t=1}^\tau g_{j,k}^t$ be the sum of local gradients calculated by worker $j$ between communication rounds $k$ and $k+1$. Also, $\bar{G}_{k} = \frac{1}{M} \sum_{i=1}^M G_{i,k}$ denotes the average of the local gradients across workers. Based on the bounded variance assumption of the gradients, we have:
\begin{equation*}
\mathbb{E}\bigl[\|G_{m,k}-\bar G_k\|\bigr]
  \le
  \sqrt{\tfrac{M+1}{M}}
  \sqrt{\tau}\sigma_{0}
\end{equation*}
\end{lemma}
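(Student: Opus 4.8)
The plan is to split each stochastic gradient into its mean and a mean‑zero noise term, observe that the mean contributions cancel in the centered quantity $G_{m,k}-\bar G_k$, and then control the leftover noise by a second‑moment computation that uses independence across workers and a martingale‑difference structure across local steps. Concretely, write $g_{m,k}^t=\nabla f(x_{m,k}^t)+\epsilon_{m,k}^t$, where $\mathbb{E}[\epsilon_{m,k}^t\mid\mathcal F_k^{t-1}]=0$ (conditioning on everything up through local step $t-1$ of round $k$, which determines $x_{m,k}^t$) and $\mathbb{E}\|\epsilon_{m,k}^t\|^2\le\sigma_0^2$. Summing over $t$ gives $G_{m,k}=\Phi_{m,k}+\Xi_{m,k}$ with $\Phi_{m,k}:=\sum_{t=1}^{\tau}\nabla f(x_{m,k}^t)$ and $\Xi_{m,k}:=\sum_{t=1}^{\tau}\epsilon_{m,k}^t$, and writing $\bar\Xi_k=\tfrac1M\sum_i\Xi_{i,k}$ one has $G_{m,k}-\bar G_k=(\Phi_{m,k}-\bar\Phi_k)+(\Xi_{m,k}-\bar\Xi_k)$.

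In the regime where the lemma is used — near convergence with small or decaying step size, consistent with the $\mathcal O(\eta\sigma_0)$ error term in Theorem~\ref{main:thm:valley_width} — the full‑batch gradients $\nabla f(x_{m,k}^t)$ are negligible, equivalently the deterministic drift is effectively common across workers because each worker stays within an $\mathcal O(\eta\tau)$‑ball of $x_{A,k}$ during a round; hence $\Phi_{m,k}-\bar\Phi_k$ contributes only a higher‑order term and $G_{m,k}-\bar G_k=\Xi_{m,k}-\bar\Xi_k$ up to that term. Then by Jensen, $\mathbb{E}\|\Xi_{m,k}-\bar\Xi_k\|\le(\mathbb{E}\|\Xi_{m,k}-\bar\Xi_k\|^2)^{1/2}$, and I expand $\mathbb{E}\|\Xi_{m,k}-\bar\Xi_k\|^2=\mathbb{E}\|\Xi_{m,k}\|^2-2\,\mathbb{E}\langle\Xi_{m,k},\bar\Xi_k\rangle+\mathbb{E}\|\bar\Xi_k\|^2$. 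Since the per‑worker accumulated noises $\Xi_{1,k},\dots,\Xi_{M,k}$ are independent and mean zero (fresh, independent mini‑batches), $\mathbb{E}\langle\Xi_{m,k},\bar\Xi_k\rangle=\tfrac1M\mathbb{E}\|\Xi_{m,k}\|^2\ge0$ and $\mathbb{E}\|\bar\Xi_k\|^2=\tfrac1{M^2}\sum_{i=1}^M\mathbb{E}\|\Xi_{i,k}\|^2$. Finally, within a round $\{\epsilon_{i,k}^t\}_{t=1}^{\tau}$ is a martingale‑difference sequence, so the cross terms vanish and $\mathbb{E}\|\Xi_{i,k}\|^2=\sum_{t=1}^{\tau}\mathbb{E}\|\epsilon_{i,k}^t\|^2\le\tau\sigma_0^2$. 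Dropping the nonpositive cross term then yields $\mathbb{E}\|\Xi_{m,k}-\bar\Xi_k\|^2\le\tau\sigma_0^2+\tfrac1M\tau\sigma_0^2=\tfrac{M+1}{M}\tau\sigma_0^2$, and taking square roots gives the claim. (Keeping the cross term would in fact tighten the constant to $\tfrac{M-1}{M}$, but the stated $\tfrac{M+1}{M}$ suffices.)

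The expansion of the square and the two cancellations (independence across workers, martingale structure across local steps) are routine. The one step that needs genuine care — and the main obstacle — is justifying that the deterministic components $\Phi_{m,k}$ drop out of $G_{m,k}-\bar G_k$: with only the bounded‑variance hypothesis this is not literally an identity, so it must be argued in the near‑convergence / small‑step‑size regime, or under a bounded‑drift assumption controlling $\|\nabla f(x_{m,k}^t)-\nabla f(x_{A,k})\|$, which is precisely the setting in which Theorem~\ref{main:thm:valley_width} is stated; any residual from this approximation is absorbed into its $\mathcal O(\eta\sigma_0)$ term. A secondary technical point is verifying the conditional‑mean‑zero property of $\epsilon_{m,k}^t$ given the within‑round filtration so that the martingale cancellation is valid even though $x_{m,k}^t$ depends on earlier noise.
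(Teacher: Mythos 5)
Your proposal is correct and follows essentially the same route as the paper's proof: expand $\mathbb{E}\|G_{m,k}-\bar G_k\|^2$, use independence across workers and the within-round martingale structure to get $\mathbb{E}\|\Xi_{i,k}\|^2\le\tau\sigma_0^2$, drop the (favorable) cross term, and finish with Jensen/Cauchy--Schwarz. You are in fact more careful than the paper on two points it glosses over: the paper silently writes $\mathbb{E}\|G_{j,k}\|^2=\tau\sigma_0^2$, which implicitly treats the accumulated gradients as pure mean-zero noise (the deterministic components $\sum_t\nabla f(x_{j,k}^t)$ are never addressed, whereas you flag and argue away exactly this term), and the paper asserts the cross term $\mathbb{E}\langle G_{m,k},\bar G_k\rangle$ is zero when it actually equals $\tfrac{1}{M}\mathbb{E}\|G_{m,k}\|^2>0$ — harmless for the upper bound, as you note, since retaining it would only tighten the constant to $\tfrac{M-1}{M}$.
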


\begin{proof} Using the definition that $G_{m,k} = \sum_{t=1}^\tau g_{m,k}^t$, we write the following:
\begin{equation*}
\bar G_k = \frac{1}{M}\sum_{j=1}^{M} G_{j,k},
\qquad
\mathbb{E}\bigl[\|\bar G_k\|^{2}\bigr]
   = \frac{1}{M^{2}}\sum_{j=1}^{M}
          \mathbb{E}\bigl[\|G_{j,k}\|^{2}\bigr]
   = \frac{\tau\sigma_{0}^{2}}{M}.
\end{equation*}

\begin{equation*}
\begin{aligned}
\mathbb{E}\bigl[\|G_{m,k}-\bar G_k\|^{2}\bigr]
  &= \mathbb{E}\bigl[\|G_{m,k}\|^{2}\bigr]
     + \mathbb{E}\bigl[\|\bar G_k\|^{2}\bigr]
     - 2\mathbb{E}\langle G_{m,k},\bar G_k\rangle \\[4pt]
  &= \tau\sigma_{0}^{2}
     + \frac{\tau\sigma_{0}^{2}}{M}
     \quad (\text{cross term }=0) \\[4pt]
  &= \tau\sigma_{0}^{2}\!\Bigl(1+\frac{1}{M}\Bigr).
\end{aligned}
\end{equation*}

Finally, by using the Cauchy-Schwarz, we can bound the first moment as:
\begin{equation*}
\mathbb{E}\bigl[\|G_{m,k}-\bar G_k\|\bigr]
  \le
  \sqrt{\mathbb{E}\bigl[\|G_{m,k}-\bar G_k\|^{2}\bigr]}
  =
  \sqrt{\tfrac{M+1}{M}}
  \sqrt{\tau}\sigma_{0}
\end{equation*}

\end{proof}

\begin{lemma} \label{appendix:lemma:avg_unit_normed} Let us assume that $u_i \in \mathbb{R}^d$ is a column vector drawn independently and uniformly from the unit sphere $\mathbb{S}^{d-1}$ so that the norms are $1$ surely. For $\bar{u}_{k} = \frac{1}{M} \sum_{i=1}^M u_{i,k}$, we can write the following upper bound:
    \begin{equation*}
    \mathbb{E}\left[\left\| \bar{u}_k \right\|_2\right] \leq \frac{1}{\sqrt{M}}
\end{equation*}
\end{lemma}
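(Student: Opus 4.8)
The plan is to reduce the first‑moment bound to a second‑moment computation via Jensen's inequality, since $t\mapsto\sqrt t$ is concave: $\mathbb{E}\|\bar u_k\|_2 \le \sqrt{\mathbb{E}\|\bar u_k\|_2^2}$. It then suffices to show $\mathbb{E}\|\bar u_k\|_2^2 \le 1/M$, which I in fact expect to hold with equality, so no slack is lost beyond the Jensen step.

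The core step is to expand the squared norm of the average as $\|\bar u_k\|_2^2 = \tfrac{1}{M^2}\sum_{i=1}^M\sum_{j=1}^M \langle u_{i,k}, u_{j,k}\rangle$ and split the double sum into diagonal terms ($i=j$) and off‑diagonal terms ($i\neq j$). For the diagonal terms, $\|u_{i,k}\|_2^2 = 1$ holds surely because each $u_{i,k}$ lies on $\mathbb{S}^{d-1}$, so these contribute exactly $M$ to the sum. For the off‑diagonal terms, independence of $u_{i,k}$ and $u_{j,k}$ gives $\mathbb{E}\langle u_{i,k}, u_{j,k}\rangle = \langle \mathbb{E}[u_{i,k}], \mathbb{E}[u_{j,k}]\rangle$, and the uniform measure on $\mathbb{S}^{d-1}$ is invariant under the reflection $u\mapsto -u$ (equivalently, rotationally invariant), hence has zero mean; therefore every off‑diagonal term vanishes in expectation. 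Collecting, $\mathbb{E}\|\bar u_k\|_2^2 = M/M^2 = 1/M$, and substituting into the Jensen bound yields $\mathbb{E}\|\bar u_k\|_2 \le 1/\sqrt M$.

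There is essentially no genuine obstacle here: the only point needing a line of justification is that the uniform distribution on the sphere has zero mean, which is immediate from its reflection symmetry, and the i.i.d.\ and unit‑norm hypotheses are precisely what make the cross terms drop out and the diagonal terms equal one. The argument is a clean second‑moment calculation followed by one application of concavity.
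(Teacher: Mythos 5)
Your proposal is correct and follows essentially the same route as the paper: expand $\mathbb{E}\|\bar u_k\|_2^2$ into diagonal terms (each equal to $1$) and cross terms (which vanish by independence and the zero mean of the uniform spherical law), obtain $1/M$, and pass to the first moment via $\mathbb{E}\|\bar u_k\|_2\le\sqrt{\mathbb{E}\|\bar u_k\|_2^2}$. The only cosmetic difference is that you attribute this last step to Jensen's inequality while the paper labels it Cauchy--Schwarz; these are the same bound here.
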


\begin{proof}
We start by writing:

\begin{align*}
\mathbb{E}\left[\left\| \bar{u}_k \right\|_2^2\right] 
&= \mathbb{E}\left[\bar{u}_k^\top \bar{u}_k \right] \\
&= \frac{1}{M^2} \sum_{i,j=1}^{M} \mathbb{E}\left[u_{i,k}^\top u_{j,k} \right] \\
&= \frac{1}{M^2} \left( \sum_{i=1}^{M} \mathbb{E}\left[\left\| u_{i,k} \right\|_2^2\right] 
+ \sum_{ i \neq j}^{M} \mathbb{E}\left[u_{i,k}^\top u_{j,k} \right] \right).
\end{align*}

Since $\mathbb{E}\left[\left\| u_{i,k} \right\|_2^2\right] = 1$ by construction and the cross terms are $0$ due to $\mathbb{E}[u_{i,k}] = \mathbf{0}$ and independence, we arrive at $\mathbb{E}\left[\left\| \bar{u}_k \right\|_2^2\right] = \frac{1}{M}$. Then, using Cauchy-Schwarz, we get:

\begin{equation*}
    \mathbb{E}\left[\left\| \bar{u}_k \right\|_2\right] \leq \sqrt{\mathbb{E}\left[\left\| \bar{u}_k \right\|_2^2\right]}  = \frac{1}{\sqrt{M}}
\end{equation*}
\end{proof}

\begin{theorem}[\textbf{Valley Width under DPPF}] \label{appendix:theorem:valley_width} Consider a distributed training of a DNN performed by $M$ workers using DPPF with pull and push force strengths $\alpha$ and $\lambda$ respectively. The communication period is $\tau$, and the workers locally run SGD with learning rate $\eta$. Denote the worker index, communication round, and local iteration with $m,k$, and $t$ respectively. We assume bounded variance: $\mathbb{E} \| g_{m,k}^t - \nabla f(x_{m,k}^t) \|^2 \leq \sigma_0^2  \;\;\text{for all} \;\; m,k,t$. Now, define the gap vector $\Delta_{m,k}^+= x_{A,k}^+ - x_{m,k}^+$ as the distance between worker $m$ and the worker average after the communication update $k$. Asymptotically, we obtain:
\[
\boxed{
\lim_{k\to\infty}\mathbb{E}\bigl[\lVert\Delta_{m,k}^{+}\rVert\bigr] = \frac{\lambda}{\alpha} + \mathcal{O}\!\Bigl(\eta\sigma_{0}+\tfrac{1}{\sqrt{M}}\Bigr)}
\]
Furthermore, with $\eta\to0$ and many workers $M \gg 1$, we have  \(
\displaystyle
\lim_{k\to\infty}\mathbb{E}\bigl[\lVert\Delta_{m,k}^{+}\rVert\bigr]
      =\frac{\lambda}{\alpha}.
\)
\end{theorem}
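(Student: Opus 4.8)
The plan is to collapse the per-round dynamics into a scalar recursion on $\rho_k := \mathbb{E}\lVert \Delta_{m,k}^{+}\rVert$ and solve it. Write $C := 1-\alpha \in (0,1)$, $r_{m,k} := \lVert\Delta_{m,k}\rVert$, $u_{m,k} := \Delta_{m,k}/r_{m,k}$ (a unit vector), and $\bar u_k := \frac1M\sum_j u_{j,k}$. The combined pull-push communication update of Equation~\ref{eq:single_line} can be rewritten as $x_{m,k}^{+} = x_{m,k} + \alpha\Delta_{m,k} - \lambda u_{m,k}$; averaging over $m$ and using $\sum_m\Delta_{m,k}=0$ gives $x_{A,k}^{+} = x_{A,k} - \lambda\bar u_k$, so the running average itself drifts by only $\mathcal{O}(\lambda M^{-1/2})$ per round (via Lemma~\ref{appendix:lemma:avg_unit_normed}). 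Subtracting the two displays and substituting $\Delta_{m,k} = r_{m,k}u_{m,k}$ yields the clean identity
\[
\Delta_{m,k}^{+} = (C\,r_{m,k}+\lambda)\,u_{m,k} - \lambda\,\bar u_k .
\]
Since $\lVert u_{m,k}\rVert = 1$, the forward and reverse triangle inequalities sandwich $\lVert\Delta_{m,k}^{+}\rVert$ between $Cr_{m,k}+\lambda-\lambda\lVert\bar u_k\rVert$ and $Cr_{m,k}+\lambda+\lambda\lVert\bar u_k\rVert$, the lower bound being nonnegative because $\lVert\bar u_k\rVert\le1$.

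Next I would link $r_{m,k}$, the gap \emph{after} the $\tau$ local SGD steps of round $k$, to $\Delta_{m,k-1}^{+}$. With $G_{m,k} := \sum_{t=1}^{\tau} g_{m,k}^{t}$ and $\bar G_k := \frac1M\sum_j G_{j,k}$, the local steps give $x_{m,k} = x_{m,k-1}^{+} - \eta G_{m,k}$ and $x_{A,k} = x_{A,k-1}^{+} - \eta\bar G_k$, hence $\Delta_{m,k} = \Delta_{m,k-1}^{+} + \eta(G_{m,k}-\bar G_k)$ and, by the reverse triangle inequality, $\bigl|\,r_{m,k} - \lVert\Delta_{m,k-1}^{+}\rVert\,\bigr| \le \eta\lVert G_{m,k}-\bar G_k\rVert$. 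Taking expectations of this and of the sandwich, conditioning on the pre-round history so that $\rho_{k-1}$ acts as a constant, and invoking Lemma~\ref{appendix:lemma:variance} ($\mathbb{E}\lVert G_{m,k}-\bar G_k\rVert \le \sqrt{(M+1)/M}\,\sqrt{\tau}\,\sigma_0$) together with Lemma~\ref{appendix:lemma:avg_unit_normed} ($\mathbb{E}\lVert\bar u_k\rVert \le M^{-1/2}$) gives the two-sided linear recursion
\[
C\rho_{k-1} + \lambda - \varepsilon \le \rho_k \le C\rho_{k-1} + \lambda + \varepsilon ,
\]
with $\varepsilon := C\eta\sqrt{(M+1)/M}\,\sqrt{\tau}\,\sigma_0 + \lambda M^{-1/2}$. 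The bound is stated for a fixed $m$ and holds for all $m$ by exchangeability of the workers.

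Finally, iterate: because $0<C<1$ the transient $C^{k}\rho_0$ vanishes, and the two one-step affine maps $\rho\mapsto C\rho+\lambda\pm\varepsilon$ have fixed points $(\lambda\pm\varepsilon)/\alpha$, so $\liminf_k\rho_k \ge (\lambda-\varepsilon)/\alpha$ and $\limsup_k\rho_k \le (\lambda+\varepsilon)/\alpha$; since $\alpha,\lambda,\tau$ are constants, $\varepsilon/\alpha = \mathcal{O}(\eta\sigma_0 + M^{-1/2})$, which is the claimed error band and pins down $\lim_k\mathbb{E}\lVert\Delta_{m,k}^{+}\rVert = \lambda/\alpha + \mathcal{O}(\eta\sigma_0 + M^{-1/2})$. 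Letting $M\to\infty$ (so $\sqrt{(M+1)/M}\to1$ and $\lambda M^{-1/2}\to0$) and then $\eta\to0$ makes $\varepsilon\to0$; the sandwich collapses to the genuine contraction $\rho_k = C\rho_{k-1}+\lambda$, whose unique fixed point is $\lambda/(1-C)=\lambda/\alpha$, so the limit exists and equals $\lambda/\alpha$.

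The step I expect to be the main obstacle is the local-SGD drift bound, specifically the legitimacy of Lemma~\ref{appendix:lemma:variance}: the increment difference $G_{m,k}-\bar G_k$ carries not only stochastic noise but also the discrepancy of the \emph{true} gradients $\nabla f(x_{m,k}^{t})-\overline{\nabla f}$ accumulated over the $\tau$ local steps, so one must argue that near convergence this deterministic part is negligible, or fold it into the $\eta\sigma_0$ term. A secondary subtlety is that the $M^{-1/2}$ improvement relies on the idealized assumption that the gap directions $u_{m,k}$ behave like i.i.d.\ uniform draws on $\mathbb{S}^{d-1}$; without it one only gets $\mathbb{E}\lVert\bar u_k\rVert\le1$ and the error band degrades to $\mathcal{O}(\lambda)$.
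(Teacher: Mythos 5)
Your proposal follows essentially the same route as the paper's proof: the same decomposition of the pull--push update into $\Delta_{m,k}^{+}=(1-\alpha)\Delta_{m,k}+\lambda u_{m,k}-\lambda\bar{u}_{k}$, the same use of Lemmas~\ref{appendix:lemma:variance} and~\ref{appendix:lemma:avg_unit_normed} to control the local-SGD drift and $\lVert\bar{u}_{k}\rVert$, and the same iteration of the resulting affine contraction with fixed point $\lambda/\alpha$. The only real difference is that you obtain the matching lower bound directly via the reverse triangle inequality, whereas the paper proves only the upper bound and argues tightness in a separate, less careful closing paragraph; your two-sided sandwich is if anything slightly cleaner, and the caveats you flag (gradient heterogeneity inside $G_{m,k}-\bar{G}_k$ and the i.i.d.\ uniform assumption on the $u_{m,k}$) are exactly the ones the paper also relies on without further justification.
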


\begin{proof}
    We start by expressing how the gap changes after the distributed pull-push update.

\begin{align}
x_{m,k}^+ &= x_{m,k} + \Delta_{m,k} \left( \alpha - \lambda \frac{1}{\|\Delta_{m,k}\|} \right) \label{equation:gap_dynamics:1} \\
x_{A,k}^+ &= \frac{1}{M} \sum_{i=1}^M x_{i,k}^+ 
= \underbrace{\frac{1}{M} \sum_{i=1}^M x_{i,k}}_{x_{A,k}}+ \alpha \underbrace{\frac{1}{M} \sum_{i=1}^M \Delta_{i,k}}_{0} - \frac{\lambda}{M} \sum_{i=1}^M u_{i,k} \label{equation:gap_dynamics:2}
\end{align}

By subtracting Equation~\ref{equation:gap_dynamics:1} from Equation~\ref{equation:gap_dynamics:2}, we can write:

\begin{equation}
    \Delta_{m,k}^+ = (1 - \alpha)\Delta_{m,k} + \lambda u_{m,k} - \frac{\lambda}{M} \sum_{i=1}^M u_{i,k} \label{equation:gap_dynamics:3}
\end{equation}

However, we want to extract the recurrence between how the post-update gap changes. To this end, we need to relate $\Delta_{m,k-1}^+$ term to $\Delta_{m,k}^+$. Let us consider the local gradient steps between communication rounds $k-1$ and $k$.

\begin{equation}
    x_{m,k} = x_{m,k-1}^+ - \eta G_{m,k-1} \;\;\; \text{where} \;\;\; G_{m,k-1} = \sum_{t=1}^\tau g_{m,k-1}^t \label{equation:gap_dynamics:4}
\end{equation}

Here $g_{m,k-1}^t$ is the gradient update that worker $m$ takes at $t^\text{th}$ local iteration after communication round $k-1$. And we use $G_{m,k-1}$ to denote the cumulative local updates. Let us also express:

\begin{equation}
x_{A,k} = \frac{1}{M} \sum_{i=1}^M x_{i,k} = \underbrace{\frac{1}{M} \sum_{i=1}^M x_{i,k-1}^+}_{x_{A,k-1}^+} - \frac{\eta}{M} \sum_{i=1}^M G_{i,k-1} \label{equation:gap_dynamics:5}
\end{equation}

Let us all define $\bar{G}_{k-1} = \frac{1}{M} \sum_{i=1}^M G_{i,k-1}$ to denote the average of the local gradients across workers. By subtracting Equation~\ref{equation:gap_dynamics:4} from Equation~\ref{equation:gap_dynamics:5}, we arrive at:

\begin{align}
x_{A,k} - x_{m,k} &= x_{A,k-1}^+ - x_{m,k-1}^+ - \eta \left( \bar{G}_{k-1} - G_{m,k-1} \right) \\
\Delta_{m,k} &= \Delta_{m,k-1}^+ - \eta Z_{m,k-1} \;\;\;  \text{where} \;\;\;  Z_{m,k-1} = \bar{G}_{k-1} - G_{m,k-1} \label{equation:gap_dynamics:6}
\end{align}

By plugging Equation~\ref{equation:gap_dynamics:6} into Equation~\ref{equation:gap_dynamics:3}, we can obtain the following recurrence relation:

\begin{equation}
 \Delta_{m,k}^+ = (1 - \alpha)\Delta_{m,k-1}^+ -\eta (1-\alpha)Z_{m,k-1}+ \lambda u_{m,k} - \lambda \bar{u}_{k}  \label{equation:gap_dynamics:7}
\end{equation}

where we expressed $\frac{1}{M} \sum_{i=1}^M u_{i,k}$ as $\bar{u}_{k}$. We now start from the recurrence:

\begin{equation}
\Delta_{m,k}^+ = C \Delta_{m,k-1}^+ - \eta C Z_{m,k-1} + \lambda u_{m,k} - \lambda \bar{u}_{k}
\end{equation}
Taking norms and applying the triangle inequality:
\begin{align}
\|\Delta_{m,k}^+\| 
&\leq C \|\Delta_{m,k-1}^+\| + \eta C \|Z_{m,k-1}\| + \lambda \|u_{m,k} - \bar{u}_k\| \\
&\leq C \|\Delta_{m,k-1}^+\| + \eta C \|Z_{m,k-1}\| + \lambda (1 + \|\bar{u}_k\|) \label{eq:gapnorm_bound}
\end{align}
where we used that $\|u_{m,k}\| = 1$ by construction. For ease of notation, let us define $r_{k+1} = \mathbb{E}\!\bigl[\lVert\Delta_{m,k+1}^{+}\rVert\bigr]$. We express $\mathbb{E}Z_{m,k}$ using Lemma~\ref{appendix:lemma:variance}. Additionally, we treat $u_i \in \mathbb{R}^d$ as a column vector drawn uniformly from the unit sphere $\mathbb{S}^{d-1}$ so that the norms are $1$ surely. Also, $u_i$'s are drawn independently, a valid assumption as the independent stochastic noise determines the location of $x_{i,k}$. By invoking Lemma~\ref{appendix:lemma:avg_unit_normed} on $\|\bar{u}_k\|$, we write the following:

\begin{align}
r_{k+1}
&=\mathbb{E}\bigl[\lVert\Delta_{m,k+1}^{+}\rVert\bigr]  \\[2pt]
&\le
Cr_{k}+
\eta C\underbrace{\mathbb{E}\lVert Z_{m,k}\rVert}_{\le\sqrt{\tau}\sigma_{0}\sqrt{\tfrac{M+1}{M}}}
+
\lambda\!\left(1+\underbrace{\mathbb{E}\lVert\bar u_{k+1}\rVert}_{\le1/\sqrt{M}}\right)
\notag \\[2pt]
&=Cr_{k}+\beta+\gamma,
\label{eq:one‑step}
\end{align}
where
\[
\beta =\eta C\sqrt{\tau}\sigma_{0}\sqrt{\tfrac{M+1}{M}},
\qquad
\gamma =\lambda\!\left(1+\tfrac{1}{\sqrt{M}}\right)
\]

Iterating \eqref{eq:one‑step} for $k$ steps yields
\begin{align}
r_{k}
&\le
C^{k}r_{0}+
(\beta+\gamma)\sum_{j=0}^{k-1}C^{j}
\notag\\[2pt]
&=
C^{k}r_{0}+
\frac{\beta+\gamma}{1-C}\bigl(1-C^{k}\bigr)
=
C^{k}r_{0}+\frac{\beta+\gamma}{\alpha}\bigl(1-C^{k}\bigr).
\label{eq:rk‑general}
\end{align}

Because $r_0=0$ and $C^{k}\!\to0$, taking the limit in \eqref{eq:rk‑general} with respect to $k$ gives
\begin{equation}
\limsup_{k\to\infty} r_{k}
\le
\frac{\beta+\gamma}{\alpha}
=
\frac{\eta(1-\alpha)\sqrt{\tau}\sigma_{0}\sqrt{\tfrac{M+1}{M}}}{\alpha}
+
\frac{\lambda}{\alpha}\!\left(1+\tfrac{1}{\sqrt{M}}\right).
\label{eq:limsup}
\end{equation}

Consequently, with diminishing learning rate $\eta\to0$ and many workers $M \gg 1$, we obtain
\[
\boxed{
\lim_{k\to\infty}\mathbb{E}\bigl[\lVert\Delta_{m,k}^{+}\rVert\bigr]
=
\frac{\lambda}{\alpha}
+
\mathcal{O}\!\Bigl(\eta\sigma_{0}+\tfrac{1}{\sqrt{M}}\Bigr)
}
\]
which completes the proof.
\end{proof}

\begin{theorem}[Monotone PAC-Bayes Gap Tightening with Valley Radius]
\label{thm:valley_width_pac_bayes}
Consider a geometric grid for candidate valley sizes governed by the DPPF algorithm's pull ($\alpha_j$) and push ($\lambda_j$) strengths: $\mathcal G=\{r_j=r_{\min}(1+\gamma)^j\}_{j=0}^J$, where each $r_j=\frac{\lambda_j}{\alpha_j}$. For every $r_j$ assume the spherical‑Gaussian prior $P_{r_j}=\mathcal N(0,r_j\sigma_0^2I_d)$ and let the training algorithm return the posterior  $Q_{r_j}=\mathcal N(\mu_{r_j},c_{r_j}r_j\sigma_0^2I_d)$ over model parameters, where $c_{r_j}\ge1$ is a data-dependent scalar. Assume there are constants $D_0\!>\!0$ and $0\le\beta<1$ such that $\lVert\mu_{r_j}\rVert_2^2\le D_0\,r_j^{\beta}$ for every $r_j\in\mathcal G$. Then with probability $1-\delta$ over the draw of the sample set $S$ with $|S| = n$, for all $r_j\in\mathcal G$, we can write:
\[
\mathbb E_{x\sim Q_{r_j}}\![L_{\mathcal D}(x)] \le
\mathbb E_{x\sim Q_{r_j}}\![L_S(x)] +
\underbrace{\sqrt{\frac{\,
      \tfrac{d}{2}(c_{r_j}-1-\log c_{r_j})
      +\frac{D_0}{2\sigma_0^2 r_j^{1-\beta}}
      +\log\frac{nJ}{\delta}}
      {2(n-1)}}}_{\displaystyle\textstyle \text{gap}(r_j)}.
\]
because $1-\beta>0$, $\text{gap}(r_{j+1})<\text{gap}(r_j)$ for every consecutive pair in $\mathcal G$.
\end{theorem}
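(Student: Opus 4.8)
The plan is to combine a standard McAllester-type PAC-Bayes inequality with the closed form of the Gaussian--Gaussian KL divergence, then insert the grid hypothesis $\lVert\mu_{r_j}\rVert_2^2\le D_0 r_j^{\beta}$ and finish with a union bound over the candidate radii. Concretely, I would first invoke the PAC-Bayes bound in the following form: for a fixed prior $P$ chosen before seeing $S$, and a loss taking values in $[0,1]$, with probability at least $1-\delta'$ over $S\sim\mathcal D^{n}$ every posterior $Q$ satisfies $\mathbb E_{x\sim Q}[L_{\mathcal D}(x)]\le\mathbb E_{x\sim Q}[L_S(x)]+\sqrt{(\mathrm{KL}(Q\|P)+\log(n/\delta'))/(2(n-1))}$. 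Since each prior $P_{r_j}=\mathcal N(0,r_j\sigma_0^2 I_d)$ is data-independent (the radius $r_j=\lambda_j/\alpha_j$ is a preset hyperparameter of DPPF), I can apply this bound separately for each $r_j\in\mathcal G$ with $\delta'=\delta/J$ and take a union bound, so that simultaneously for all $r_j\in\mathcal G$ the displayed inequality holds with $\log(nJ/\delta)$ in the numerator.

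The second step is the KL computation. For $Q=\mathcal N(\mu,\sigma_Q^2 I_d)$ and $P=\mathcal N(0,\sigma_P^2 I_d)$ one has $\mathrm{KL}(Q\|P)=\tfrac12\bigl[d\,\sigma_Q^2/\sigma_P^2+\lVert\mu\rVert^2/\sigma_P^2-d+d\log(\sigma_P^2/\sigma_Q^2)\bigr]$. Substituting $\sigma_Q^2=c_{r_j}r_j\sigma_0^2$ and $\sigma_P^2=r_j\sigma_0^2$ makes the variance ratio exactly $c_{r_j}$, and the $\sigma_0^2$ and $r_j$ factors cancel in the mean term, yielding $\mathrm{KL}(Q_{r_j}\|P_{r_j})=\tfrac{d}{2}(c_{r_j}-1-\log c_{r_j})+\lVert\mu_{r_j}\rVert_2^2/(2r_j\sigma_0^2)$. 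Bounding the last term with the hypothesis $\lVert\mu_{r_j}\rVert_2^2\le D_0 r_j^{\beta}$ gives $\lVert\mu_{r_j}\rVert_2^2/(2r_j\sigma_0^2)\le D_0/(2\sigma_0^2 r_j^{1-\beta})$, and plugging this into the PAC-Bayes inequality produces exactly $\mathrm{gap}(r_j)$ as written in the statement.

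For the monotonicity claim I would observe that $r\mapsto r^{-(1-\beta)}$ is strictly decreasing on $(0,\infty)$ precisely because $1-\beta>0$, so along the increasing geometric grid $r_j<r_{j+1}$ the term $D_0/(2\sigma_0^2 r_j^{1-\beta})$ strictly decreases; since $\log(nJ/\delta)$ is constant and the map $c\mapsto c-1-\log c$ is nondecreasing for $c\ge1$ (derivative $1-1/c\ge0$), once the posterior-to-prior variance inflation $c_{r_j}$ is held fixed (or taken non-increasing) across the grid the entire numerator strictly decreases, and since $\sqrt{\cdot}$ is strictly increasing, $\mathrm{gap}(r_{j+1})<\mathrm{gap}(r_j)$ follows for every consecutive pair.

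I expect the main obstacle to be the monotonicity step rather than the PAC-Bayes machinery: as stated, $\mathrm{gap}$ still depends on the data-dependent quantities $c_{r_j}$, so a genuinely strict decrease across the grid is only guaranteed under the additional (and, I think, implicitly intended) assumption that $c_{r_j}$ is constant or non-increasing in $j$ -- which can be arranged by using the same variance-inflation factor at every radius; I would make this dependence explicit. A secondary, purely cosmetic point is the exact cardinality of $\mathcal G$ used in the union bound ($J$ versus $J+1$), which only shifts the $\log$ term by an absorbed constant.
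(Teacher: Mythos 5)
Your proposal follows essentially the same route as the paper's proof: the McAllester/Dziugaite--Roy PAC-Bayes inequality, the closed-form Gaussian--Gaussian KL (which collapses to $\tfrac{d}{2}(c_{r_j}-1-\log c_{r_j})+\lVert\mu_{r_j}\rVert_2^2/(2r_j\sigma_0^2)$ after the $r_j\sigma_0^2$ cancellation), the bounded-drift hypothesis to control the mean term, and a Langford--Caruana union bound over the grid. The one substantive point you raise that the paper does not is the correct observation that strict monotonicity of $\mathrm{gap}(r_j)$ additionally requires the data-dependent inflation factors $c_{r_j}$ to be constant or non-increasing along the grid; the paper's proof asserts that ``all other terms are independent of $r_j$,'' which silently makes exactly the assumption you flag, so your explicit caveat is a genuine (if minor) improvement rather than a gap in your own argument.
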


\begin{proof} We base our starting bound on the analysis carried out in \citep{chatterji2019intriguing} and \citep{SAM}. Following their footsteps, we use the PAC-Bayes bound derived for DNNs by \citep{dziugaite2017computing} by building upon \citep{mcallester1999pac}. Then, for any prior distribution $P$ over parameters in $d$ dimensional space, and for any posterior distribution $Q$, the following generalization guarantee holds with probability at least $1-\delta$ over the random draw of the training set $S$ with $n$ samples:
\begin{equation}
      \mathbb{E}_{x\sim Q}\![ L_{\mathcal D}(x)] \le \mathbb{E}_{x\sim Q}\![L_{S}(x)]
  + \sqrt{\frac{ KL(Q || P) + \log \frac{n}{\delta} }{2(n-1)}}.
  \label{appendix:generic_pac}
\end{equation}

If we assume that prior $P$ and posterior $Q$ are isotropic Gaussians, i.e. \( P \sim \mathcal{N}(\mu_P, \sigma_P^2 I_d) \) and \( Q \sim \mathcal{N}(\mu_Q, \sigma_Q^2 I_d) \) we can simply express the $KL$ divergence as follows:
\[
   KL(Q || P) = \frac{d}{2} \left ( \frac{\sigma_Q^2}{\sigma_P^2} -1 + \log \frac{\sigma_P^2}{\sigma_Q^2} \right ) + \frac{\| \mu_Q - \mu_P \|_2^2}{2\sigma_P^2} 
\]

Let $r$ be the valley width which asymptotically converges to the ratio between push and pull force strengths as characterized by Theorem~\ref{appendix:theorem:valley_width}, i.e. $r = \frac{\lambda}{\alpha}$. Since, $\lambda$ and $\alpha$ hyperparameters are set without seeing the data, we can shape the prior variance based on $r$. Let us consider the following isotropic Gaussians for prior and posterior: \( P \sim \mathcal{N}(0, r\sigma_0^2 I_d) \) and \( Q \sim \mathcal{N}(\mu, cr\sigma_0^2 I_d) \) where $c$ is the data-dependent coefficient that impacts the posterior, we can re-write the PAC-Bayes bound in \ref{appendix:generic_pac} as follows:
\begin{equation*}
      \mathbb{E}_{x\sim Q}\![ L_{\mathcal D}(x)] \le \mathbb{E}_{x\sim Q}\![L_{S}(x)] + 
      \sqrt{\frac{ \frac{d}{2} \left ( c -1 - \log c \right ) + 
      \frac{\| \mu \|_2^2}{2r\sigma_0^2} + \log \frac{n}{\delta} }{2(n-1)}}.
  \label{appendix:generic_pac_iso}
\end{equation*}

It may be tempting to conclude that increasing $r$ tightens the generalization gap by $1/\sqrt{r}$ based on the expression on the right-hand side above. However, our current assumptions do not account for how $\|\mu\|_2^2$ changes with increased prior variance. Without additional guarantees on the behavior of $\|\mu_Q\|_2^2$, such a claim would be misleading or incomplete.

To address this incompleteness, we employ the Langford-Caruana grid. We start by declaring a geometric grid of valley widths \(\mathcal G\):
\[
  \mathcal G
  =
  \bigl\{r_j=r_{\min}(1+\gamma)^j\big|j=0,\dots,J\bigr\},
  \qquad
  J=\Bigl\lceil\log_{1+\gamma}\!\frac{r_{\max}}{r_{\min}}\Bigr\rceil,
\]
with \(r_{\min},r_{\max},\gamma>0\) chosen a priori. For each \(r_j\) we attach a Gaussian prior \(P_{r_j}=\mathcal N(0,r_j\sigma_0^2I_d)\). Now, running the algorithm with the target width \(r_j\) yields the following posterior \( Q_{r_j}=\mathcal N\bigl(\mu_{r_j},c_{r_j}r_j\sigma_0^2I_d \bigr ) \) where \(c_{r_j}\ge 1\) is again data‑dependent.

Applying \eqref{appendix:generic_pac} to every \((P_{r_j},Q_{r_j})\) pair from the grid and
union‑bounding over the \(J\) choices as practiced in Langford–Caruana method \citep{langford2001caruna} gives, with probability at least \(1-\delta\):
\begin{equation}
\mathbb E_{x\sim Q_{r_j}}\bigl[L_{\mathcal D}(x)\bigr] \le
\mathbb E_{x\sim Q_{r_j}}\bigl[L_S(x)\bigr] +
\underbrace{\sqrt{\frac{
        \tfrac{d}{2}\bigl(c_{r_j}-1-\log c_{r_j}\bigr)
       +\dfrac{\lVert\mu_{r_j}\rVert_2^2}{2r_j\sigma_0^{2}}
       +\log\tfrac{nJ}{\delta}}
       {2(n-1)}}}_{\text{gap}(r_j)}
\label{eq:grid_bound}
\end{equation}

We assume \textit{bounded-drift} that sets an upper limit on the $L2$ norm of the posterior mean based on a chosen valley width. Assume that for some constants \(D_0\ge 0\) and \(0\le\beta<1\)
\begin{equation}
  \|\mu_{r}\|_2^{2}\leq D_0r^{\beta},
  \qquad\text{for all }r\in[r_{\min},r_{\max}].
  \label{eq:bounded_drift}
\end{equation}

Substituting \eqref{eq:bounded_drift} into the numerator of the $\text{gap}(r_j)$ term in \eqref{eq:grid_bound} yields
\[
  \text{gap}(r_j) = \sqrt{\frac{
       \tfrac{d}{2}\bigl(c_{r_j}-1-\log c_{r_j}\bigr)
      +\dfrac{D_0}{2r_j^{1-\beta}\sigma_0^{2}}
      +\log\tfrac{nJ}{\delta}}
      {2(n-1)}}. \]
      
Because \(1-\beta>0\), the term \(D_0/(r_j^{1-\beta})\) is strictly decreasing in~\(r_j\); all other terms are independent of \(r_j\). Hence
\(\text{gap}(r_j)\) is monotonically decreasing along the grid $\mathcal{G}$:
\[
  r_{j+1}>r_j
  \Longrightarrow
  \text{gap}(r_{j+1})<\text{gap}(r_j).
\]

Additionally, since \eqref{eq:grid_bound} holds \emph{simultaneously} for every \(r_j\), we can safely pick
\[
   r^\star =\arg\min_{r_j\in\mathcal G} \text{gap}(r_j)
\]
after training (or equivalently, choose the model with the smallest validation loss).  The PAC‑Bayes guarantee remains valid and satisfies
\(B(r^\star)\le B(r_0)\).

\end{proof}



\subsection{Maximizing the Worker Consensus Distance in a Valley with Radius C}
\label{appendix:subsec:worker_arrangement}

Consider \( M \) points placed on a circle of radius \( C \) in the 2D plane. Let each point \( P_i \) be parameterized by an angle \( \theta_i \), so that:
\[
P_i = (C \cos \theta_i, C \sin \theta_i), \quad i = 1, 2, \ldots, M
\]

We want to maximize the mean distance of these points to the average variable, \( P_A = (x_A, y_A) \), where the coordinates of the average variable can be expressed as follows:
\[
x_A= \frac{1}{M} \sum_{i=1}^M C \cos \theta_i, \quad y_A = \frac{1}{M} \sum_{i=1}^M C \sin \theta_i
\]

The Euclidean distance \( d_i \) from each point \( P_i \) to the average \( P_a \) can be written as:
\[
d_i = \sqrt{(C \cos \theta_i - x_A)^2 + (C \sin \theta_i - y_A)^2}
\]

Recall that Simplified MV, is expressed as follows:
\[
R = \frac{1}{M} \sum_{i=1}^M d_i.
\]

Directly optimizing the above objective is harder due to the square root in $d_i$. Hence, we simplify the problem and maximize the following objective:
\[
R = \sum_{i=1}^M d_i^2.
\]

We plug in the expression for $d_i$ and re-write $R$:
\[
R = \sum_{i=1}^M \left( (C \cos \theta_i - x_A)^2 + (C \sin \theta_i - y_A)^2 \right)
\]

If we expand each term in the summation, we obtain:
\begin{equation*}
R = \sum_{i=1}^M  [ C^2 \cos^2 \theta_i  + C^2 \sin^2 \theta_i - 2C \cos \theta_i x_A  - 2C \sin \theta_i y_A +x_A^2 +y_A^2 ]
\end{equation*}

Now, using the identity \( \cos^2 \theta_i + \sin^2 \theta_i = 1 \), we can get:
\[
M C^2 - 2C \left(x_A \sum_{i=1}^M \cos \theta_i +y_A \sum_{i=1}^M \sin \theta_i \right) + M (x_A^2 +y_A^2)
\]

Since \(x_A = \frac{C}{M} \sum_{i=1}^M \cos \theta_i \) and \(y_A = \frac{C}{M} \sum_{i=1}^M \sin \theta_i \), we can rewrite the expression above as:
\[
M C^2 - 2C \left(x_A \cdot \frac{Mx_A}{C} +y_A \cdot \frac{My_A}{C} \right) + M (x_A^2 +y_A^2)
\]

Further simplification yields:
\[
S = M C^2 - M (x_A^2 +y_A^2)
\]

This now tells us that maximizing \( R \) is equivalent to minimizing \(x_A^2 +y_A^2 \), which can be written as follows:
\[
x_A^2 +y_A^2 = \left( \frac{C}{M} \sum_{i=1}^M \cos \theta_i \right)^2 + \left( \frac{C}{M} \sum_{i=1}^M \sin \theta_i \right)^2
\]

To minimize \(x_A^2 +y_A^2 \), we require:
\[
\sum_{i=1}^M \cos \theta_i = 0 \quad \text{and} \quad \sum_{i=1}^M \sin \theta_i = 0,
\]

Although this does not have a unique solution, one straightforward solution is the symmetric distribution of the points on the circle so that the above expressions can be made equal to 0. This can be mathematically expressed as follows:

\[
\theta_i = \theta_0 + \frac{2 \pi (i - 1)}{M}, \quad i = 1, 2, \ldots, M
\]

Here \( \theta_0 \) is any fixed angle used as a reference for the symmetric arrangement. Also observe that, in this configuration, the maximum mean distance is $C$.

\subsection{Convergence in Non-Convex Setting}

\textbf{Formulation.} Let us have $M$ workers collaboratively process non-exclusive data shards to find a DNN model that attains the smallest loss on the training data: 
\[
\min \limits_{x \in \mathcal{R}^d} f(x) \stackrel{\Delta}{=} \frac{1}{M} \sum_{i=1}^M f_i(x)
\]

We define each $f_i(x)$ as $f_i(x) \stackrel{\Delta}{=} \mathbb{E}_{\xi_i \sim \mathcal{D}_i} \left [ F_i (x; \xi_i) \right ]$ where $\mathcal{D}_i$ is the portion of the data seen by worker $i$. We assume that each worker can locally observe unbiased, independent stochastic gradients similar to \citep{yu2019parallel}. $g_i^t = \nabla F_i(x_i^{t-1}; \xi_i^t)$ and $\mathbb{E}_{\xi_i^t \sim \mathcal{D}_i} \left [ g_i^t|\xi^{t-1} \right ] = \nabla f_i(x_i^{t-1})$ where $t$ denotes the iteration index. We assume the following technical conditions for the non-convex convergence rate analysis:

\begin{itemize}
    \item We assume each function $f_i(x)$ to be L-smooth: 
    \[ \|\nabla f_i(a) - \nabla f_i(b)\| \leq L \|a-b\| \]
    \item Bounded variance: 
    \[\mathbb{E}_{\xi_i \sim \mathcal{D}_i} \left [ \| \nabla F_i(x; \xi_i) - \nabla f_i(x) \|^2  \right ] \leq \sigma^2 \]
    \item Bounded domain at any iteration $t$: 
    \[ \mathbb{E} \left [ \| x_i^t - x_A^t \|^2  \right ]  \leq \Delta^2 \] 
    where $x_A^t$ is the average of the workers at iteration $t$, i.e. $x_A^t = \frac{1}{M} \sum_{i=1}^M x_i^t$.
\end{itemize}

Recall that the optimization objective we have is expressed as follows:
\begin{equation*}
\sum_{m=1}^{M} f_i (x_i) + \frac{\alpha}{2} \| x_i - x_C \|^2 - \frac{\lambda}{M} \sum_{i=1}^M \| x_{i} - x_A\|_2
\end{equation*}

We consider a general update rule that corresponds to optimizing the objective above in a stochastic way.
\begin{equation*}
x_i^t = x_t^{t-1} - \eta g_i^t - \left ( \alpha  (x_i^{t-1} - x_A^{t-1}) - \lambda \frac{x_i^{t-1} - x_A^{t-1}}{\|x_i^{t-1} - x_A^{t-1}\|} \right )
\end{equation*}

\textbf{Proof.} We start by writing the following expression from $L$-smoothness assumption:
\begin{align}
 \mathbb{E} \left [ f(x_i^t) \right ] \leq \mathbb{E} & \left [ f(x_i^{t-1}) \right ] \label{L_smoothness} \\
 + &\mathbb{E} \left [ \langle \nabla f(x_i^{t-1}, x_i^{t} - x_i^{t-1}) \rangle \right ] \label{term_1}\\
 + &\frac{L}{2} \mathbb{E} \left[ \| x_i^{t} - x_i^{t-1} \|^2 \right] \label{term_2}
\end{align}

We individually tackle the terms on the right-hand side (RHS) of the inequality. We start with \ref{term_2}:
\begin{align}
\mathbb{E} & \left [ \| x_i^{t} - x_i^{t-1} \|^2 \right ]  \\
=\mathbb{E} & \left [ \| - \eta g_i^t - \eta \alpha (x_i^{t-1} - x_A^{t-1}) + \eta \lambda \frac{x_i^{t-1} - x_A^{t-1}}{\|x_i^{t-1} - x_A^{t-1}\|} \|^2 \right ] \notag \\
\leq  3  & \mathbb{E} \left [ \|  - \eta g_i^t \|^2  \right ] \label{term2_1} \\
+& 3\mathbb{E} \left [ \| - \eta  \alpha (x_i^{t-1} - x_A^{t-1}) \|^2  \right ] \label{term2_2}\\ 
+& 3 \mathbb{E} \left [ \| + \eta  \lambda \frac{x_i^{t-1} - x_A^{t-1}}{\|x_i^{t-1} - x_A^{t-1}\|} \|^2  \right ] \label{term2_3}
\end{align}

Where the inequality follows from $\|\sum_{i=1}^n a_i\|^2 \leq n\sum_{i=1}^n \|a_i\|^2$ for $n=3$. We then individually consider the three terms on the RHS. We start by \ref{term2_1}:
\begin{align}
& \mathbb{E} \left [ \| - \eta g_i^t \|^2  \right ] \notag \\
& = \eta^2 \mathbb{E} \left [ \|  g_i^t - \nabla f_i(x_i^{t-1}) + \nabla f_i(x_i^{t-1}) \|^2  \right ] \notag \\
\leq &\eta^2 \left (  \mathbb{E} \left [ \|  g_i^t - \nabla f_i(x_i^{t-1})\|^2 \right ] + \mathbb{E} \|  \nabla f_i(x_i^{t-1}) \|^2 \right) \\
\leq &\eta^2 \left ( \sigma^2 + \mathbb{E} \|  \nabla f_i(x_i^{t-1}) \|^2 \right) 
\end{align}

Here the first inequality is obtained using $\mathbb{E} \left [ \|A\|^2  \right ] = \mathbb{E} \left [ \|A - \mathbb{E} \left [ A \right ]  \|^2  \right ] + \| \mathbb{E} \left [ A \right ]\|^2$ and the second one is reached by using the bounded variance assumption. For \ref{term2_2} and \ref{term2_3}, we have:

\begin{align*}
    \mathbb{E} & \left [ \| - \eta  \alpha (x_i^{t-1} - x_A^{t-1}) \|^2  \right ] \leq \eta^2 \alpha^2 \Delta^2 \hspace{3mm} \text{and} \\
    \mathbb{E} &\left [ \| + \eta  \lambda \frac{x_i^{t-1} - x_A^{t-1}}{\|x_i^{t-1} - x_A^{t-1}\|} \|^2  \right ] \leq \eta^2 \lambda^2
\end{align*}

By combining all these inequalities, we can derive an upper bound for \ref{term_2} as follows:

\begin{align}
\frac{L}{2}\mathbb{E} & \left [ \| x_i^{t} - x_i^{t-1} \|^2 \right ]  \label{term2_upper} \\
\leq  \frac{3L}{2}  & \eta^2 \left ( \sigma^2 + \mathbb{E} \|  \nabla f_i(x_i^{t-1}) \|^2 \right) \notag \\
+& \frac{3L}{2}  \eta^2 \alpha^2 \Delta^2 + \frac{3L}{2} \eta^2 \lambda^2 \notag
\end{align}

We now derive an upper bound for \ref{term_1}.

\begin{align}
     \mathbb{E}  \langle  \nabla f(x_i^{t-1}, &x_i^{t} - x_i^{t-1}) \rangle\\
     = \mathbb{E} &[- \eta\nabla  f(x_i^{t-1})^T (  g_i^t) ] \\
     + \mathbb{E} &[-\eta\alpha \nabla f(x_i^{t-1})^T   (x_i^{t-1} - x_A^{t-1}) \\
      + \mathbb{E} &[\eta\lambda \nabla f(x_i^{t-1})^T\frac{x_i^{t-1} - x_A^{t-1}}{\|x_i^{t-1} - x_A^{t-1}\|} ] 
\end{align}

For the first term on RHS, we can write the following by using $\mathbb{E}_{\xi_i^t \sim \mathcal{D}_i} \left [ g_i^t|\xi^{t-1} \right ] = \nabla f_i(x_i^{t-1})$.

\begin{equation*}
     \mathbb{E} [- \eta\nabla  f(x_i^{t-1})^T (  g_i^t) ]  = -\eta \mathbb{E} \|  \nabla f_i(x_i^{t-1}) \|^2
\end{equation*}

Then using Young's inequality, the upper bounds for the other two terms on the RHS can be written as follows:

\begin{align*}
    \mathbb{E} [-\eta\alpha \nabla &f(x_i^{t-1})^T   (x_i^{t-1} - x_A^{t-1})] \\
    \leq & \frac{\eta \alpha}{2} \mathbb{E} [\| \nabla f(x_i^{t-1}) \|^2] + \frac{\eta \alpha}{2}  \mathbb{E}[ \| x_i^{t-1} - x_A^{t-1} \|^2] \\
    \leq & \frac{\eta \alpha}{2} \mathbb{E} [\| \nabla f(x_i^{t-1}) \|^2] + \frac{\eta \alpha}{2}\Delta^2
\end{align*}

Similarly for the second term, we have:

\begin{align*}
    \mathbb{E} [\eta\lambda \nabla &f(x_i^{t-1})^T   \frac{x_i^{t-1} - x_A^{t-1}}{\|x_i^{t-1} - x_A^{t-1}\|})] \\
    \leq & \frac{\eta \lambda}{2} \mathbb{E} [\| \nabla f(x_i^{t-1}) \|^2] + \frac{\eta \lambda}{2}  \mathbb{E}[ ||\frac{x_i^{t-1} - x_A^{t-1}}{\|x_i^{t-1} - x_A^{t-1}\|}||^2] \\
    \leq & \frac{\eta \lambda}{2} \mathbb{E} [\| \nabla f(x_i^{t-1}) \|^2] + \frac{\eta \lambda}{2}
\end{align*}

Overall, we can write the following for \ref{term_1}:
\begin{align}
     \mathbb{E}  \langle  \nabla f(x_i^{t-1}, &x_i^{t} - x_i^{t-1}) \rangle \label{term1_upper}\\
     \leq & -\eta \mathbb{E} \|  \nabla f_i(x_i^{t-1}) \|^2 \notag \\
     & + \frac{\eta \alpha}{2} \mathbb{E} [\| \nabla f(x_i^{t-1}) \|^2] + \frac{\eta \alpha}{2}\Delta^2 \notag \\
     & + \frac{\eta \lambda}{2} \mathbb{E} [\| \nabla f(x_i^{t-1}) \|^2] + \frac{\eta \lambda}{2} \notag \\
\end{align}

Now using the upper bounds obtained in \ref{term1_upper} and \ref{term2_upper} we can write the following for \ref{L_smoothness}:

\begin{align*}
    \mathbb{E} &[f(x_i^t)] \leq \mathbb{E} [f(x_i^{t-1})] \\
    & - \eta \left ( 1-\frac{\alpha}{2}-\frac{\lambda}{2} - \frac{3L}{2}\eta \right ) \mathbb{E} [ \| \nabla f(x_i^{t-1}) \|^2] \\
    & + \frac{3L}{2}\eta^2 ( \alpha^2 \Delta^2 + \lambda^2 + \sigma^2) + \frac{\eta}{2}(\alpha \Delta^2 + \lambda)
\end{align*}

Furthermore, if we assume that $\left ( 1-\alpha-\lambda - 3L\eta \right ) > 0$, we can also write:

\begin{align*}
    \frac{\eta}{2}\mathbb{E} & [ \| \nabla f(x_i^{t-1}) \|^2] \leq\mathbb{E} [f(x_i^{t-1})] - \mathbb{E} [f(x_i^t)]  \\
    & + \frac{3L}{2} \eta^2 ( \alpha^2 \Delta^2 + \lambda^2 + \sigma^2) + \frac{\eta}{2}(\alpha \Delta^2 + \lambda)
\end{align*}

Finally, by multiplying both sides with $\frac{2}{\eta}$, summing the inequality from $t=0, 1, ..., T$ and taking the average of iterations and using $f(x^*) \leq f(x^T)$ where $x^*$ is the minimum of $f$, we reach the following:

\begin{align*}
    \frac{1}{T}\sum_{t=0}^T \mathbb{E} & [ \| \nabla f(x_i^{t-1}) \|^2] \leq \frac{2(f(x_0) - f(x^*))}{\eta T}  \\
    & + 3L \eta ( \alpha^2 \Delta^2 + \lambda^2 + \sigma^2) + \alpha \Delta^2 + \lambda
\end{align*}

This characterizes the convergence rate of a single worker in a non-convex setting by bounding the expected value of gradient norms averaged over iterations.

\section{More Loss and Error Landscape Visualizations}
\label{appendix:subsec:landscape_vis}
Here we present more visualizations by including train loss, test loss, train error and test error landscapes obtained after training ResNet-18 models on CIFAR-10/100 datasets with SimpleAvg and DPPF\textsubscript{SimpleAvg} when the underlying local optimizer is SGD. The visualization settings—including whether the train or test set is used, whether loss or error is shown, the dataset, axis limits, and step sizes—are specified in the captions. Observe that in all the plots below, the optima recovered by DPPF\textsubscript{SimpleAvg} are substantially wider than those of vanilla SimpleAvg, and the corresponding test error and loss values are consistently lower.

We also present our visualization technique here, as we do not directly inherit a visualization method from the literature. To project the worker positions on a 2D plane with loss/error contours, we perform Singular Value Decomposition (SVD) among the distance vectors calculated between workers and the average variable. Then, we take the two most representative components (unit vectors) that capture the most variance. Using these vectors, we scan a 2D grid whose endpoints and resolution are specified, starting from the average variable $x_A$, hence the $x_A$ is always at the origin of the grid.  We record the test and training loss and error statistics for each grid point, plot the resulting contours, and finally project the worker positions onto this plane. This procedure is described in Algorithm ~\ref{appendix:alg:visualization}.

\begin{algorithm}[H]
    \caption{Landscape Visualization}
    \label{appendix:alg:visualization}
    \Input{trained model parameters from $M$ workers $x_1, x_2, ..., x_M$; grid limit $L$ for defining the grid edges; step size $s$ for the resolution of the grid}

    $x_A \leftarrow \mathbf{0}$
    
    \For{$m = 1$ to $M$;}{
    $x_A \leftarrow x_A + \frac{x_m}{M}$
    }
    
    $\Delta \leftarrow [\hspace{2mm}]$
    
    \For{$m = 1$ to $M$;}{
    $\Delta \leftarrow \Delta  + [(x_m - x_A)]$
    }
    
    $\delta_x, \delta_y \leftarrow$ get useful vectors from SVD($\Delta$)
    
    $\Phi \leftarrow [\hspace{2mm}]$
    
    \For{$i = -L:s:L$;}{
    \For{$j = -L:s:L$;}{
    
    $x \leftarrow x_A + i \delta_x + j\delta_y$ 
    
    $\Phi \leftarrow \Phi + [f(x, D_{train}), f(x, D_{test})]$
    }
    }
    plot($\Phi$)

\end{algorithm}

\label{appendix:subsubsec:c10_landscape}
\vspace{-5mm}
\begin{figure}[H]
    \centering
    \begin{subfigure}{0.34\columnwidth} 
        \centering
        \includegraphics[width=\columnwidth]{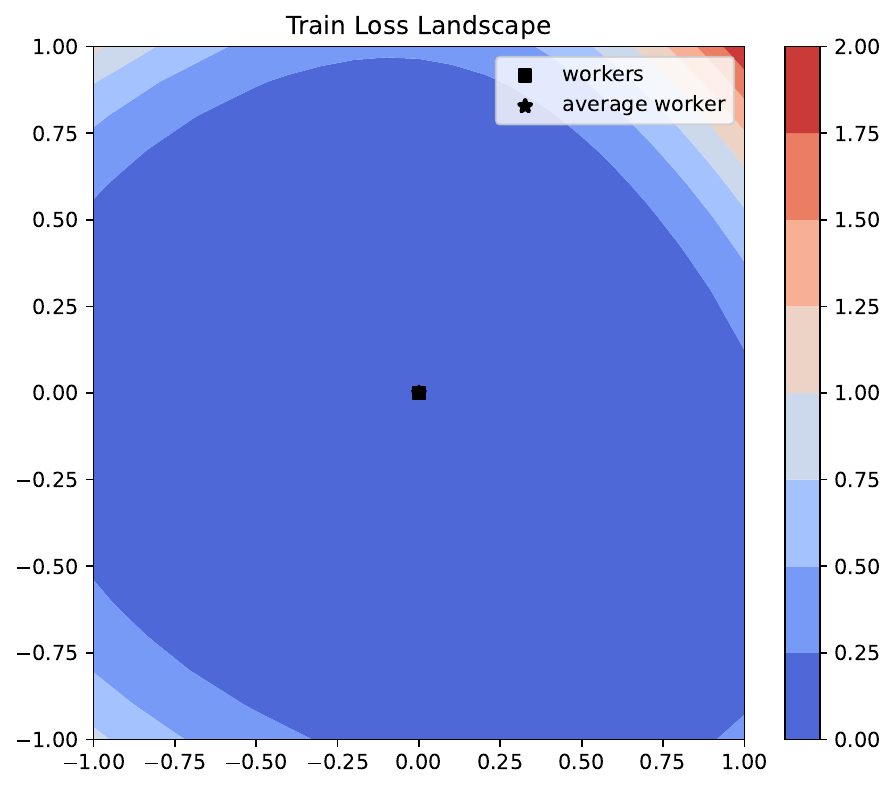}
        \caption{SimpleAvg}
    \end{subfigure}
    \hspace{10mm}
    \begin{subfigure}{0.34\columnwidth} 
        \centering
        \includegraphics[width=\columnwidth]{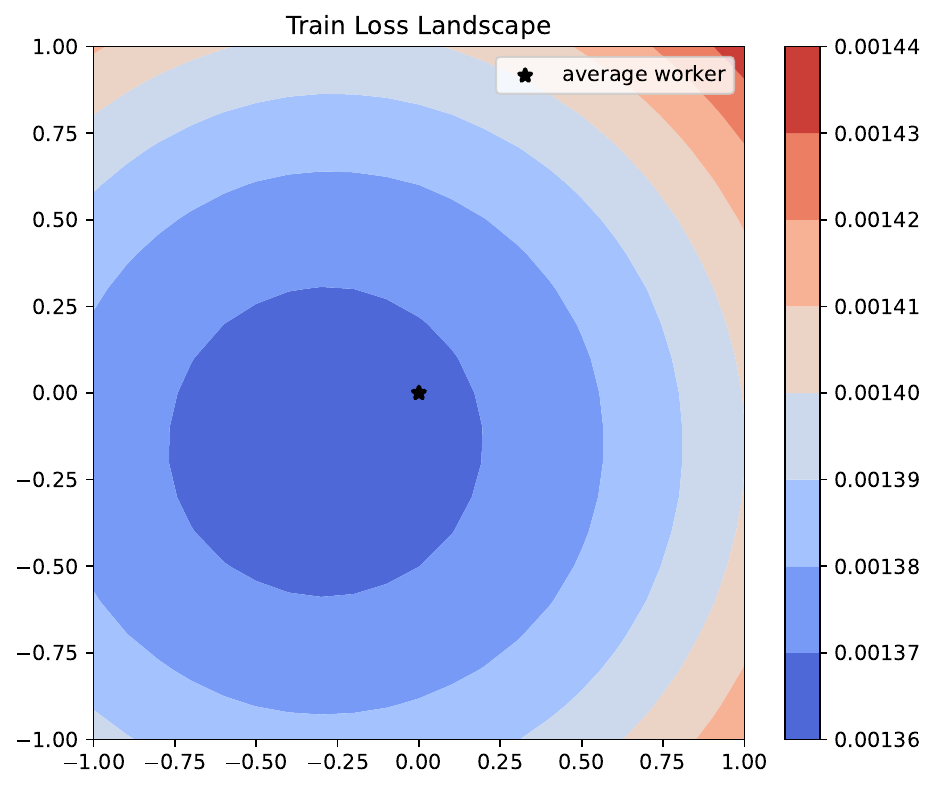}
        \caption{DPPF\textsubscript{SimpleAvg}}
    \end{subfigure}
    \caption{Training loss landscapes, $\text{lim}=1$, $\text{step}=0.1$, 4 workers, CIFAR-10.}
\end{figure}
\vspace{-5mm}
\begin{figure}[H]
    \centering
    \begin{subfigure}{0.34\columnwidth} 
        \centering
        \includegraphics[width=\columnwidth]{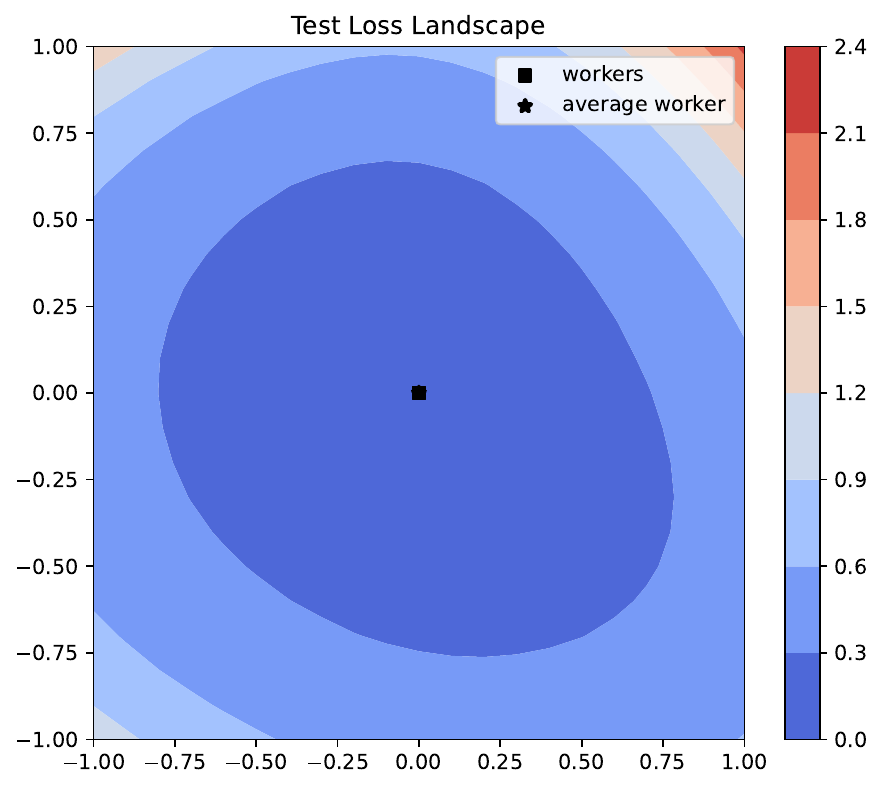}
        \caption{SimpleAvg}
        
    \end{subfigure}
    \hspace{10mm}
    \begin{subfigure}{0.34\columnwidth} 
        \centering
        \includegraphics[width=\columnwidth]{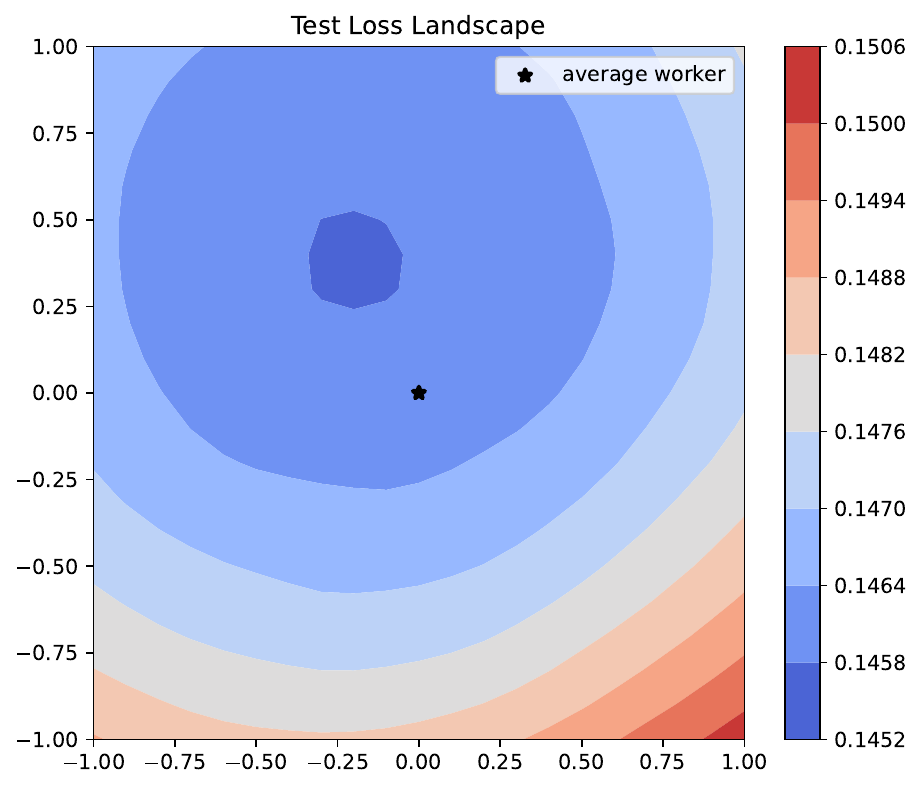}
        \caption{DPPF\textsubscript{SimpleAvg}}
    \end{subfigure}
    \caption{Test loss landscapes, $\text{lim}=1$, $\text{step}=0.1$, 4 workers, CIFAR-10.}
    
\end{figure}

\begin{figure}[H]
    \centering
    \begin{subfigure}{0.34\columnwidth} 
        \centering
        \includegraphics[width=\columnwidth]{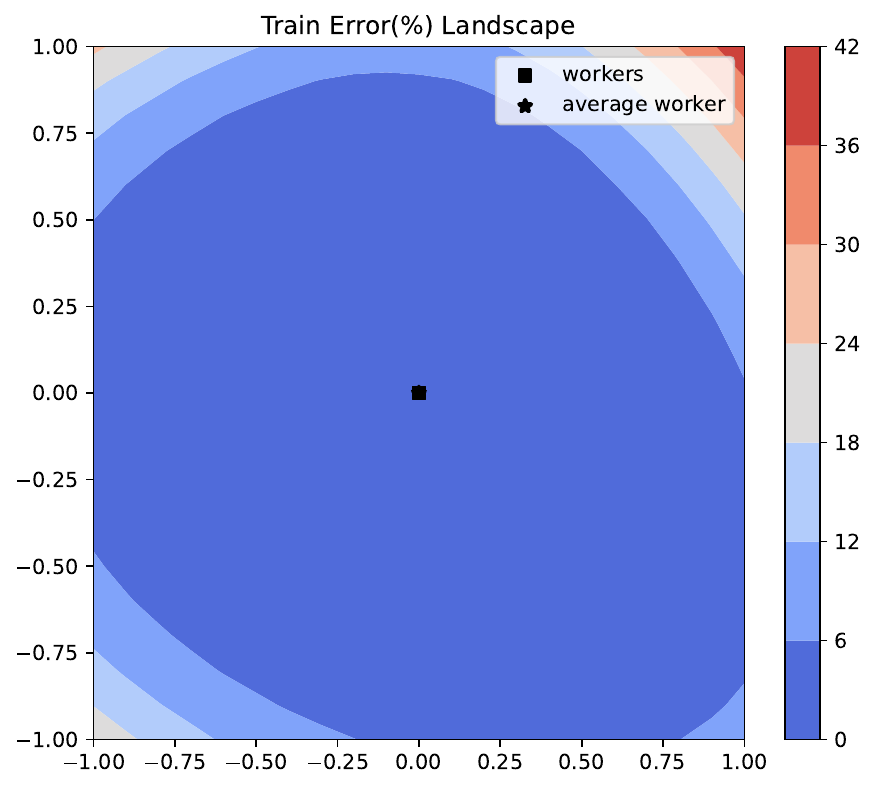}
        \caption{SimpleAvg}
        
    \end{subfigure}
    \hspace{10mm}
    \begin{subfigure}{0.34\columnwidth} 
        \centering
        \includegraphics[width=\columnwidth]{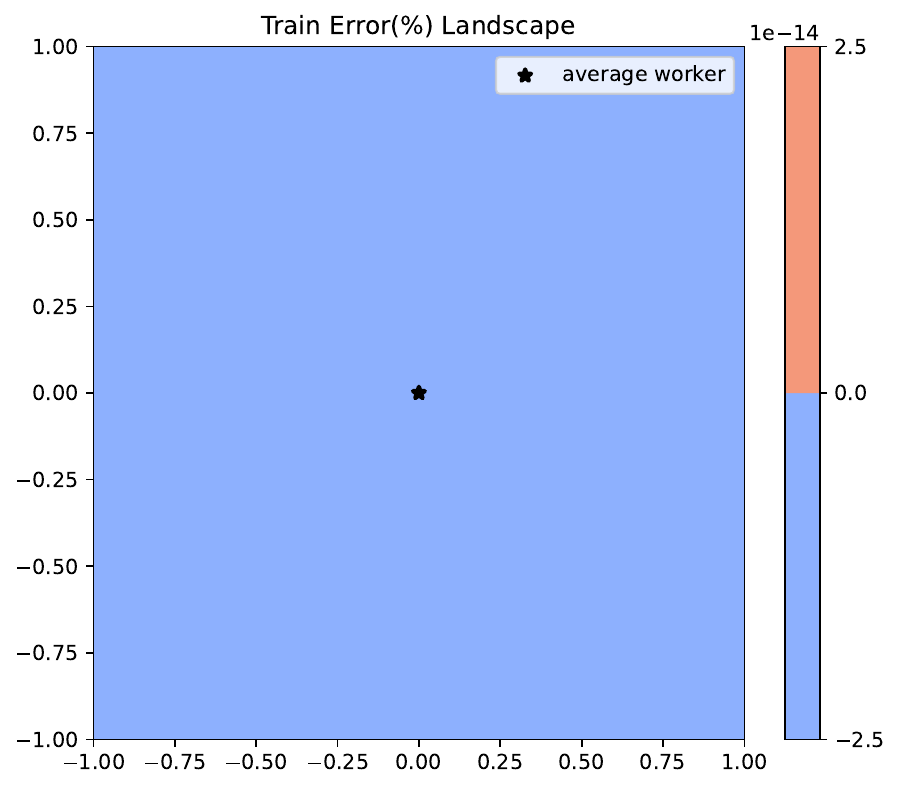}
        \caption{DPPF\textsubscript{SimpleAvg}}
    \end{subfigure}
    \caption{Training error (\%) landscapes, $\text{lim}=1$, $\text{step}=0.1$, 4 workers, CIFAR-10.}
    
\end{figure}

\begin{figure}[H]
    \centering
    \begin{subfigure}{0.34\columnwidth} 
        \centering
        \includegraphics[width=\columnwidth]{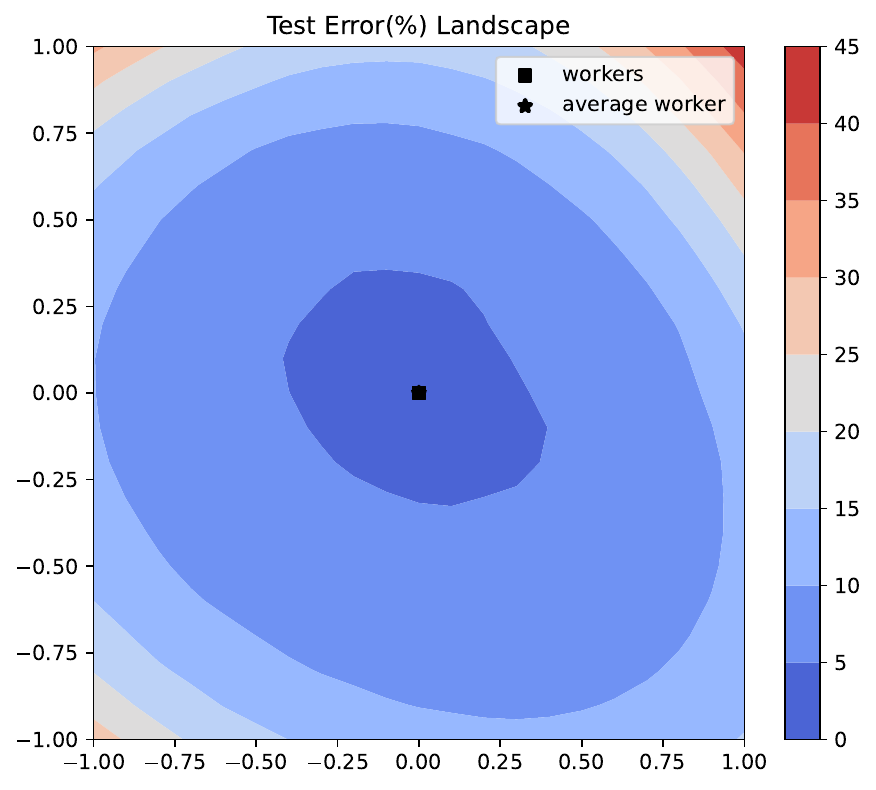}
        \caption{SimpleAvg}
        
    \end{subfigure}
    \hspace{10mm}
    \begin{subfigure}{0.34\columnwidth} 
        \centering
        \includegraphics[width=\columnwidth]{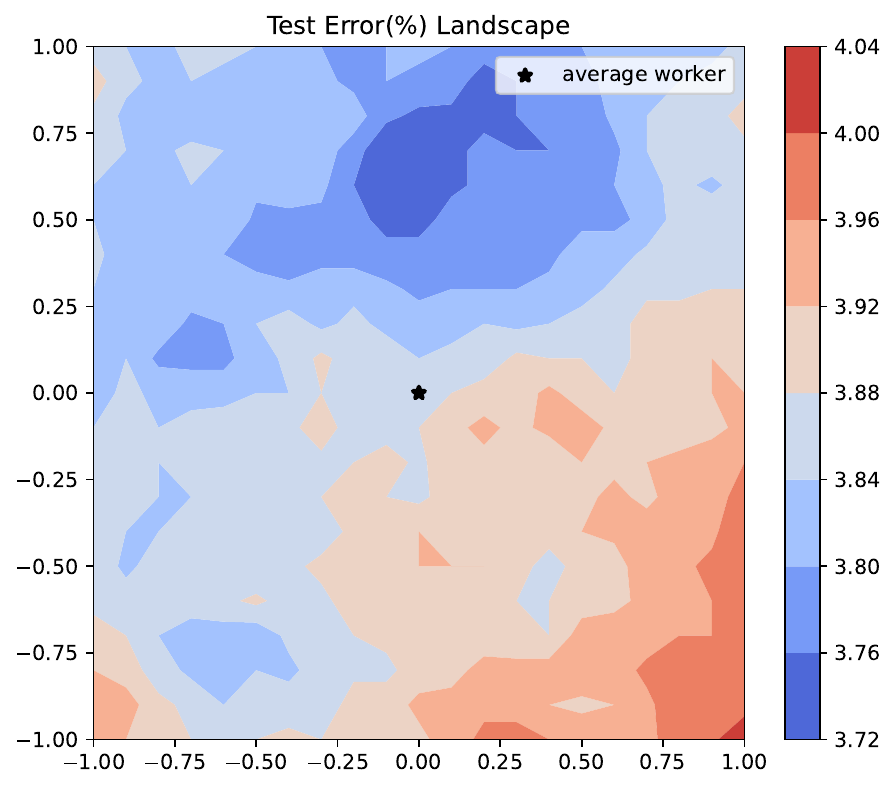}
        \caption{DPPF\textsubscript{SimpleAvg}}
    \end{subfigure}
    \caption{Test error (\%) landscapes, $\text{lim}=1$, $\text{step}=0.1$, 4 workers, CIFAR-10.}
    
\end{figure}

\begin{figure}[H]
    \centering
    \begin{subfigure}{0.34\columnwidth} 
        \centering
        \includegraphics[width=\columnwidth]{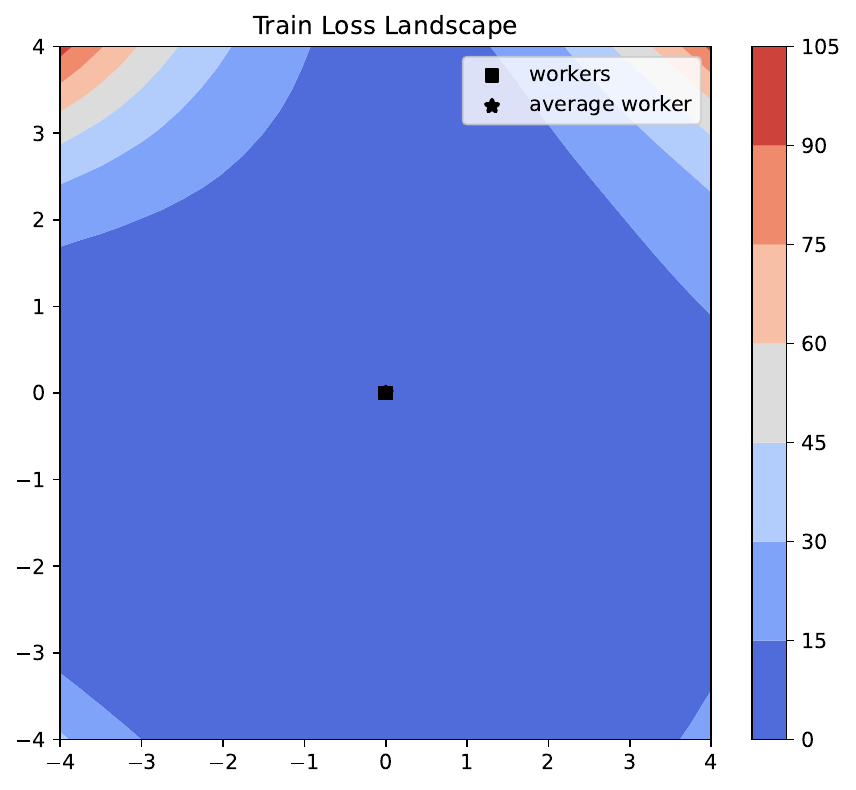}
        \caption{SimpleAvg}
        
    \end{subfigure}
    \hspace{10mm}
    \begin{subfigure}{0.34\columnwidth} 
        \centering
        \includegraphics[width=\columnwidth]{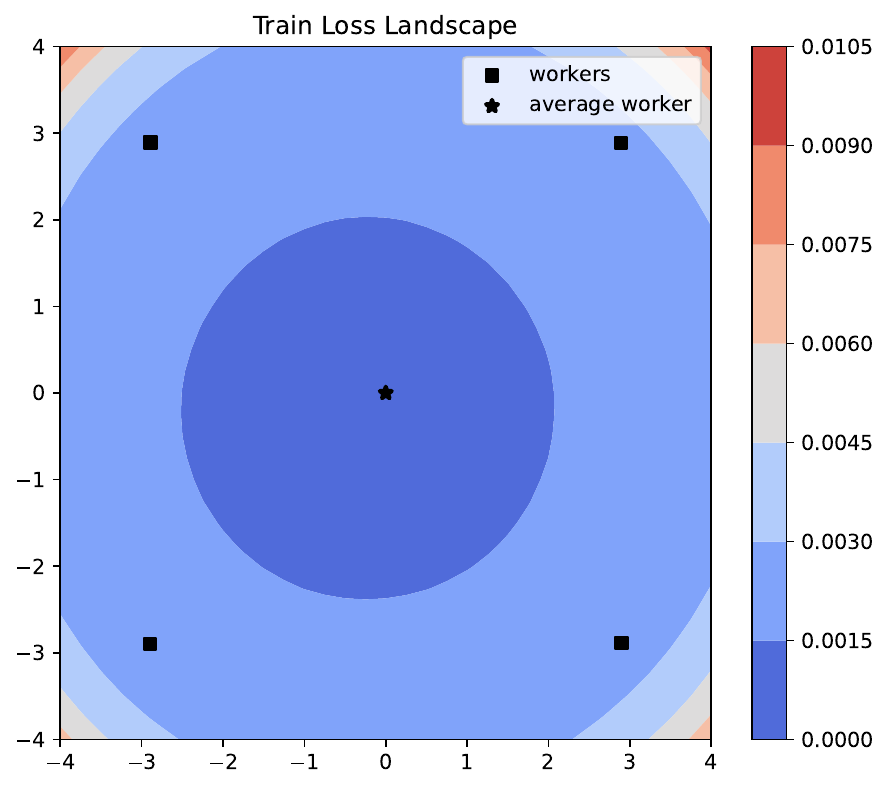}
        \caption{DPPF\textsubscript{SimpleAvg}}
    \end{subfigure}
    \caption{Training loss landscapes, $\text{lim}=4$, $\text{step}=0.25$, 4 workers, CIFAR-10.}
    
\end{figure}

\begin{figure}[H]
    \centering
    \begin{subfigure}{0.34\columnwidth} 
        \centering
        \includegraphics[width=\columnwidth]{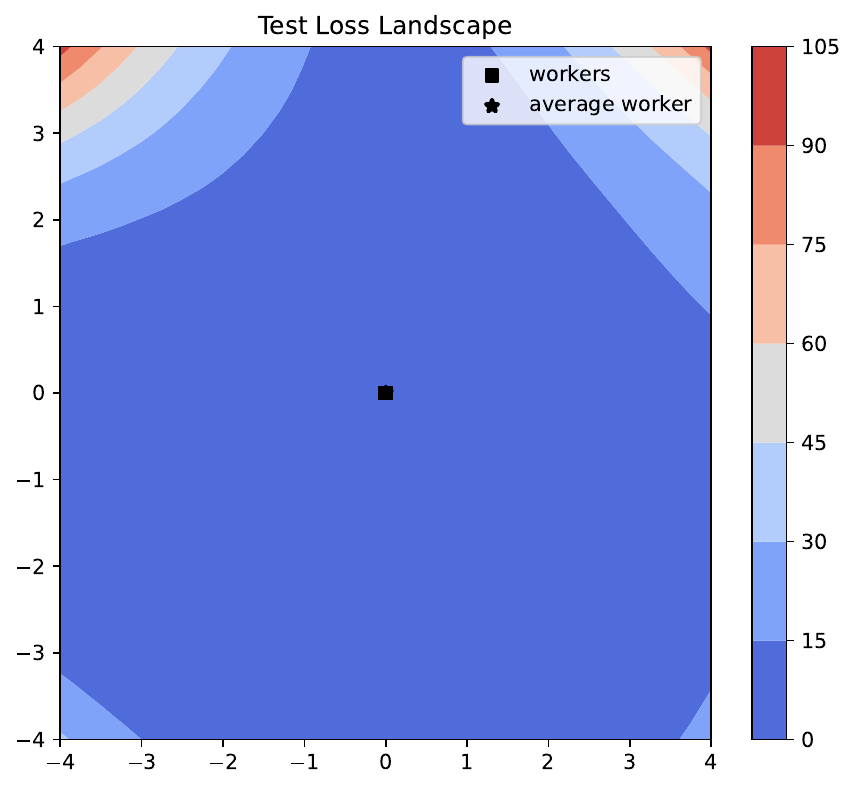}
        \caption{SimpleAvg}
        
    \end{subfigure}
    \hspace{10mm}
    \begin{subfigure}{0.34\columnwidth} 
        \centering
        \includegraphics[width=\columnwidth]{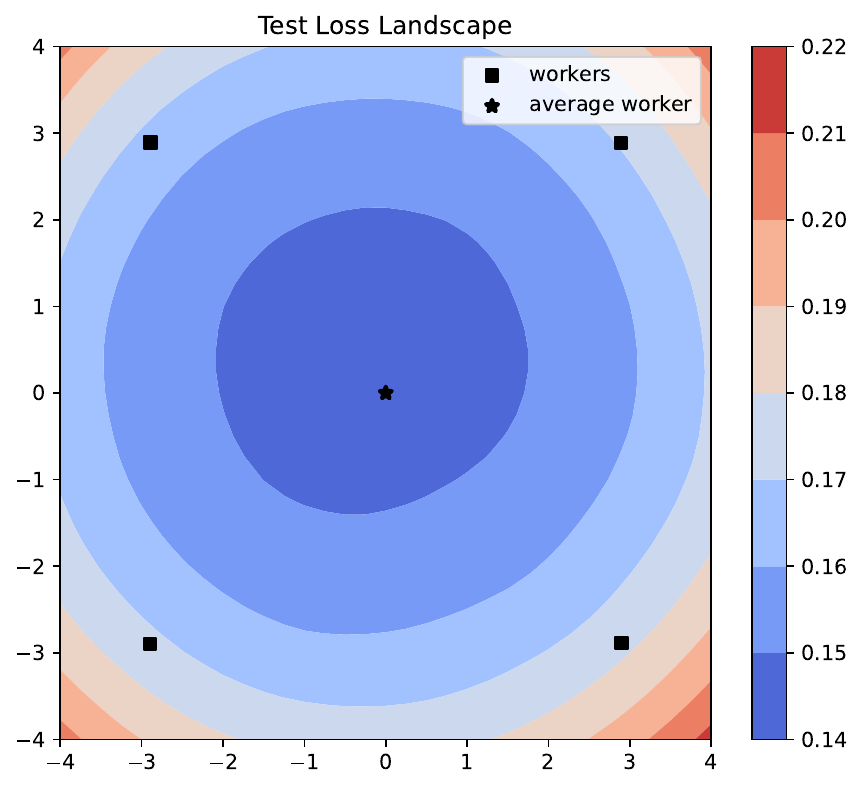}
        \caption{DPPF\textsubscript{SimpleAvg}}
    \end{subfigure}
    \caption{Test loss landscapes, $\text{lim}=4$, $\text{step}=0.25$, 4 workers, CIFAR-10.}
    
\end{figure}

\begin{figure}[H]
    \centering
    \begin{subfigure}{0.34\columnwidth} 
        \centering
        \includegraphics[width=\columnwidth]{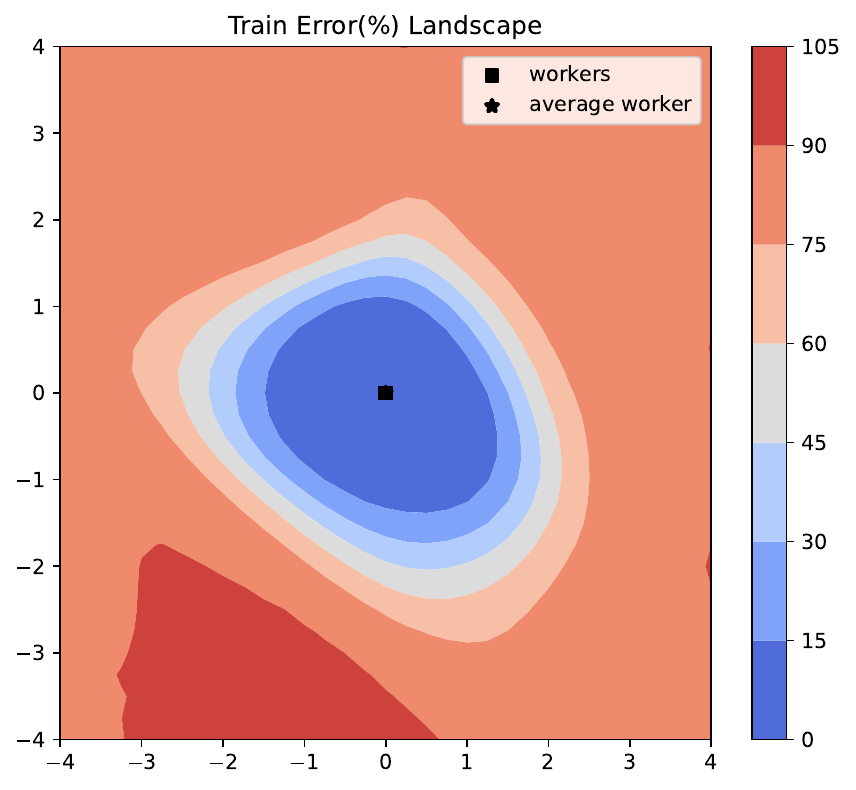}
        \caption{SimpleAvg}
        
    \end{subfigure}
    \hspace{10mm}
    \begin{subfigure}{0.34\columnwidth} 
        \centering
        \includegraphics[width=\columnwidth]{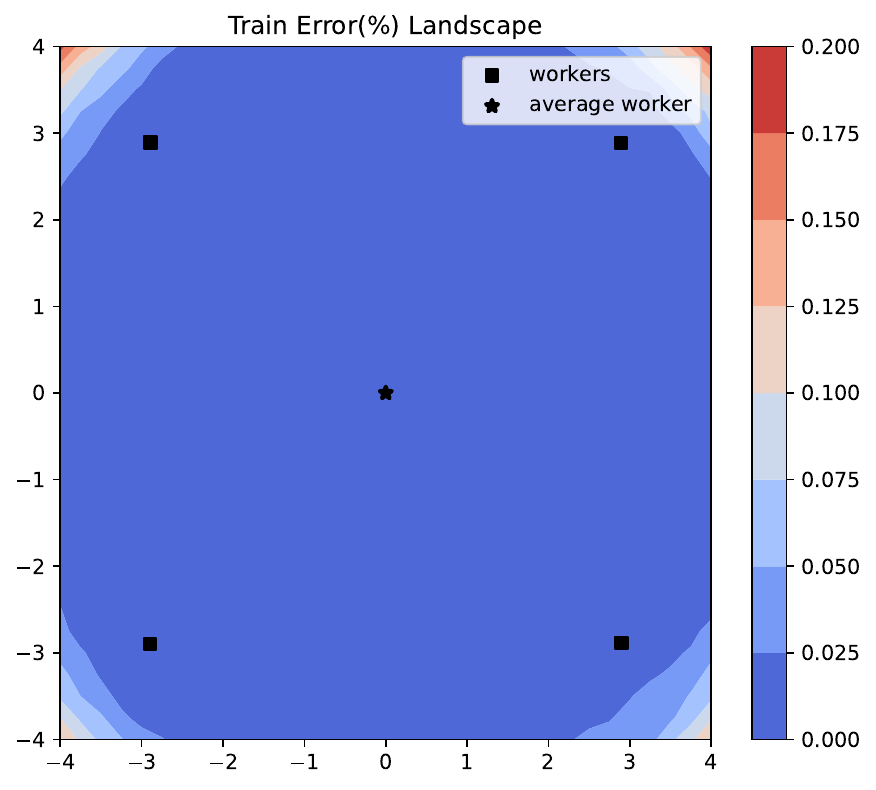}
        \caption{DPPF\textsubscript{SimpleAvg}}
    \end{subfigure}
    \caption{Training error (\%) landscapes, $\text{lim}=4$, $\text{step}=0.25$, 4 workers, CIFAR-10.}
    
\end{figure}

\begin{figure}[H]
    \centering
    \begin{subfigure}{0.34\columnwidth} 
        \centering
        \includegraphics[width=\columnwidth]{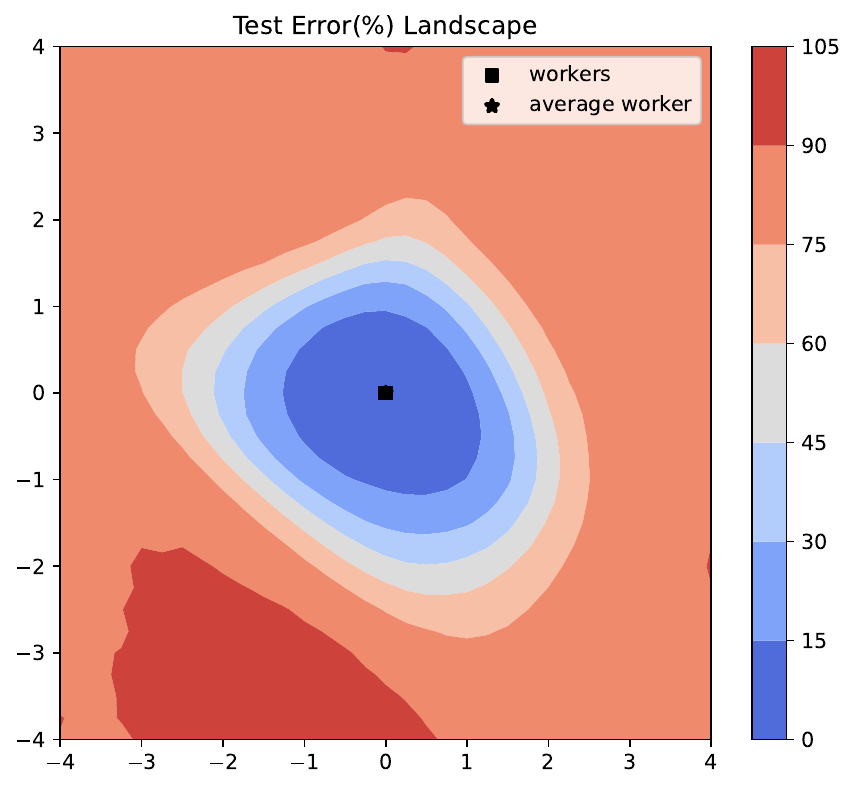}
        \caption{SimpleAvg}
        
    \end{subfigure}
    \hspace{10mm}
    \begin{subfigure}{0.34\columnwidth} 
        \centering
        \includegraphics[width=\columnwidth]{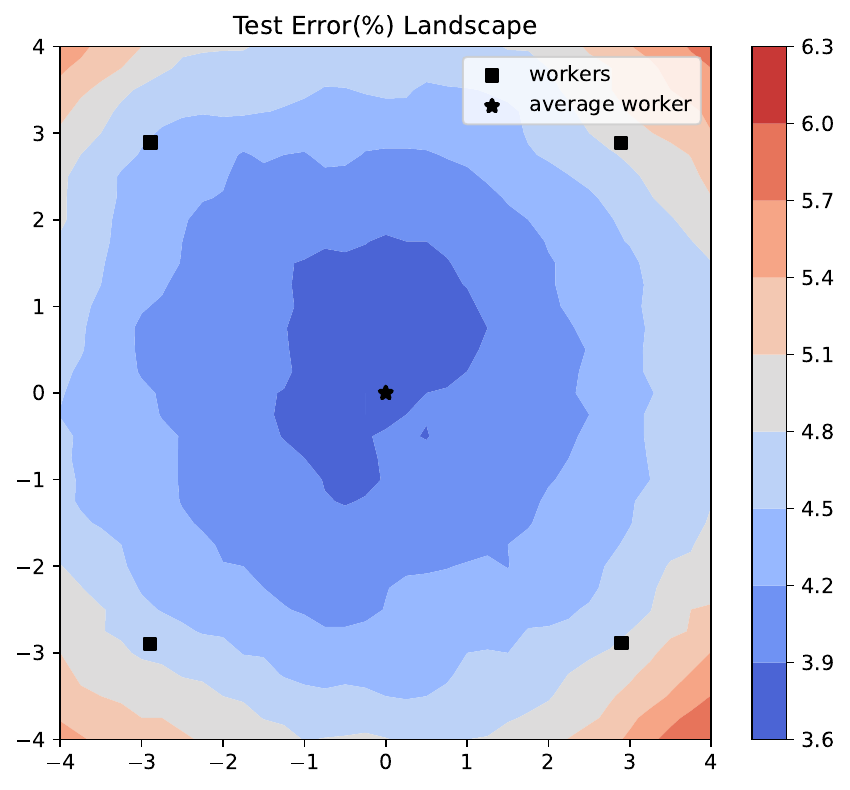}
        \caption{DPPF\textsubscript{SimpleAvg}}
    \end{subfigure}
    \caption{Test error (\%) landscapes, $\text{lim}=4$, $\text{step}=0.25$, 4 workers, CIFAR-10.}
    
\end{figure}

\label{appendix:subsubsec:c100_landscape}

\begin{figure}[H]
    \centering
    \begin{subfigure}{0.34\columnwidth} 
        \centering
        \includegraphics[width=\columnwidth]{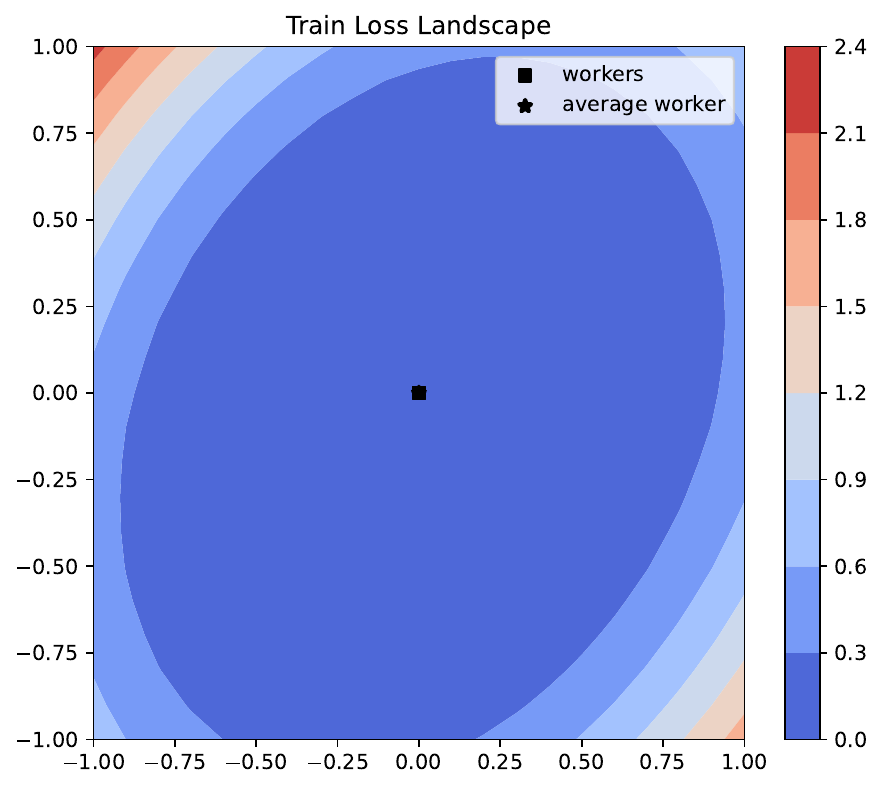}
        \caption{SimpleAvg}
        
    \end{subfigure}
    \hspace{10mm}
    \begin{subfigure}{0.34\columnwidth} 
        \centering
        \includegraphics[width=\columnwidth]{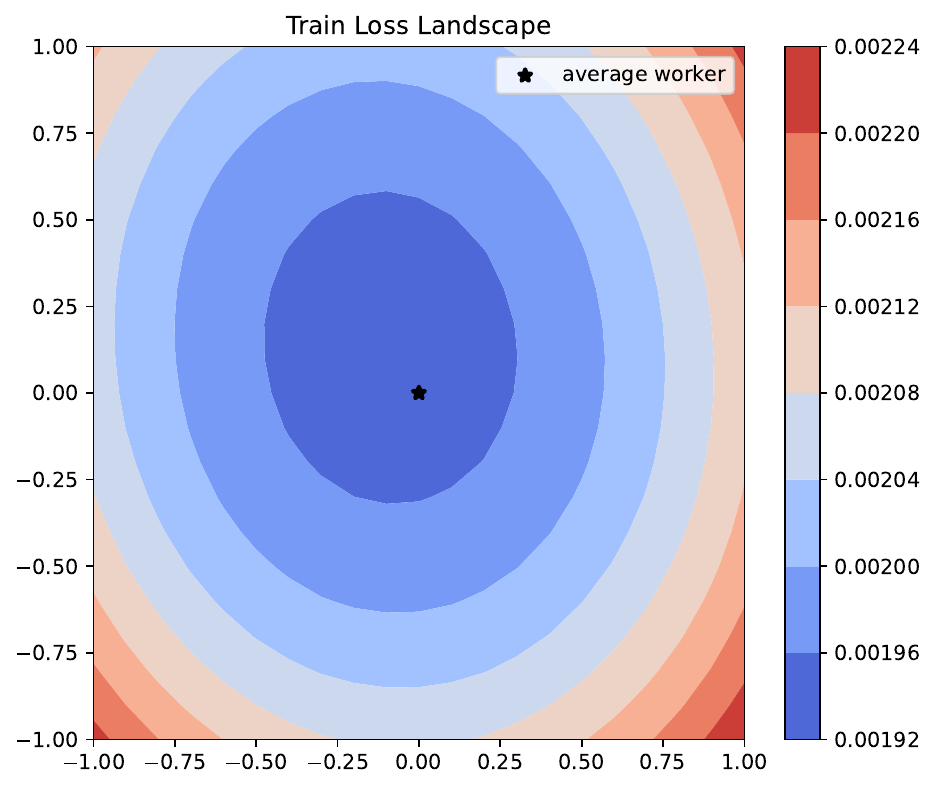}
        \caption{DPPF\textsubscript{SimpleAvg}}
    \end{subfigure}
    \caption{Training loss landscapes, $\text{lim}=1$, $\text{step}=0.1$, 4 workers, CIFAR-100.}
    
\end{figure}

\begin{figure}[H]
    \centering
    \begin{subfigure}{0.34\columnwidth} 
        \centering
        \includegraphics[width=\columnwidth]{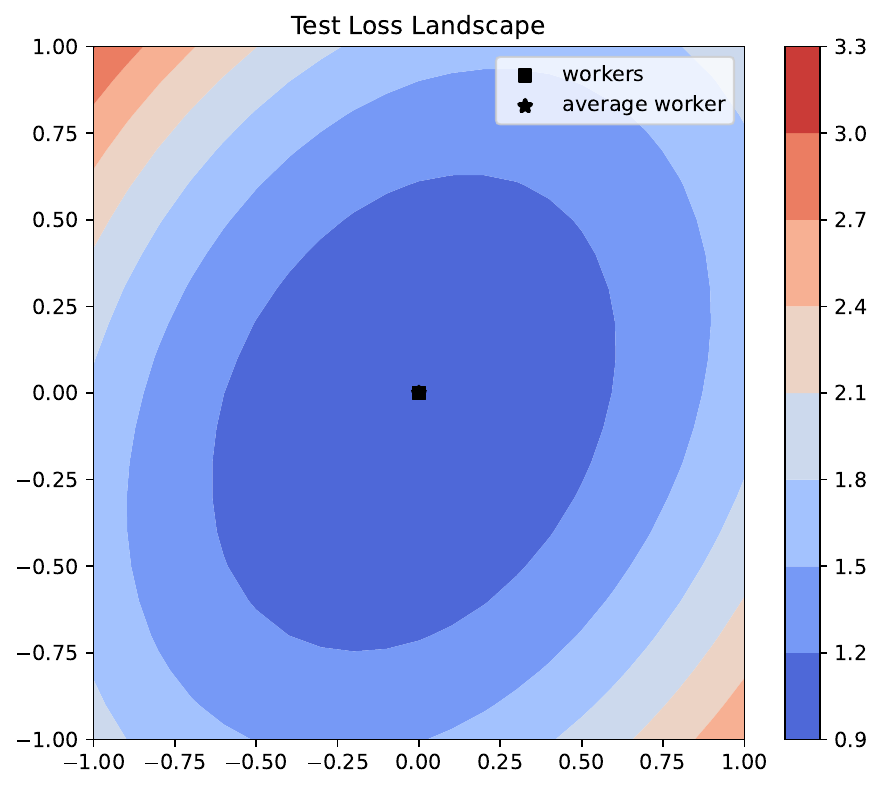}
        \caption{SimpleAvg}
        
    \end{subfigure}
    \hspace{10mm}
    \begin{subfigure}{0.34\columnwidth} 
        \centering
        \includegraphics[width=\columnwidth]{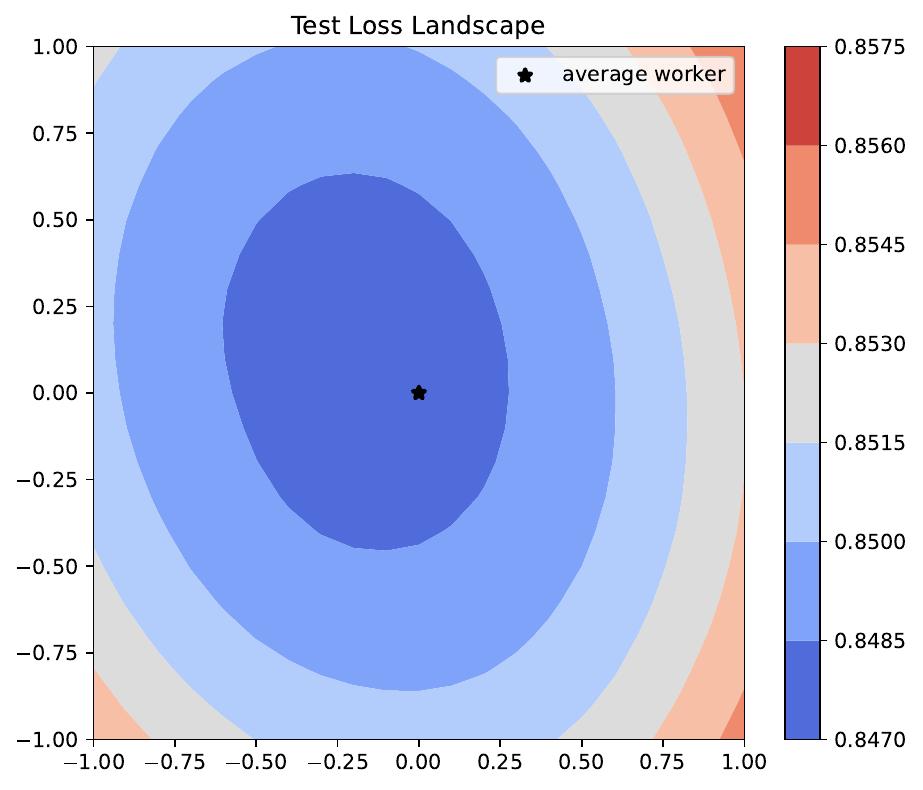}
        \caption{DPPF\textsubscript{SimpleAvg}}
    \end{subfigure}
    \caption{Test loss landscapes, $\text{lim}=1$, $\text{step}=0.1$, 4 workers, CIFAR-100.}
    
\end{figure}

\begin{figure}[H]
    \centering
    \begin{subfigure}{0.34\columnwidth} 
        \centering
        \includegraphics[width=\columnwidth]{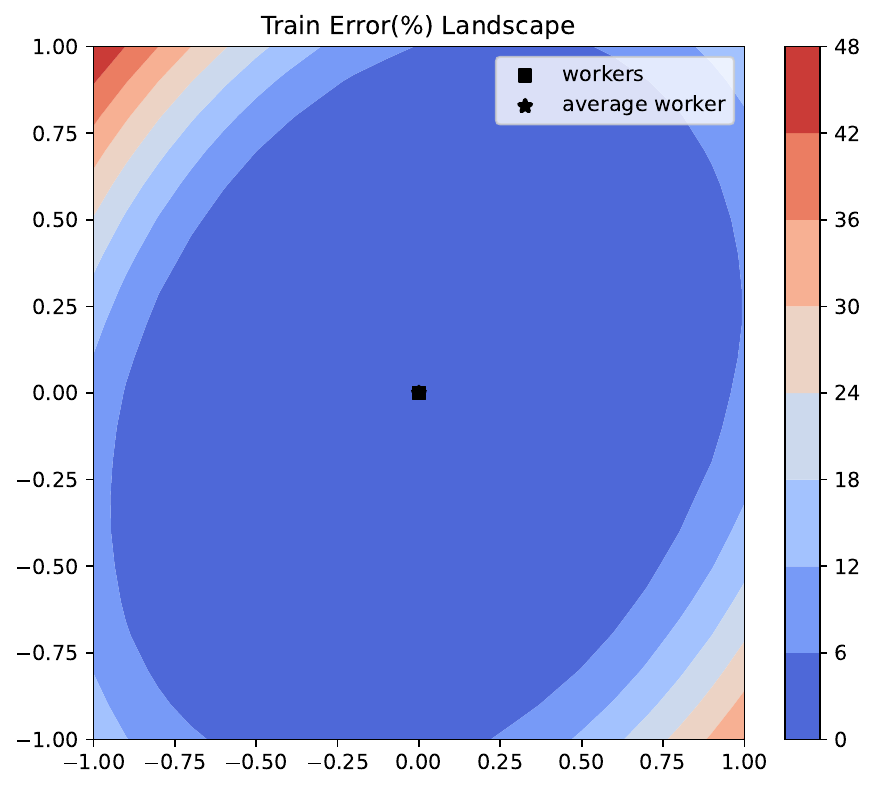}
        \caption{SimpleAvg}
        
    \end{subfigure}
    \hspace{10mm}
    \begin{subfigure}{0.34\columnwidth} 
        \centering
        \includegraphics[width=\columnwidth]{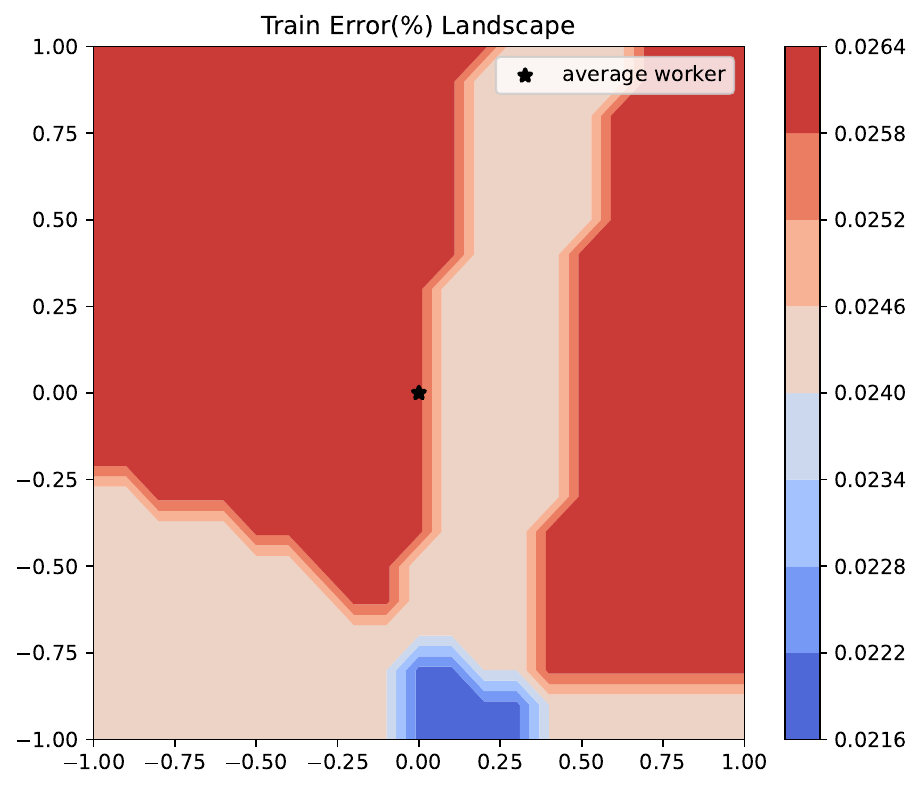}
        \caption{DPPF\textsubscript{SimpleAvg}}
    \end{subfigure}
    \caption{Training error (\%) landscapes, $\text{lim}=1$, $\text{step}=0.1$, 4 workers, CIFAR-100.}
    
\end{figure}

\begin{figure}[H]
    \centering
    \begin{subfigure}{0.34\columnwidth} 
        \centering
        \includegraphics[width=\columnwidth]{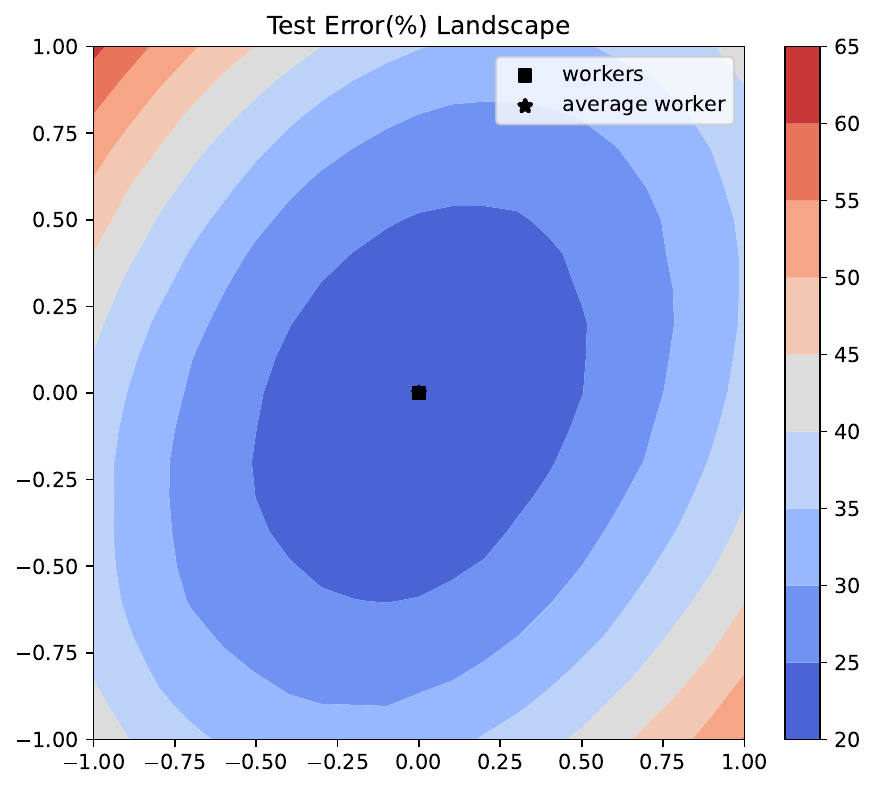}
        \caption{SimpleAvg}
        
    \end{subfigure}
    \hspace{10mm}
    \begin{subfigure}{0.34\columnwidth} 
        \centering
        \includegraphics[width=\columnwidth]{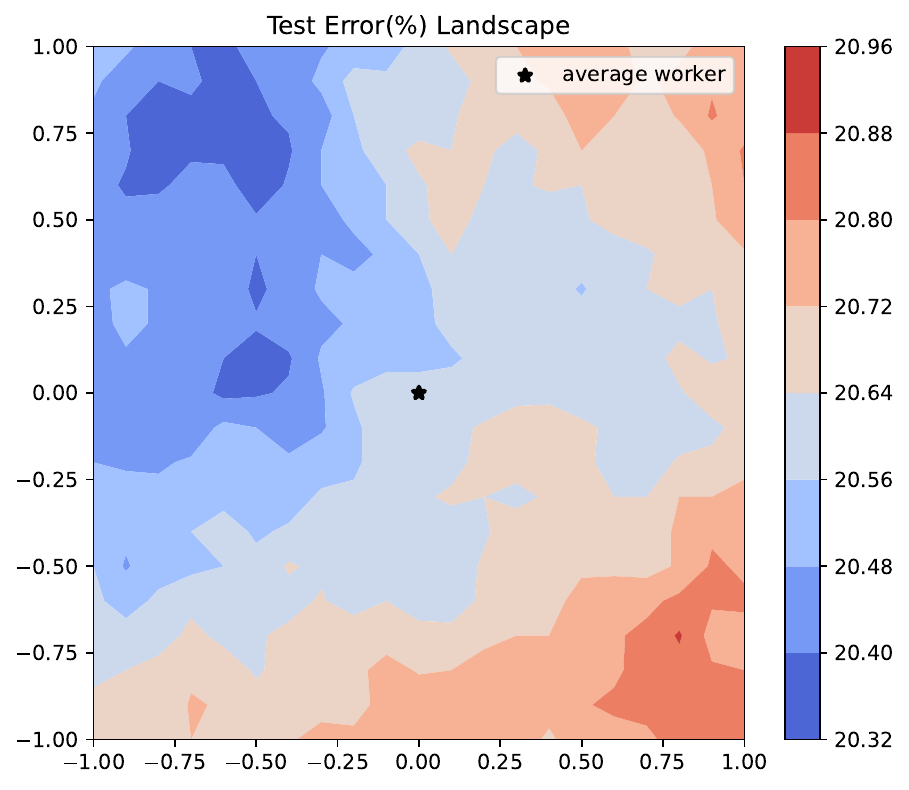}
        \caption{DPPF\textsubscript{SimpleAvg}}
    \end{subfigure}
    \caption{Test error (\%) landscapes, $\text{lim}=1$, $\text{step}=0.1$, 4 workers, CIFAR-100.}
    
\end{figure}

\begin{figure}[H]
    \centering
    \begin{subfigure}{0.34\columnwidth} 
        \centering
        \includegraphics[width=\columnwidth]{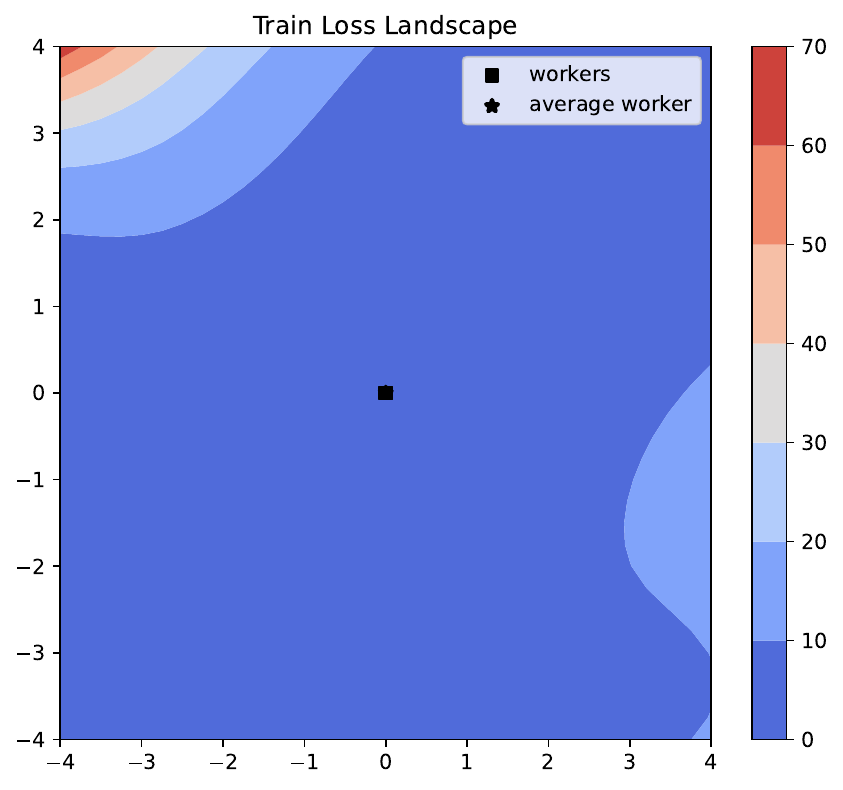}
        \caption{SimpleAvg}
        
    \end{subfigure}
    \hspace{10mm}
    \begin{subfigure}{0.34\columnwidth} 
        \centering
        \includegraphics[width=\columnwidth]{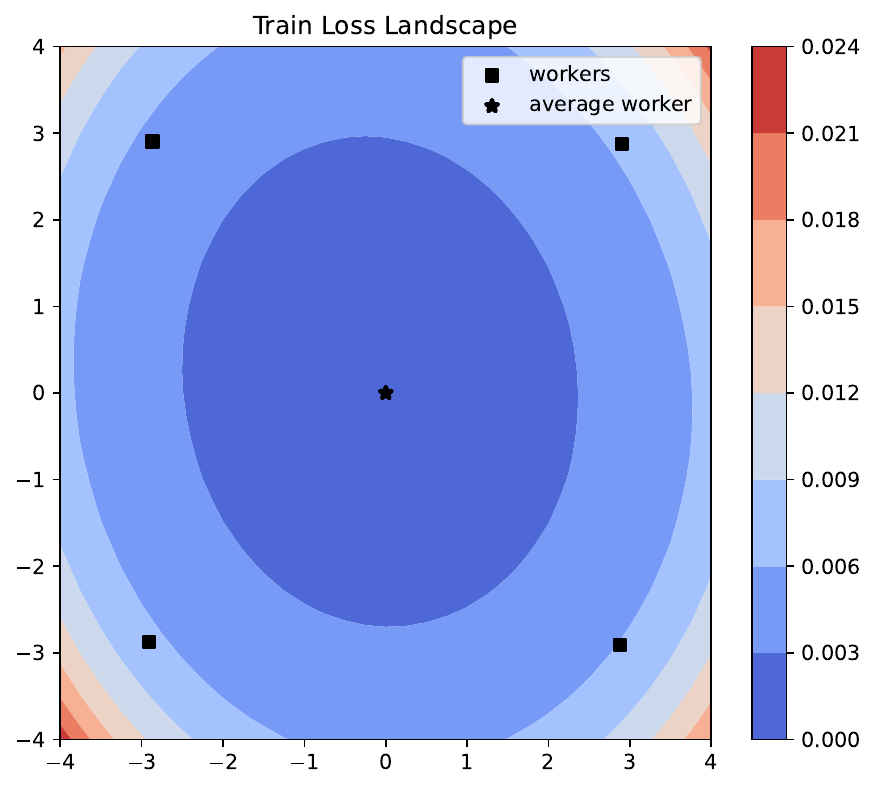}
        \caption{DPPF\textsubscript{SimpleAvg}}
    \end{subfigure}
    \caption{Training loss landscapes, $\text{lim}=4$, $\text{step}=0.25$, 4 workers, CIFAR-100.}
    
\end{figure}

\begin{figure}[H]
    \centering
    \begin{subfigure}{0.34\columnwidth} 
        \centering
        \includegraphics[width=\columnwidth]{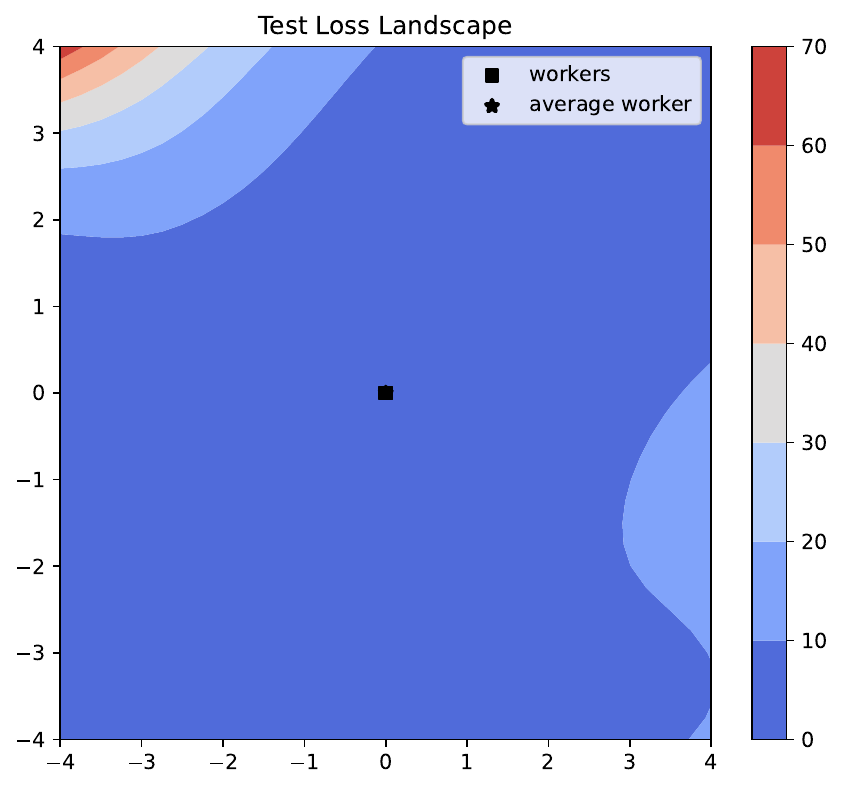}
        \caption{SimpleAvg}
        
    \end{subfigure}
    \hspace{10mm}
    \begin{subfigure}{0.34\columnwidth} 
        \centering
        \includegraphics[width=\columnwidth]{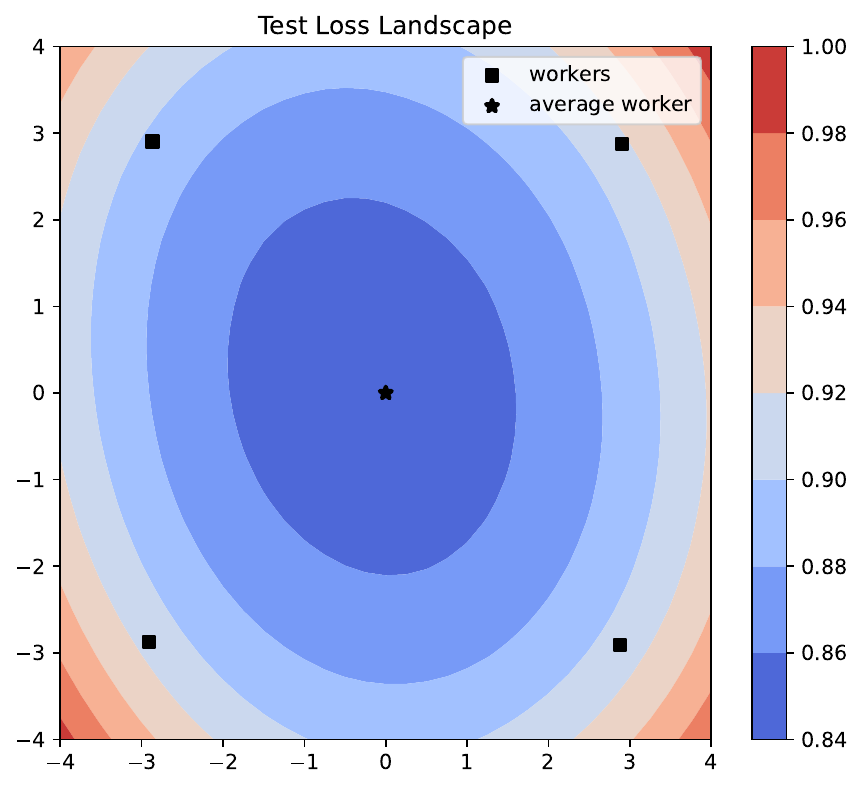}
        \caption{DPPF\textsubscript{SimpleAvg}}
    \end{subfigure}
    \caption{Test loss landscapes, $\text{lim}=4$, $\text{step}=0.25$, 4 workers, CIFAR-100.}
    
\end{figure}

\begin{figure}[H]
    \centering
    \begin{subfigure}{0.34\columnwidth} 
        \centering
        \includegraphics[width=\columnwidth]{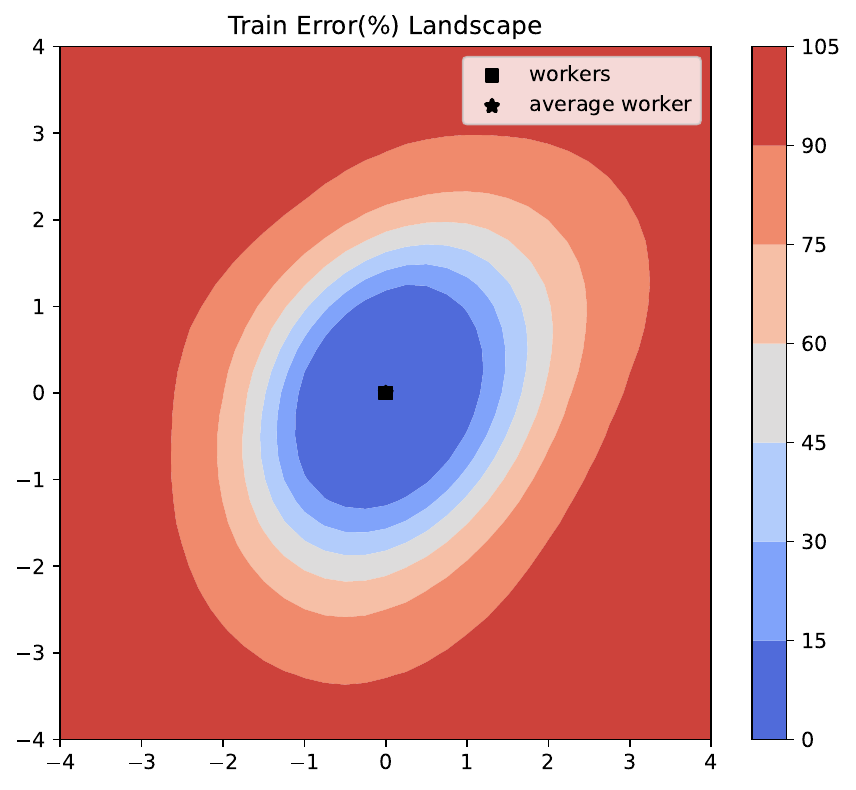}
        \caption{SimpleAvg}
        
    \end{subfigure}
    \hspace{10mm}
    \begin{subfigure}{0.34\columnwidth} 
        \centering
        \includegraphics[width=\columnwidth]{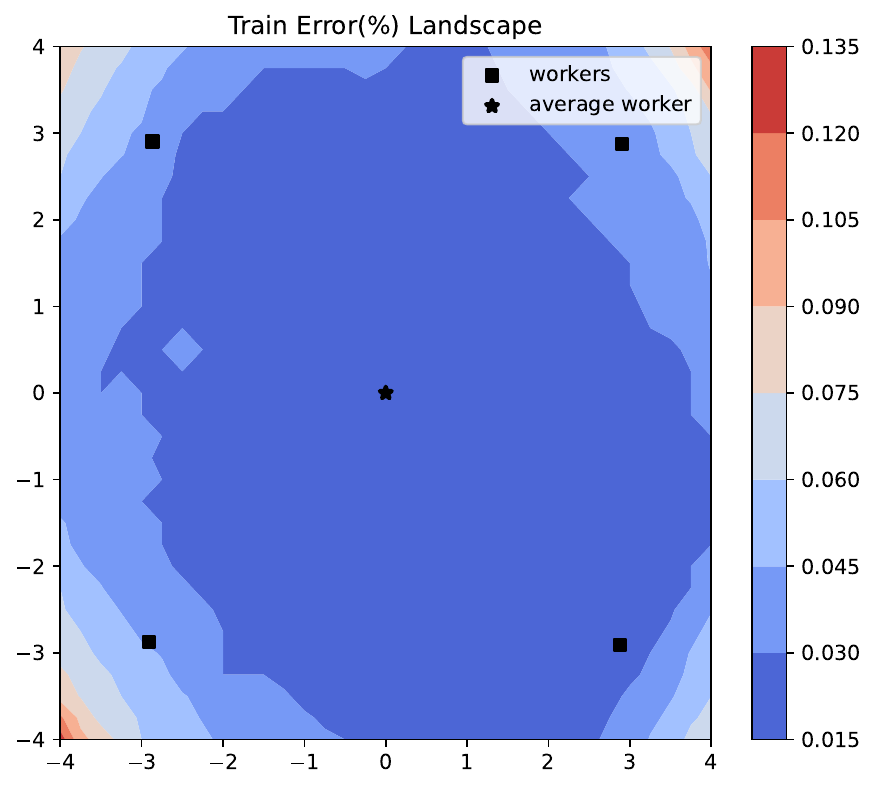}
        \caption{DPPF\textsubscript{SimpleAvg}}
    \end{subfigure}
    \caption{Training error (\%) landscapes, $\text{lim}=4$, $\text{step}=0.25$, 4 workers, CIFAR-100.}
    
\end{figure}

\begin{figure}[H]
    \centering
    \begin{subfigure}{0.34\columnwidth} 
        \centering
        \includegraphics[width=\columnwidth]{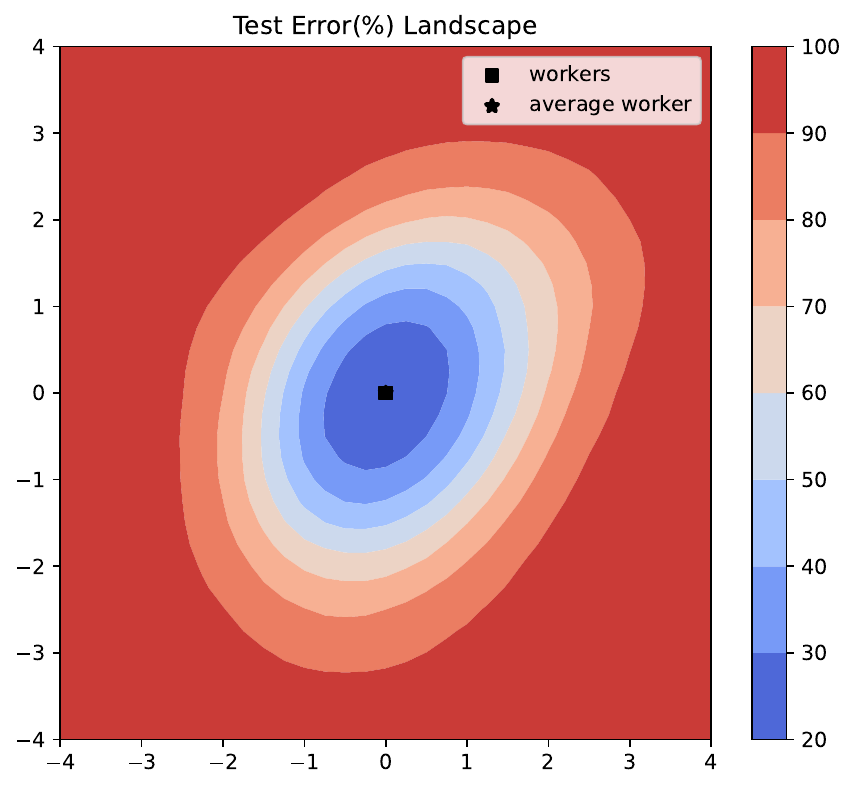}
        \caption{SimpleAvg}
        
    \end{subfigure}
    \hspace{10mm}
    \begin{subfigure}{0.34\columnwidth} 
        \centering
        \includegraphics[width=\columnwidth]{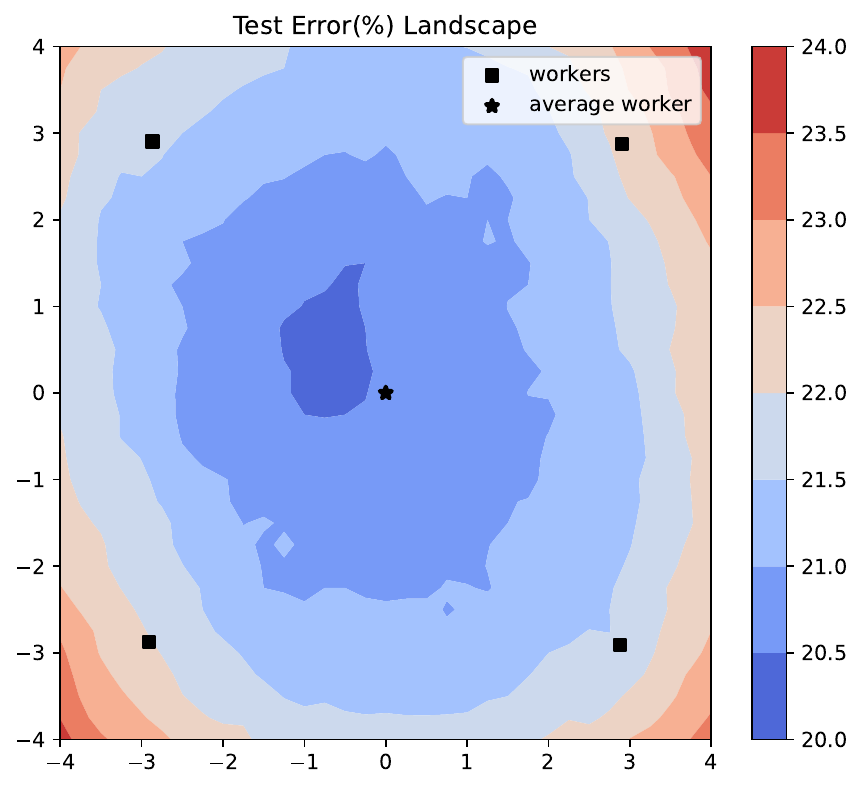}
        \caption{DPPF\textsubscript{SimpleAvg}}
    \end{subfigure}
    \caption{Test error (\%) landscapes, $\text{lim}=4$, $\text{step}=0.25$, 4 workers, CIFAR-100.}
    
\end{figure}

\label{appendix:subsubsec:c10_landscape_3d}

\begin{figure}[H]
    \centering
    \begin{subfigure}{0.34\columnwidth} 
        \centering
        \includegraphics[width=\columnwidth]{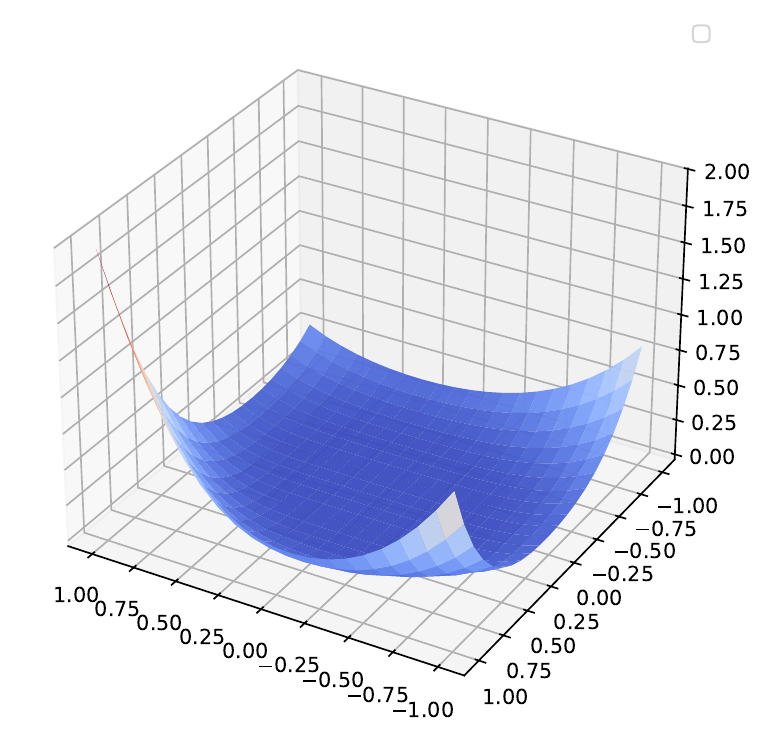}
        \caption{SimpleAvg}
        
    \end{subfigure}
    \hspace{10mm}
    \begin{subfigure}{0.34\columnwidth} 
        \centering
        \includegraphics[width=\columnwidth]{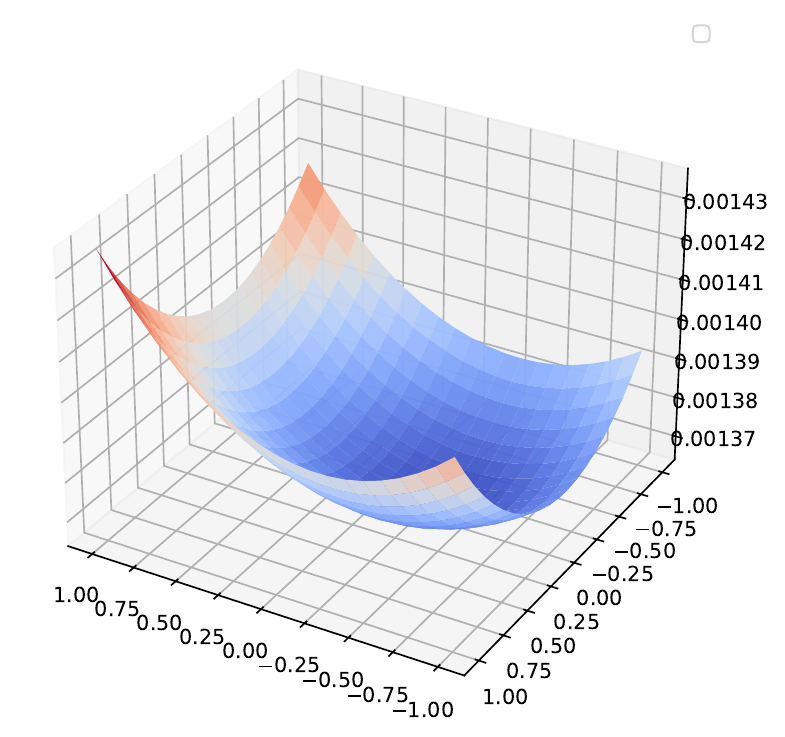}
        \caption{DPPF\textsubscript{SimpleAvg}}
    \end{subfigure}
    \caption{Training loss landscapes, $\text{lim}=1$, $\text{step}=0.1$, 4 workers, CIFAR-10.}
    
\end{figure}
\begin{figure}[H]
    \centering
    \begin{subfigure}{0.34\columnwidth} 
        \centering
        \includegraphics[width=\columnwidth]{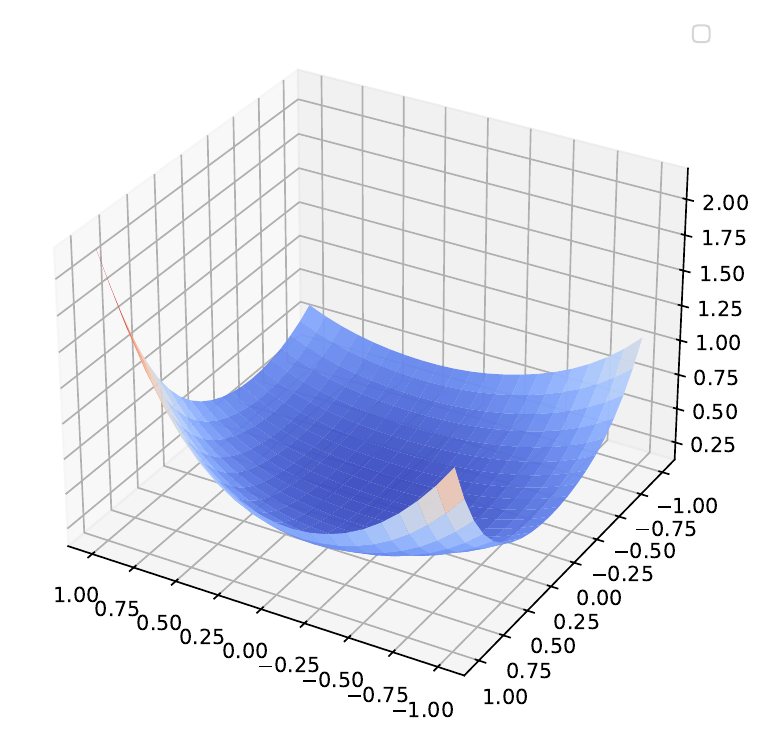}
        \caption{SimpleAvg}
        
    \end{subfigure}
    \hspace{10mm}
    \begin{subfigure}{0.34\columnwidth} 
        \centering
        \includegraphics[width=\columnwidth]{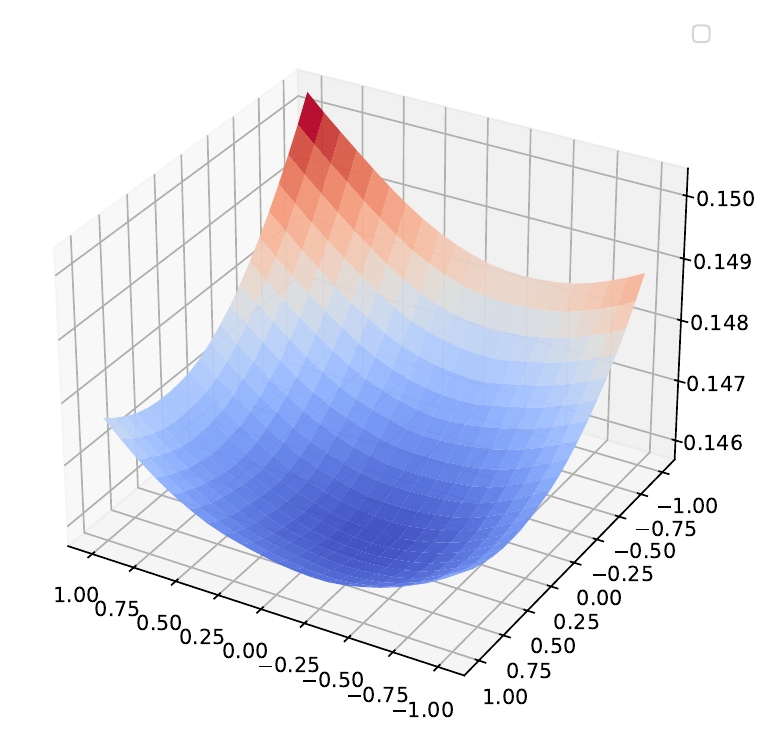}
        \caption{DPPF\textsubscript{SimpleAvg}}
    \end{subfigure}
    \caption{Test loss landscapes, $\text{lim}=1$, $\text{step}=0.1$, 4 workers, CIFAR-10.}
    
\end{figure}

\begin{figure}[H]
    \centering
    \begin{subfigure}{0.34\columnwidth} 
        \centering
        \includegraphics[width=\columnwidth]{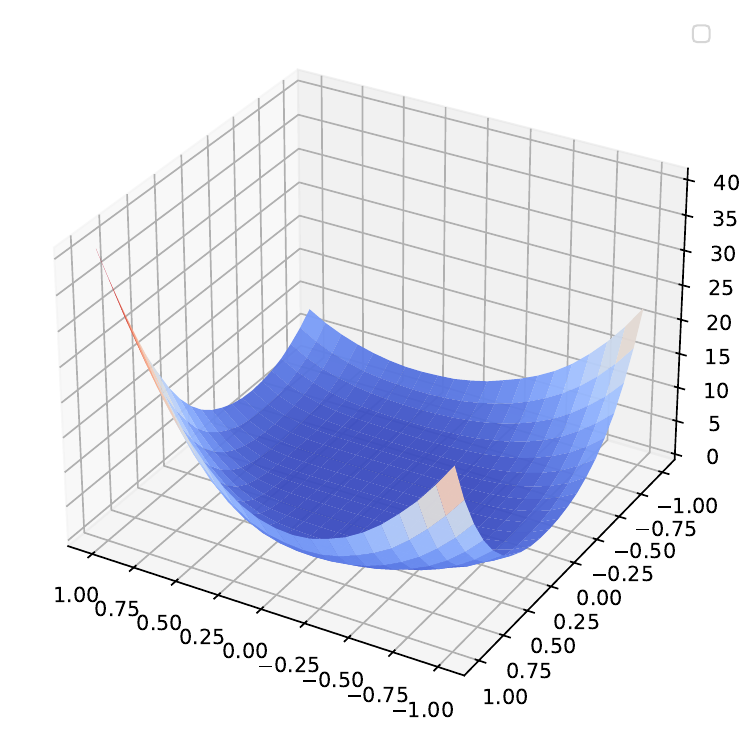}
        \caption{SimpleAvg}
        
    \end{subfigure}
    \hspace{10mm}
    \begin{subfigure}{0.34\columnwidth} 
        \centering
        \includegraphics[width=\columnwidth]{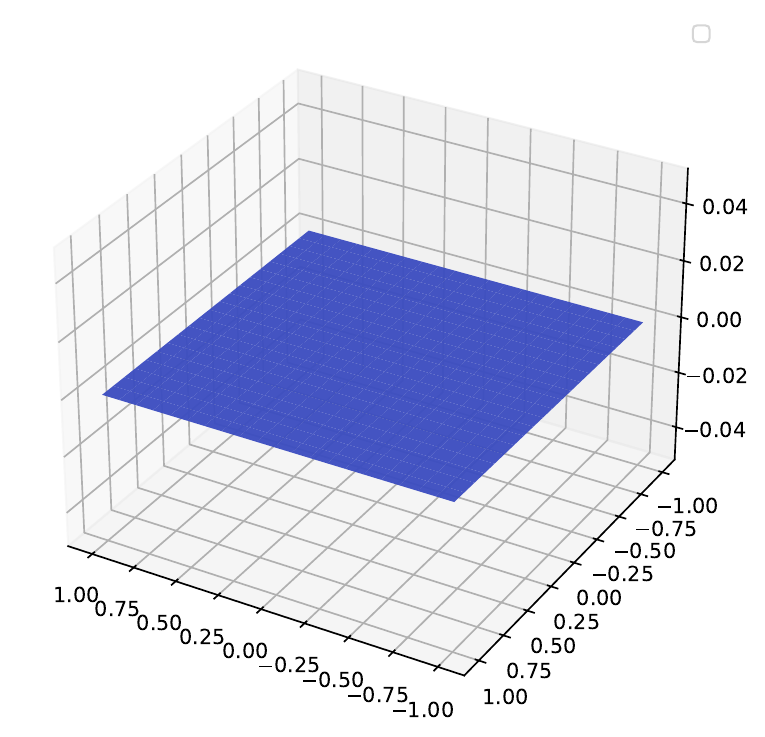}
        \caption{DPPF\textsubscript{SimpleAvg}}
    \end{subfigure}
    \caption{Training error (\%) landscapes, $\text{lim}=1$, $\text{step}=0.1$, 4 workers, CIFAR-10.}
    
\end{figure}

\begin{figure}[H]
    \centering
    \begin{subfigure}{0.34\columnwidth} 
        \centering
        \includegraphics[width=\columnwidth]{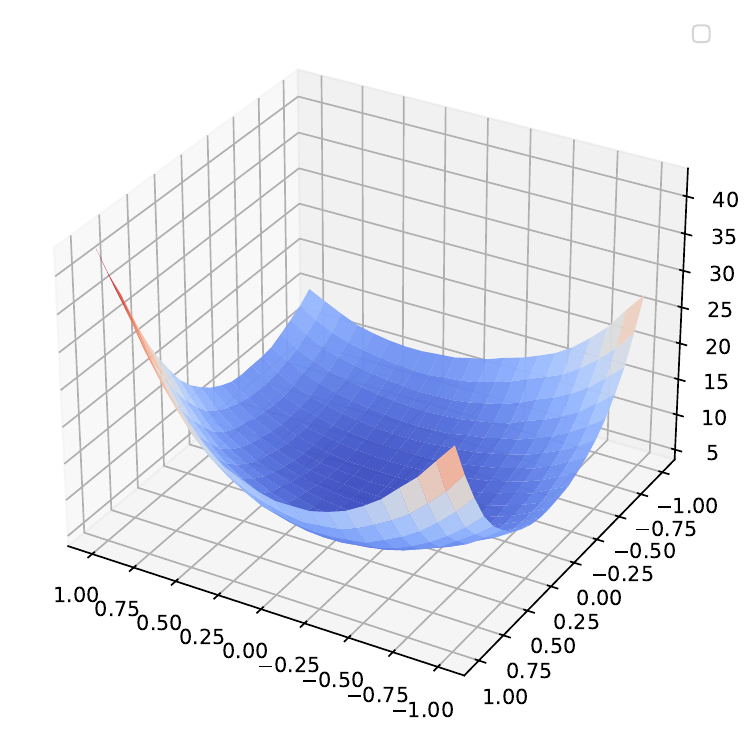}
        \caption{SimpleAvg}
        
    \end{subfigure}
    \hspace{10mm}
    \begin{subfigure}{0.34\columnwidth} 
        \centering
        \includegraphics[width=\columnwidth]{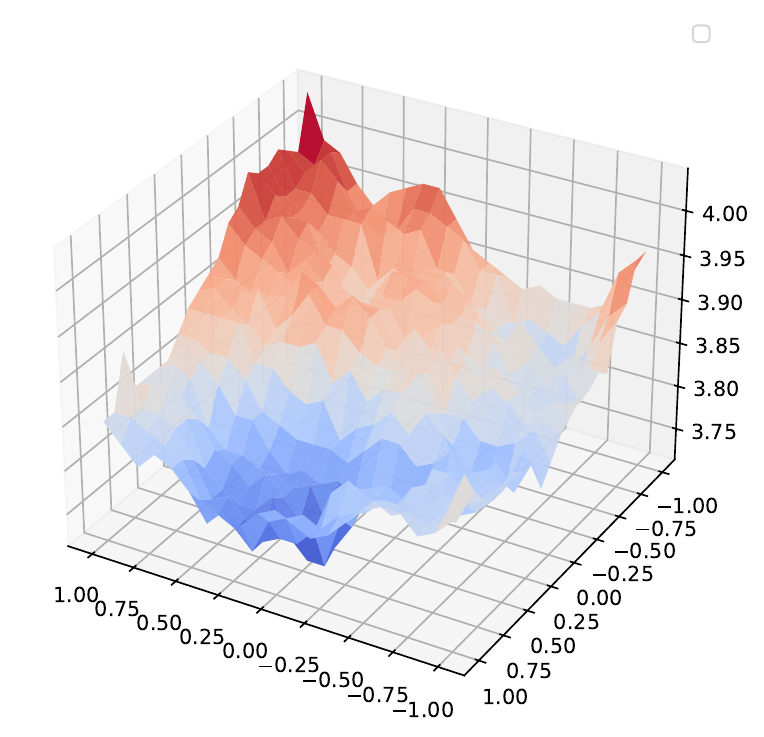}
        \caption{DPPF\textsubscript{SimpleAvg}}
    \end{subfigure}
    \caption{Test error (\%) landscapes, $\text{lim}=1$, $\text{step}=0.1$, 4 workers, CIFAR-10.}
    
\end{figure}

\begin{figure}[H]
    \centering
    \begin{subfigure}{0.34\columnwidth} 
        \centering
        \includegraphics[width=\columnwidth]{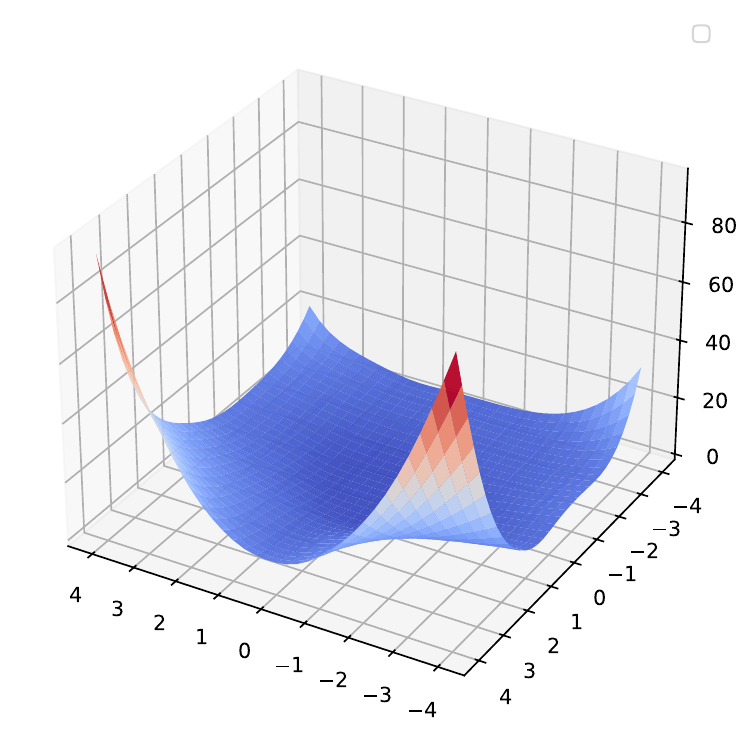}
        \caption{SimpleAvg}
        
    \end{subfigure}
    \hspace{10mm}
    \begin{subfigure}{0.34\columnwidth} 
        \centering
        \includegraphics[width=\columnwidth]{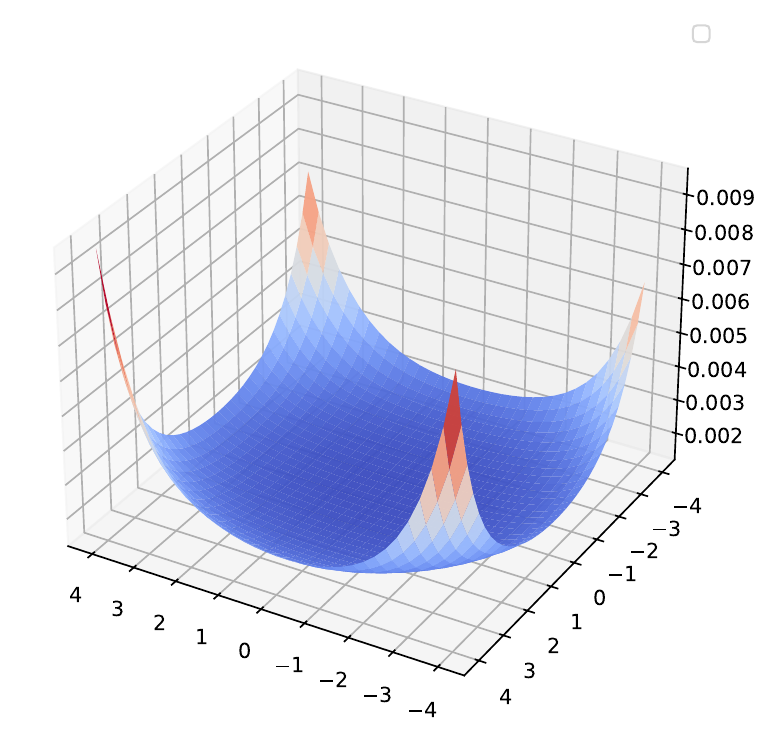}
        \caption{DPPF\textsubscript{SimpleAvg}}
    \end{subfigure}
    \caption{Training loss landscapes, $\text{lim}=4$, $\text{step}=0.25$, 4 workers, CIFAR-10.}
    
\end{figure}

\begin{figure}[H]
    \centering
    \begin{subfigure}{0.34\columnwidth} 
        \centering
        \includegraphics[width=\columnwidth]{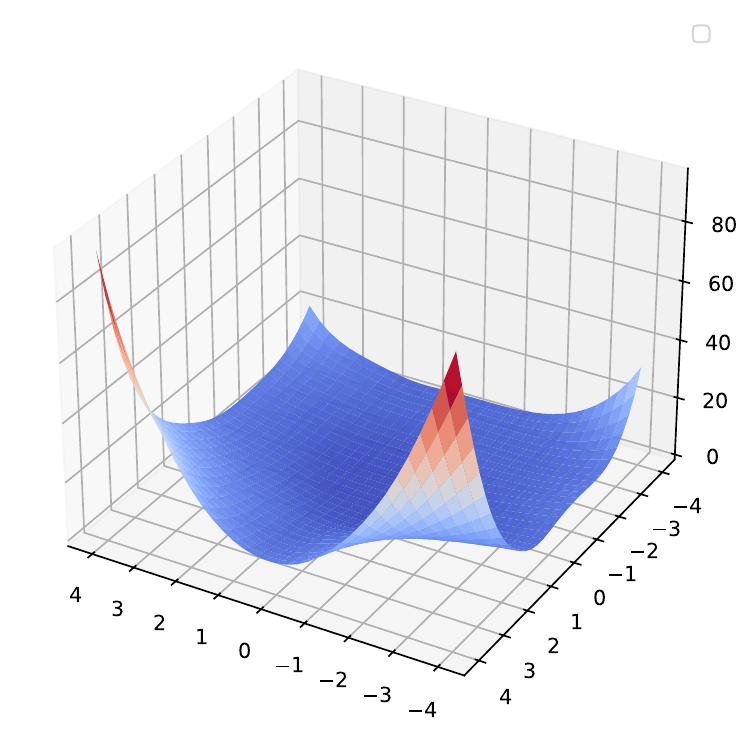}
        \caption{SimpleAvg}
        
    \end{subfigure}
    \hspace{10mm}
    \begin{subfigure}{0.34\columnwidth} 
        \centering
        \includegraphics[width=\columnwidth]{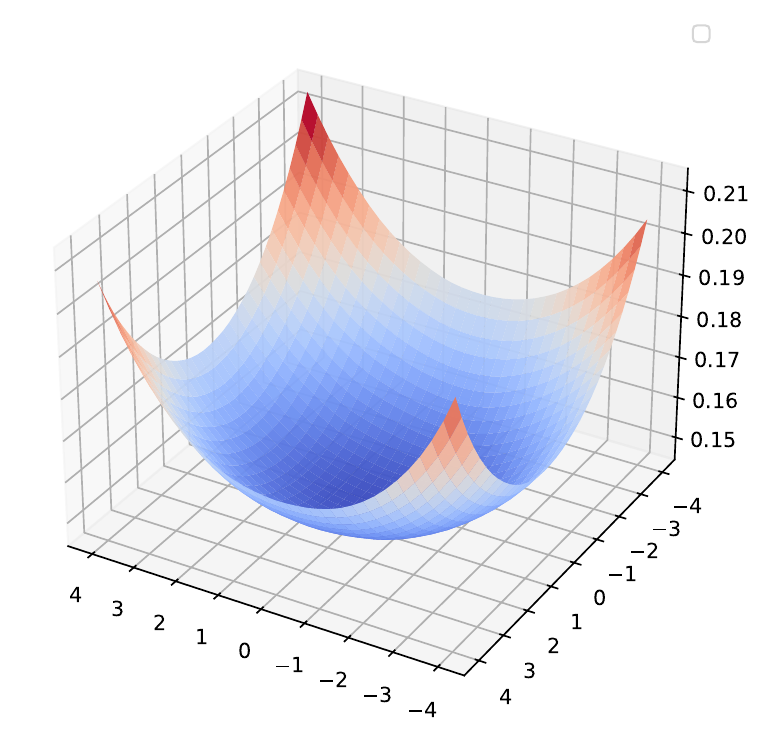}
        \caption{DPPF\textsubscript{SimpleAvg}}
    \end{subfigure}
    \caption{Test loss landscapes, $\text{lim}=4$, $\text{step}=0.25$, 4 workers, CIFAR-10.}
    
\end{figure}

\begin{figure}[H]
    \centering
    \begin{subfigure}{0.34\columnwidth} 
        \centering
        \includegraphics[width=\columnwidth]{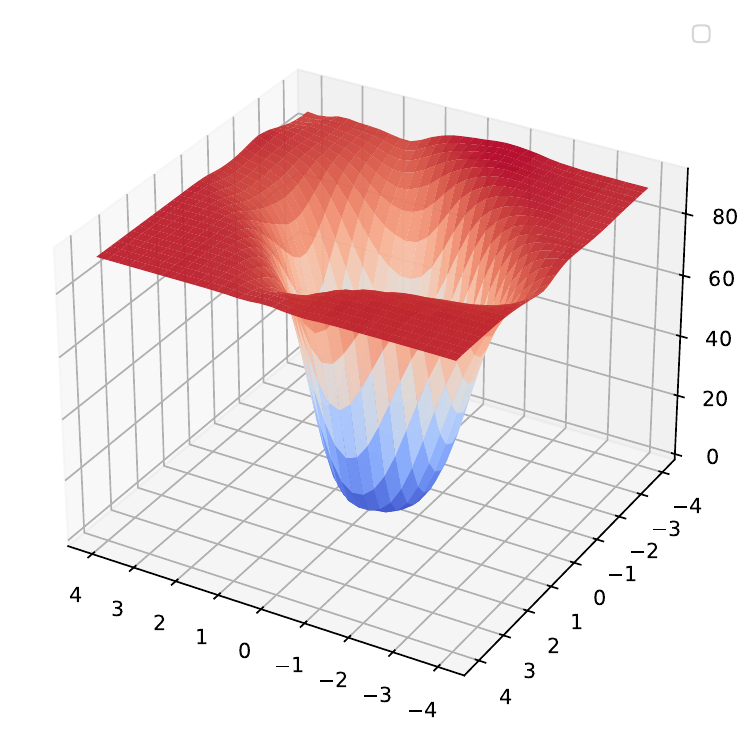}
        \caption{SimpleAvg}
        
    \end{subfigure}
    \hspace{10mm}
    \begin{subfigure}{0.34\columnwidth} 
        \centering
        \includegraphics[width=\columnwidth]{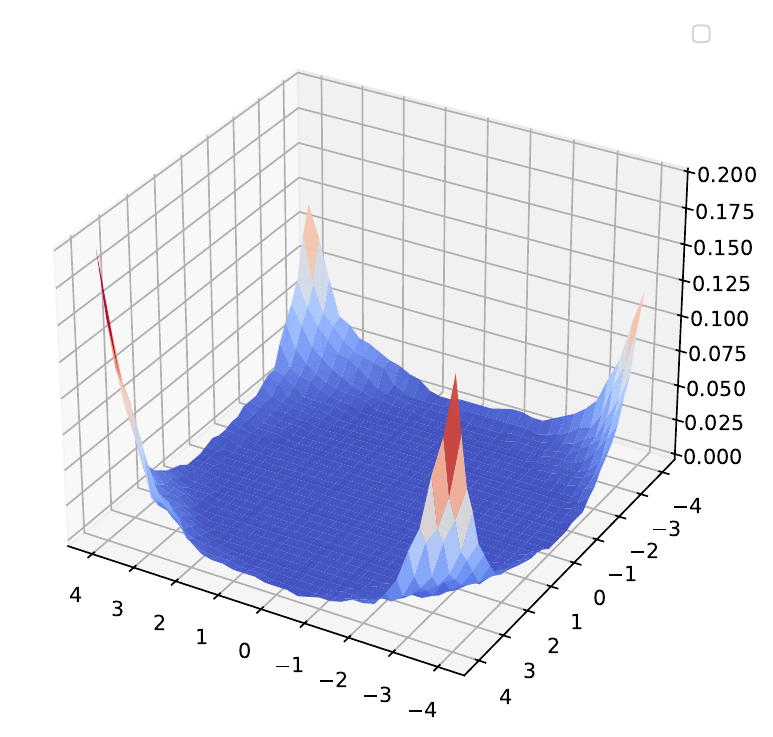}
        \caption{DPPF\textsubscript{SimpleAvg}}
    \end{subfigure}
    \caption{Training error (\%) landscapes, $\text{lim}=4$, $\text{step}=0.25$, 4 workers, CIFAR-10.}
    
\end{figure}
\begin{figure}[H]
    \centering
    \begin{subfigure}{0.34\columnwidth} 
        \centering
        \includegraphics[width=\columnwidth]{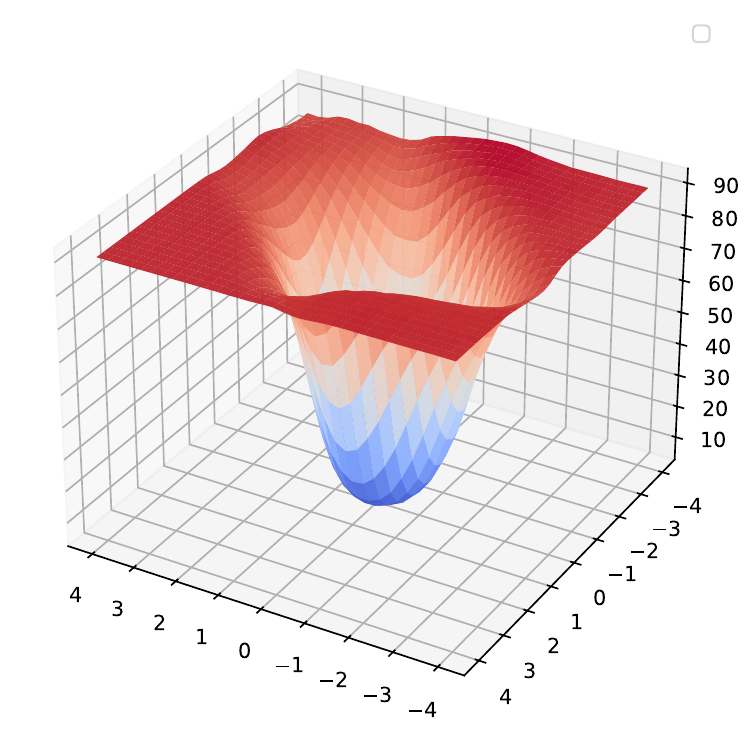}
        \caption{SimpleAvg}
        
    \end{subfigure}
    \hspace{10mm}
    \begin{subfigure}{0.34\columnwidth} 
        \centering
        \includegraphics[width=\columnwidth]{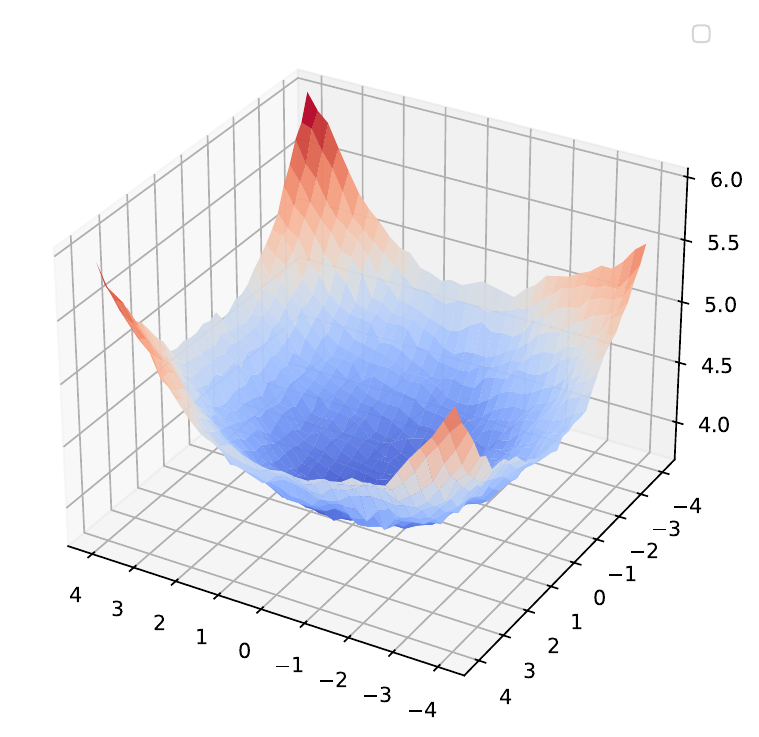}
        \caption{DPPF\textsubscript{SimpleAvg}}
    \end{subfigure}
    \caption{Test error (\%) landscapes, $\text{lim}=4$, $\text{step}=0.25$, 4 workers, CIFAR-10.}
    
\end{figure}

\label{appendix:subsubsec:c100_landscape_3d}
\begin{figure}[H]
    \centering
    \begin{subfigure}{0.34\columnwidth} 
        \centering
        \includegraphics[width=\columnwidth]{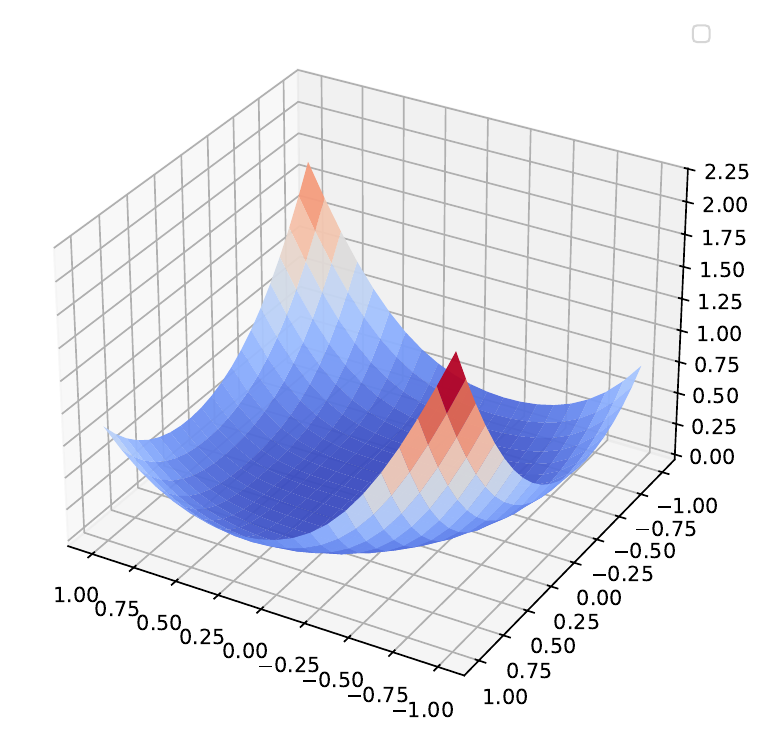}
        \caption{SimpleAvg}
        
    \end{subfigure}
    \hspace{10mm}
    \begin{subfigure}{0.34\columnwidth} 
        \centering
        \includegraphics[width=\columnwidth]{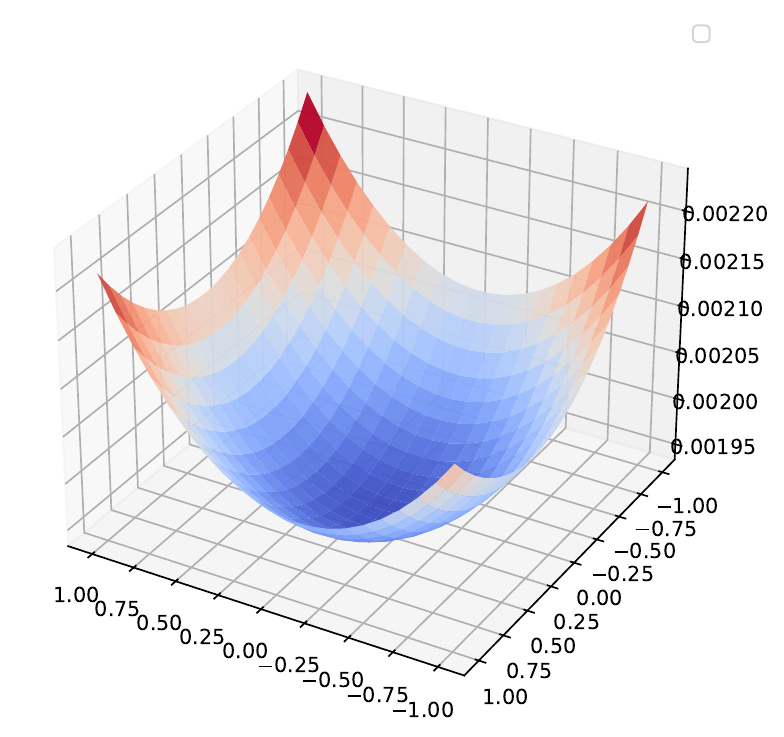}
        \caption{DPPF\textsubscript{SimpleAvg}}
    \end{subfigure}
    \caption{Training loss landscapes, $\text{lim}=1$, $\text{step}=0.1$, 4 workers, CIFAR-100.}
    
\end{figure}
\begin{figure}[H]
    \centering
    \begin{subfigure}{0.34\columnwidth} 
        \centering
        \includegraphics[width=\columnwidth]{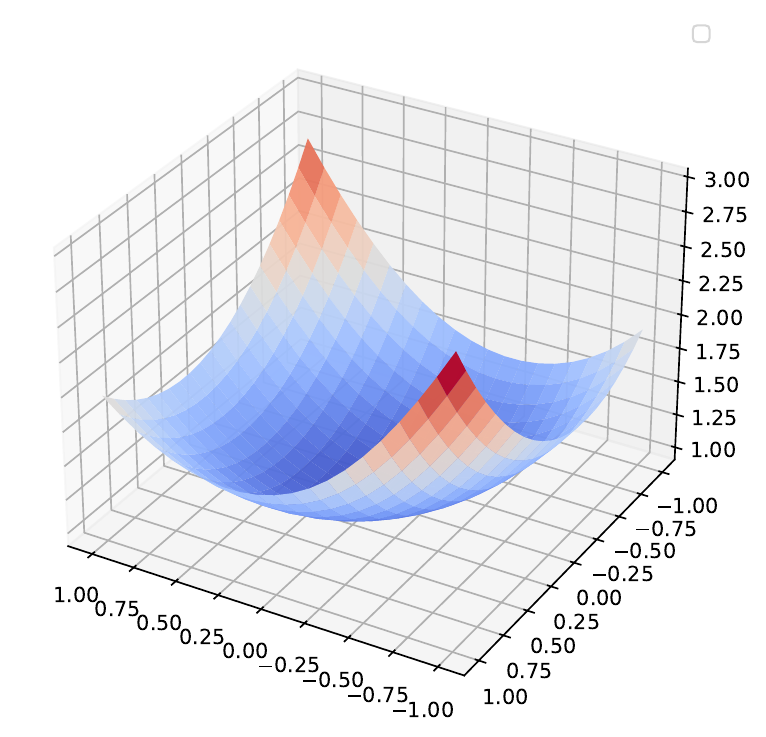}
        \caption{SimpleAvg}
        
    \end{subfigure}
    \hspace{10mm}
    \begin{subfigure}{0.34\columnwidth} 
        \centering
        \includegraphics[width=\columnwidth]{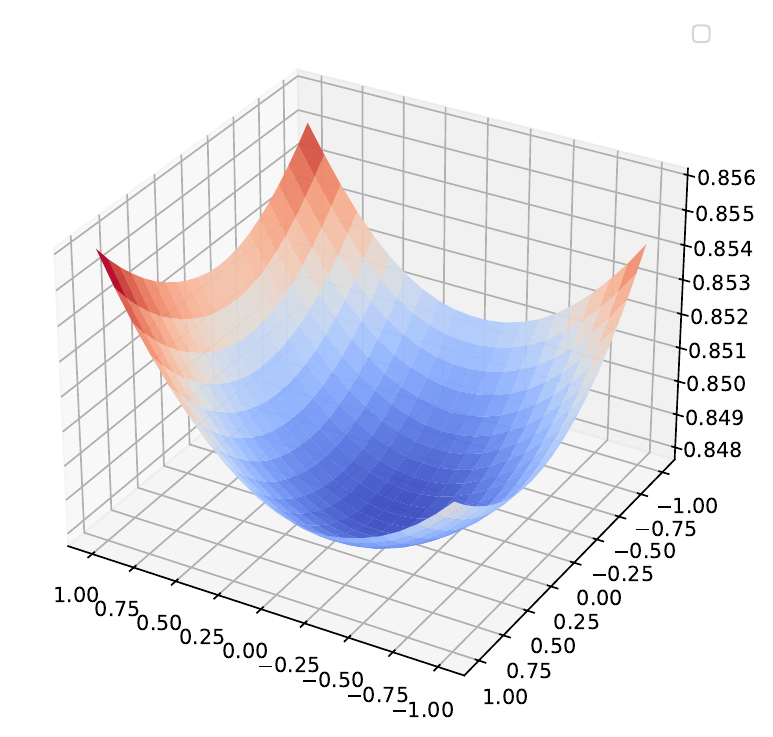}
        \caption{DPPF\textsubscript{SimpleAvg}}
    \end{subfigure}
    \caption{Test loss landscapes, $\text{lim}=1$, $\text{step}=0.1$, 4 workers, CIFAR-100.}
    
\end{figure}

\begin{figure}[H]
    \centering
    \begin{subfigure}{0.34\columnwidth} 
        \centering
        \includegraphics[width=\columnwidth]{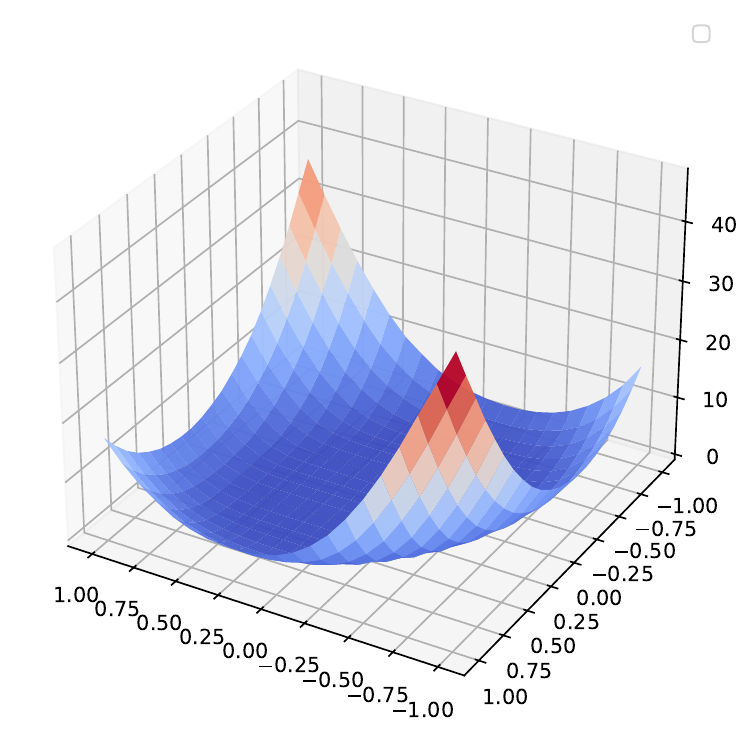}
        \caption{SimpleAvg}
        
    \end{subfigure}
    \hspace{10mm}
    \begin{subfigure}{0.34\columnwidth} 
        \centering
        \includegraphics[width=\columnwidth]{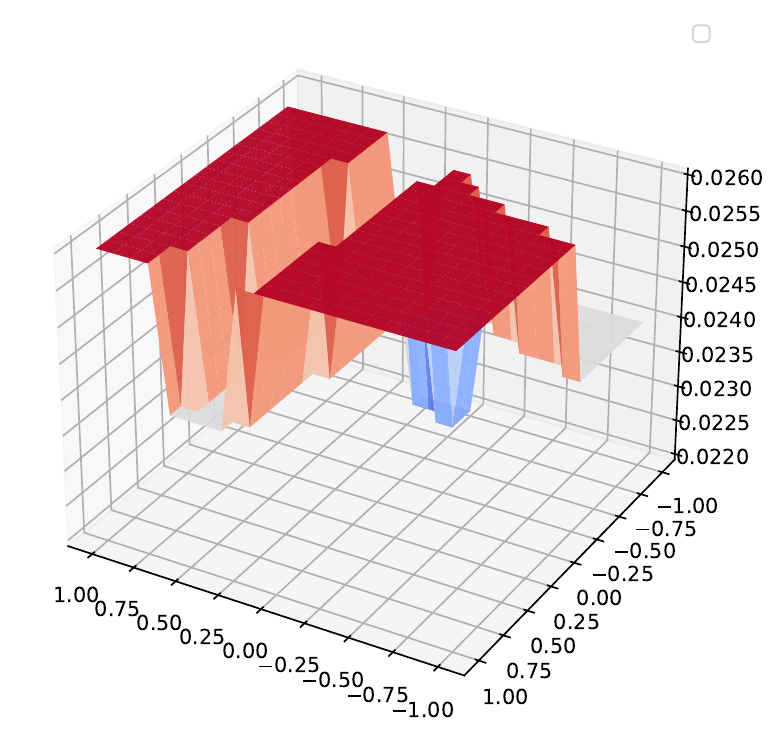}
        \caption{DPPF\textsubscript{SimpleAvg}}
    \end{subfigure}
    \caption{Training error (\%) landscapes, $\text{lim}=1$, $\text{step}=0.1$, 4 workers, CIFAR-100.}
    
\end{figure}
\begin{figure}[H]
    \centering
    \begin{subfigure}{0.34\columnwidth} 
        \centering
        \includegraphics[width=\columnwidth]{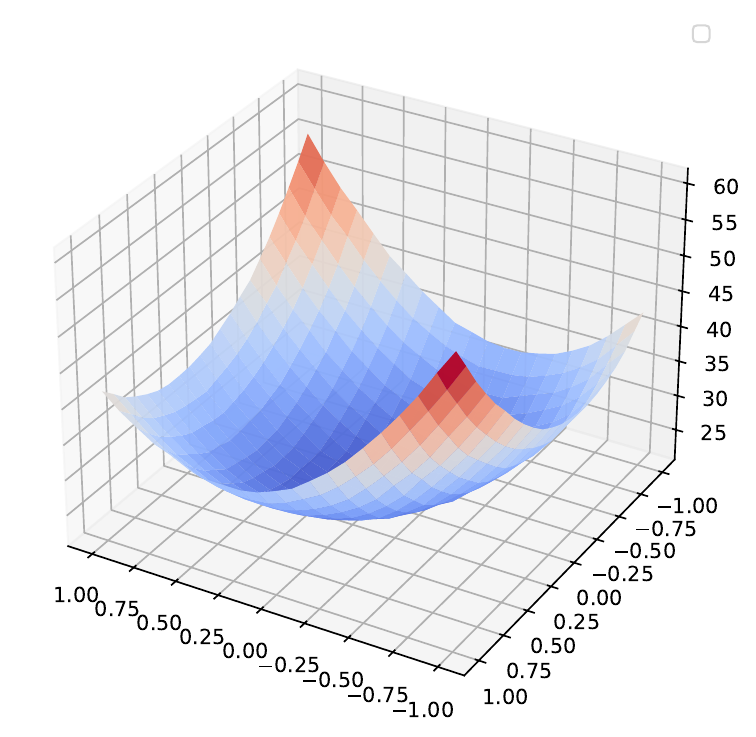}
        \caption{SimpleAvg}
        
    \end{subfigure}
    \hspace{10mm}
    \begin{subfigure}{0.34\columnwidth} 
        \centering
        \includegraphics[width=\columnwidth]{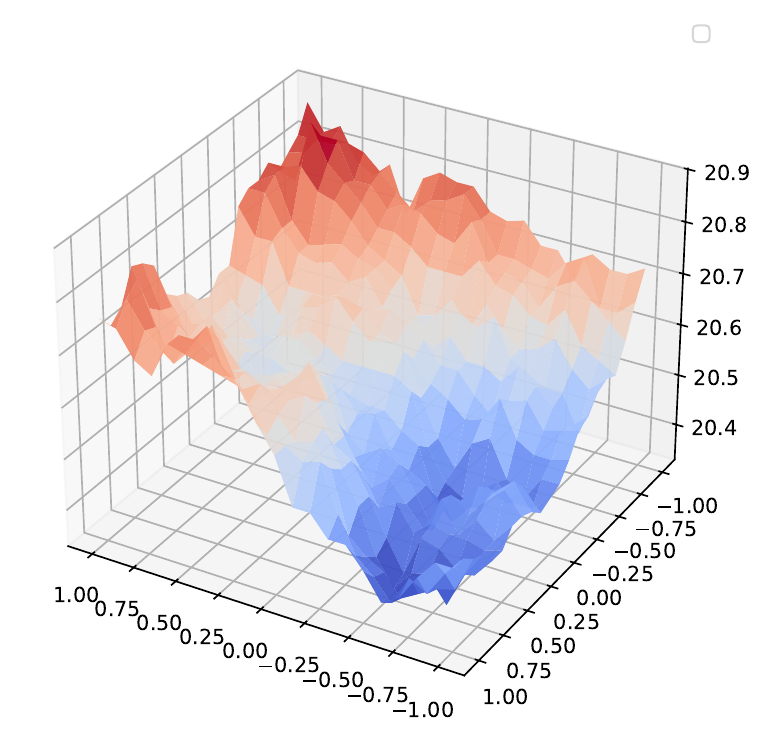}
        \caption{DPPF\textsubscript{SimpleAvg}}
    \end{subfigure}
    \caption{Test error (\%) landscapes, $\text{lim}=1$, $\text{step}=0.1$, 4 workers, CIFAR-100.}
    
\end{figure}

\begin{figure}[H]
    \centering
    \begin{subfigure}{0.34\columnwidth} 
        \centering
        \includegraphics[width=\columnwidth]{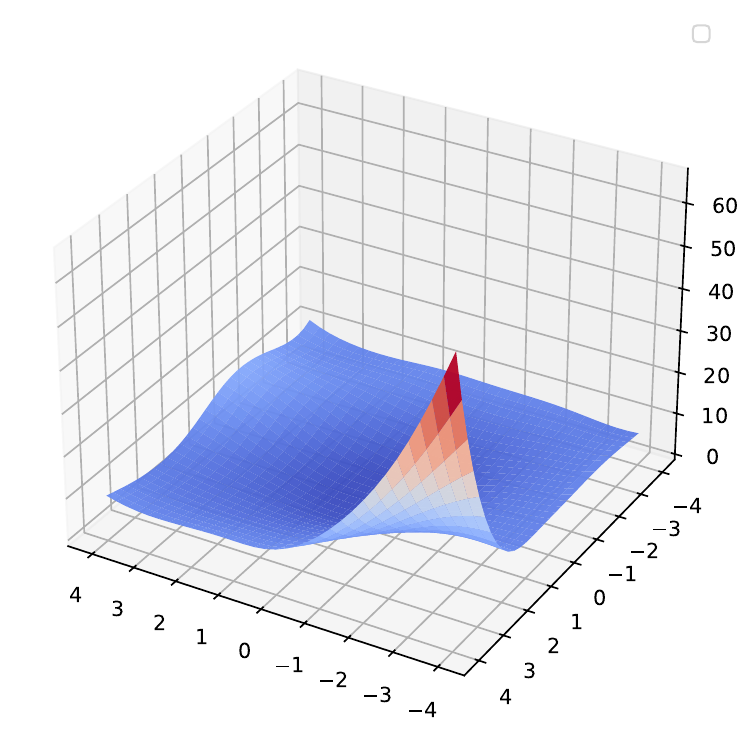}
        \caption{SimpleAvg}
        
    \end{subfigure}
    \hspace{10mm}
    \begin{subfigure}{0.34\columnwidth} 
        \centering
        \includegraphics[width=\columnwidth]{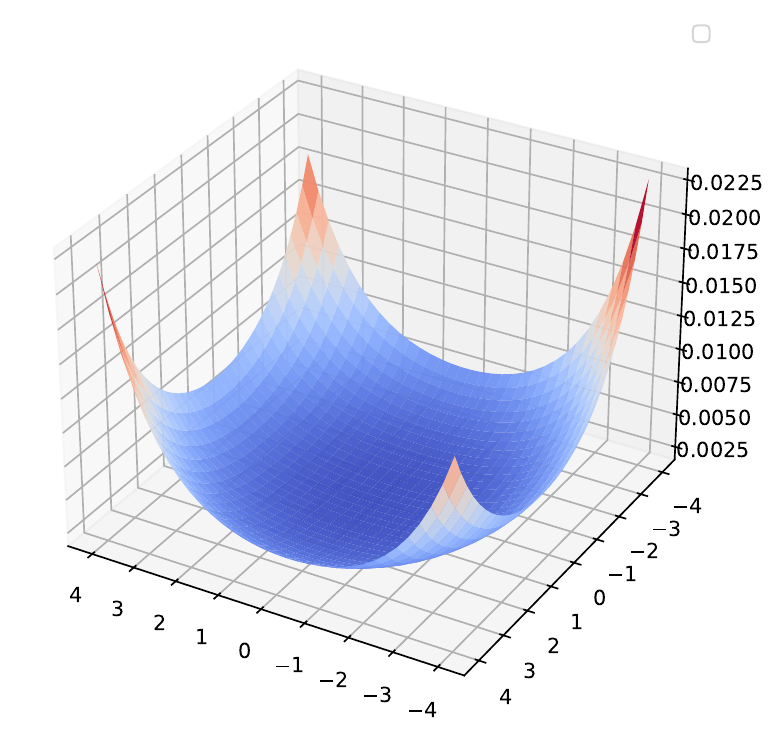}
        \caption{DPPF\textsubscript{SimpleAvg}}
    \end{subfigure}
    \caption{Training loss landscapes, $\text{lim}=4$, $\text{step}=0.25$, 4 workers, CIFAR-100.}
    
\end{figure}

\begin{figure}[H]
    \centering
    \begin{subfigure}{0.34\columnwidth} 
        \centering
        \includegraphics[width=\columnwidth]{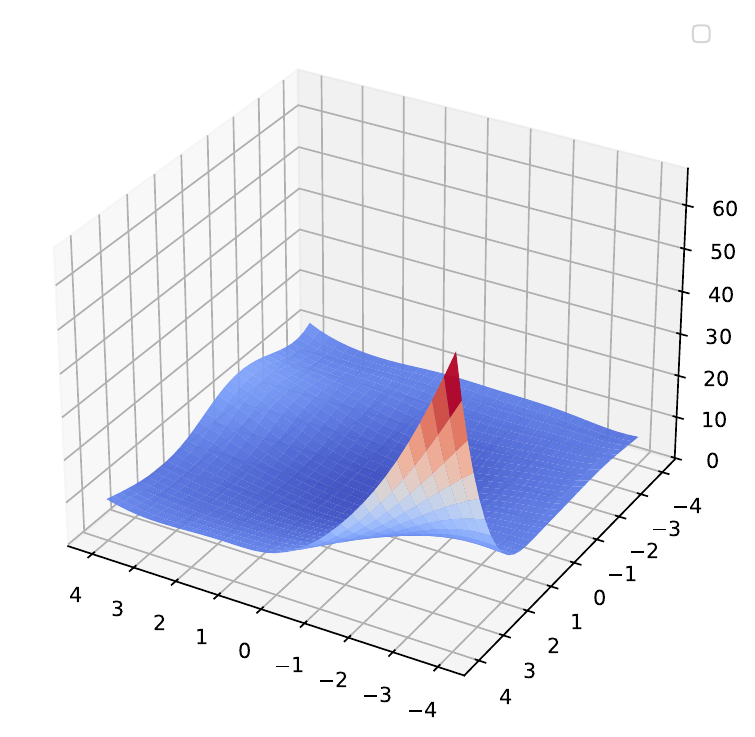}
        \caption{SimpleAvg}
        
    \end{subfigure}
    \hspace{10mm}
    \begin{subfigure}{0.34\columnwidth} 
        \centering
        \includegraphics[width=\columnwidth]{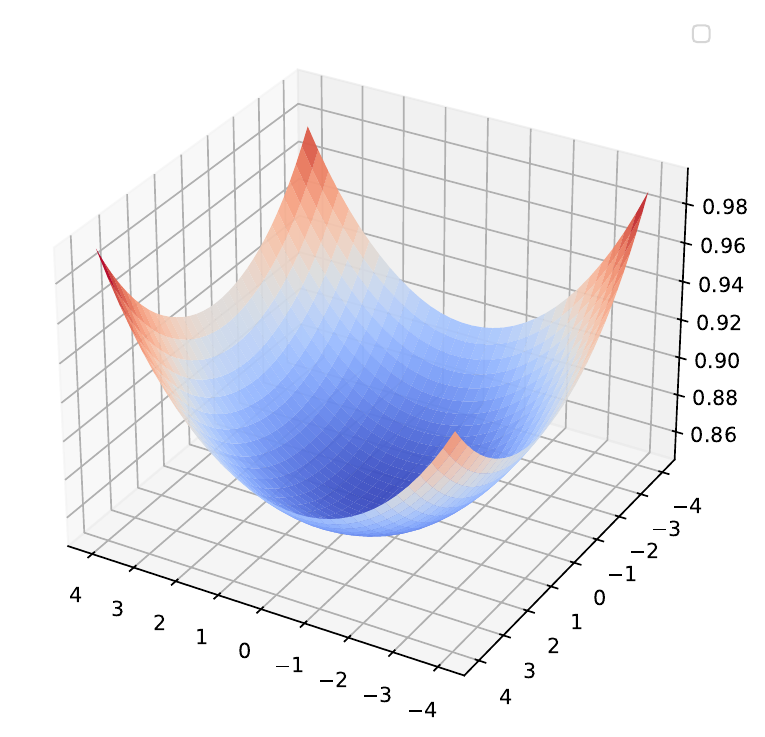}
        \caption{DPPF\textsubscript{SimpleAvg}}
    \end{subfigure}
    \caption{Test loss landscapes, $\text{lim}=4$, $\text{step}=0.25$, 4 workers, CIFAR-100.}
    
\end{figure}

\begin{figure}[H]
    \centering
    \begin{subfigure}{0.34\columnwidth} 
        \centering
        \includegraphics[width=\columnwidth]{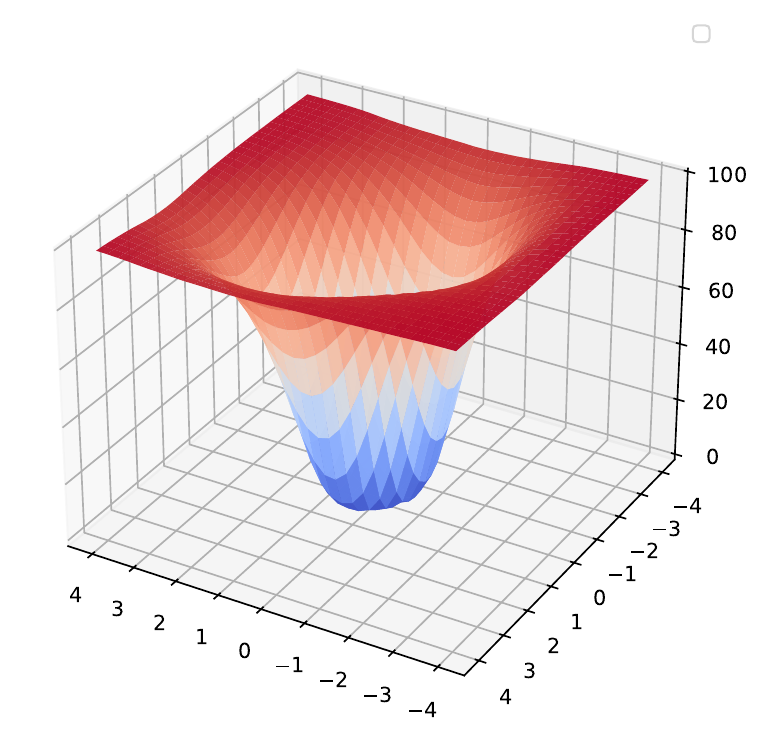}
        \caption{SimpleAvg}
        
    \end{subfigure}
    \hspace{10mm}
    \begin{subfigure}{0.34\columnwidth} 
        \centering
        \includegraphics[width=\columnwidth]{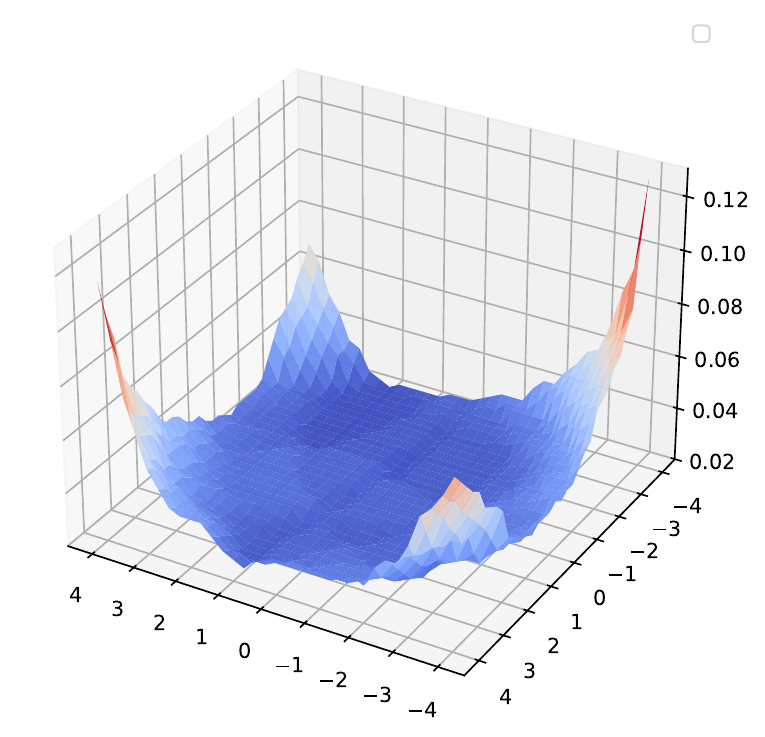}
        \caption{DPPF\textsubscript{SimpleAvg}}
    \end{subfigure}
    \caption{Training error (\%) landscapes, $\text{lim}=4$, $\text{step}=0.25$, 4 workers, CIFAR-100.}
    
\end{figure}

\begin{figure}[H]
    \centering
    \begin{subfigure}{0.34\columnwidth} 
        \centering
        \includegraphics[width=\columnwidth]{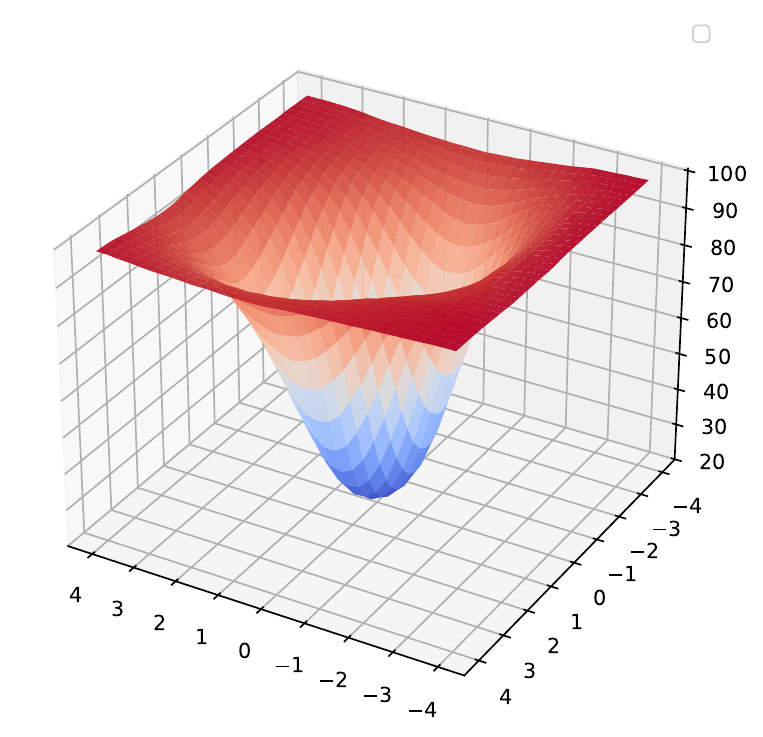}
        \caption{SimpleAvg}
        
    \end{subfigure}
    \hspace{10mm}
    \begin{subfigure}{0.34\columnwidth} 
        \centering
        \includegraphics[width=\columnwidth]{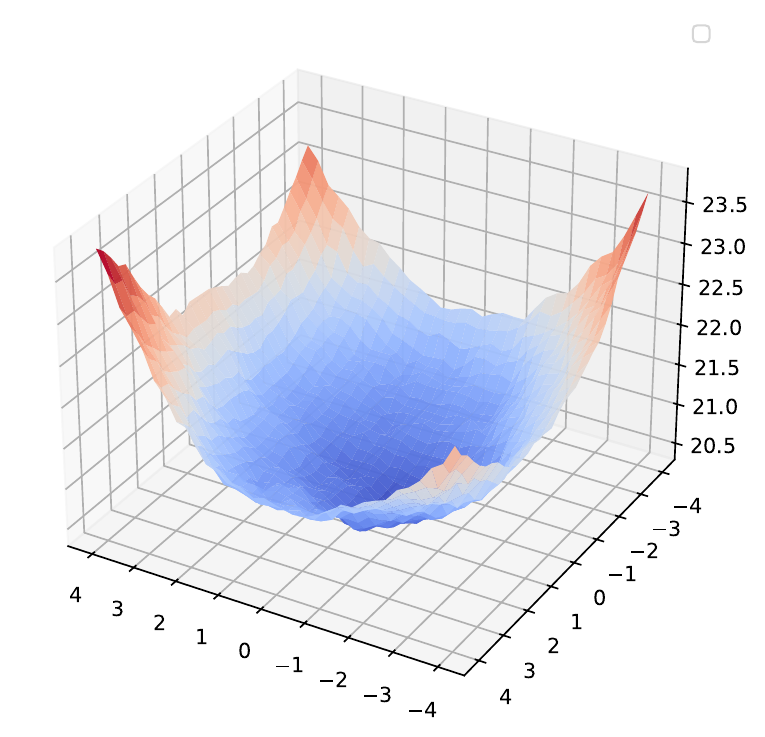}
        \caption{DPPF\textsubscript{SimpleAvg}}
    \end{subfigure}
    \caption{Test error (\%) landscapes, $\text{lim}=4$, $\text{step}=0.25$, 4 workers, CIFAR-100.}
    
\end{figure}

\end{document}